\def\BibTeX{{\rm B\kern-.05em{\sc i\kern-.025em b}\kern-.08em
    T\kern-.1667em\lower.7ex\hbox{E}\kern-.125emX}}
\newcommand{\bR}{\mathcal{R}}
\newcommand{\bH}{\mathcal{H}}
\newcommand{\bX}{\mathcal{X}}
\newcommand\numberthis{\addtocounter{equation}{1}\tag{\theequation}}
\newtheorem{theorem}{Theorem}
\newtheorem{lemma}{Lemma}
\newtheorem{proposition}{Proposition}
\newtheorem{assumption}{Assumption}
\newtheorem{note}{Remark}
\title{The Directional Bias Helps Stochastic Gradient Descent to Generalize in Kernel Regression Models}
\author{Yiling Luo, Xiaoming Huo, Yajun Mei}
\date{April 2022}
\begin{document}

\maketitle

\begin{abstract}
We study the Stochastic Gradient Descent (SGD) algorithm in nonparametric statistics: kernel regression in particular. 
The directional bias property of SGD, which is known in the linear regression setting, is generalized to the kernel regression. 
More specifically, we prove that SGD with moderate and annealing step-size converges along the direction of the eigenvector that corresponds to the largest eigenvalue of the Gram matrix. 
In addition, the Gradient Descent (GD) with a moderate or small step-size converges along the direction that corresponds to the smallest eigenvalue. 
These facts are referred to as the directional bias properties; they may interpret how an SGD-computed estimator has a potentially smaller generalization error than a GD-computed estimator.   
The application of our theory is demonstrated by simulation studies and a case study that is based on the FashionMNIST dataset.     
\end{abstract}

\section{Introduction}
The Stochastic Gradient Descent (SGD) is a popular optimization algorithm that has a wide range of applications, including generalized linear model in statistics and deep Neural Network in machine learning. 
One main advantage of the SGD is the computational scalability due to low cost per iteration. 
Recent work also indicates that the SGD might also lead to outcomes that possess nice statistical properties under the linear regression framework, see \cite{ma2018power}. 


In this paper, we study the statistical properties of the SGD under nonparametric regression models. 
We focus on the Reproducing Kernel Hilbert Space (RKHS) model, which is popular in both statistics and machine learning communities and is often simply referred to as the ``kernel trick," see \cite{bartlett2021deep,wahba1990spline}. 
The kernel method can be applied in various domains such as image processing \cite{takeda2007kernel} and text mining \cite{greene2006practical}.

Our main approach is to analyze the directional bias of the SGD algorithm under the RKHS model, which can help us to explain why the outcome of SGD has good generalization properties. 
Directional bias, also referred to as implicit bias, means that an algorithm generates a solution path that is biased towards a certain direction, and it is also closely related to implicit regularization in deep learning \cite{NEURIPS2020_37740d59}. 
Directional bias also means that the algorithm prefers some directions over the others even though they may have the same objective function value. 
For example, paper \cite{wu2021direction} shows the directional bias of SGD and GD in the linear regression model, and analyzes the relationship between the directional bias and the generalization error in such a setting. 
There is no directional bias result in kernel regression, but one may expect it to exist and to explain the generalization performance of the outcome of SGD in the kernel regression model. 

The state-of-the-art result on the directional bias of SGD can be divided into two categories, based on their underlying techniques, mostly under the linear regression model. 
The first category is the stochastic gradient flow method where one assumes an infinitesimal step-size in SGD and thus the parameter dynamic follows a stochastic differential equation, see \cite{liu2018diffusion,ali2020implicit,pmlr-v125-blanc20a}. 
The second category is to analyze the discrete SGD sequence, which in general requires a moderate step-size such that the algorithm converges, see \cite{wu2021direction} for a example in linear regression. 
Our approach belongs to the second category. 

We want to point out that there are more research to study the directional bias of the Gradient Descent (GD) than for the SGD. 
For instance, for Neural Networks in the `lazy training' regime, paper \cite{cao2020understanding} shows that GD  converges in the direction of the smallest eigenvalue of the Neural Tangent Kernel; 
for high-dimension regression, papers  \cite{NEURIPS2019_5cf21ce3,zhao2019implicit} 
show that GD finds a sparse estimation.

\textbf{Our contributions} are two folded. 
First, we study the directional bias of (S)GD in a nonparametric regression model. 
Though the nonparametric regression is well studied in statistics, the directional bias is a relatively new concept \cite{wu2021direction}. 
Second, we unify the conditions to guarantee the directional bias of GD and SGD sequences. 
The main condition is the diagonally dominant Gram matrix, which covers a large class of kernel functions. 

Our result can shed new light on deep learning \cite{belkin2018understand}. 
By the state-of-the-art mathematical theory of Neural Networks (NN), kernel and/or nonparametric methods can approximate the functional space of neural networks, see for example the NTK theory \cite{Arthur2018}, and the Radon bounded variance space description for ReLU NN \cite{parhi2021kinds}. 
These phenomena can lead to interesting future research. 

\textbf{Organization}. 
The rest of this paper is organized as follows. 
In Section \ref{ch2:sec:2}, we give problem formulation, revisit the algorithms and state our assumption. 
In Section \ref{ch2:sec:3}, we state our main theorems, including the directional bias result and its implication for generalization.
In Section \ref{ch2:sec:4}, we perform numerical experiments.  
In Section \ref{ch2:sec:5}, we discuss our findings and propose some future research topics. 
The detailed proof and experiments are included in the appendix.

\section{Problem Formulation}\label{ch2:sec:2}
In Subsection \ref{ch2:sec:2.1} we define the kernel regression; 
in Subsection \ref{ch2:sec:2.2}, we present the SGD and GD algorithms; 
in Subsection \ref{ch2:sec:2.3}, we state our assumption for later analysis. 
We also provide a simple example to justify the assumption. 

\subsection{Kernel Regression}\label{ch2:sec:2.1}
Suppose that we have $n$ data pairs $\{\bm{x}_i,y_i\}_{i=1}^n$, where $y_i\in \bR$ is associated with $\bm{x}_i \in\bX \subset\bR^p$ through an unknown model $f(\bm{x}_i)$. The goal is to estimate the unknown model $f$ from the data. One solution is to minimize the empirical risk function
\begin{align}\label{ch2:eq:ERM}
    \min_{f} \frac{1}{n}\sum_{i = 1}^n \ell(y_i,f(\bm{x}_i)),
\end{align}
where $\ell$ is the loss function. A popular choice for regression task is the squared loss $\ell(y,\bm{x}) = \frac{1}{2}( y - f(\bm{x}))^2$. 

One can see that problem \eqref{ch2:eq:ERM} is not well-defined, as there are infinitely many solutions to $\forall i: f(\bm{x}_i) = y_i$, and some of them do not generalize for a new test data. One way to fix it is to restrict $f\in \bH$ and penalize $\|f\|_{\bH}$ for smoothness, where $\bH$ is a RKHS with reproducing kernel $K(\cdot,\cdot)$ and $\|\cdot\|_{\bH}$ is the Hilbert norm. Adding these restrictions and applying Representer Theorem, problem \eqref{ch2:eq:ERM} with the squared loss becomes
\begin{align}
\begin{split}\label{ch2:eq:reg_RKHS}
    \min_{\bm{\alpha}\in \mathcal{R}^n} &\frac{1}{2n}\sum_{i=1}^n(y_i - \bm{K}_i^T\bm{\alpha})^2 =\frac{1}{2n}\|\bm{y} - K\bm{\alpha}\|_2^2,
\end{split}
\end{align}
where $\bm{K}_i^T$ is the $i$th row of $K :=K(X,X) = (K(\bm{x}_i,\bm{x}_j))_{i,j}$. For a parameter $\bm{\alpha}$, the corresponding estimator in $\bH$ is $f(\cdot) = \sum_{i = 1}^n \alpha_iK(\bm{x}_i,\cdot):=\boldsymbol{\alpha}^T K(\cdot,X)$.

Now when $K$ is invertible, it is trivial that any algorithm on objective function \eqref{ch2:eq:reg_RKHS} converges at the unique minimizer $\bm{\hat{\alpha}} = K(X,X)^{-1}\bm{y}$, so the RKHS functional estimator is
\begin{align}\label{ch2:eq:esti}
    \hat{f}(\bm{x}) =K(\bm{x},X)^T K(X,X)^{-1}\bm{y},
\end{align}
where $K(\bm{x},X)^T = (K(\bm{x},\bm{x}_1),\ldots,K(\bm{x},\bm{x}_n))$. Estimator \eqref{ch2:eq:esti} is the minimum norm interpolant, i.e.:
\begin{align*}
    \arg\min_{f\in \bH} \{\|f\|_{\bH}:f(\bm{x}_i) = y_i,i = 1,\ldots,n\},
\end{align*}
whose properties are studied in \cite{liang2020JustInterpolate}.

In this work, we compare the convergence direction of SGD and GD to $\hat{\alpha}$. Specifically, we consider a two-stage SGD with a phase transition from a larger step-size to a decreased step-size. Note that this matches the training scheme people always use in practice for SGD algorithms: decreasing the step-size after training for a few epochs. For that purpose, in the following sections, we define the one-step SGD/GD update and state our assumptions and notations for analysis.

\subsection{One step SGD/GD update}\label{ch2:sec:2.2}
For objective function \eqref{ch2:eq:reg_RKHS}, denote the parameter estimation at $t$th step as $\alpha_t$, then SGD update $\bm{\alpha}_{t+1}$ as
\begin{align}
\begin{split}\label{ch2:eq:SGD_update}
\bm{\alpha}_{t+1} = \bm{\alpha}_{t} -\eta_t (\bm{K}_{i_t}^T \bm{\alpha}_t - y_{i_t})\cdot \bm{K}_{i_t},
\end{split}
\end{align}
where $i_t$ is uniformly random sampled from $\{1,\ldots,n\}$.

The GD update $\bm{\alpha}_{t+1}$ as
\begin{align}
\begin{split}\label{ch2:eq:GD_update}
\bm{\alpha}_{t+1} = \bm{\alpha}_{t} -\frac{\eta_t}{n} K^T(K \bm{\alpha}_t - \bm{y}).
\end{split}
\end{align}

\subsection{Assumptions and Notations}\label{ch2:sec:2.3}
We state our assumption on the Gram matrix as following:

\begin{assumption}[Diagonally dominant Gram matrix]\label{ch2:ass:1}
Denote by $K= K(X,X)$ the Gram matrix, we assume that $K$ is diagonally dominant. Specifically, suppose w.l.o.g. that $K_{1,1}\geq K_{2,2}\geq \ldots\geq K_{n,n}>0$, then we have for a small value $\tau$ that
\begin{equation*}
    |K_{i,j}|\leq \tau \ll K_{n,n}, \forall i\neq j.
\end{equation*}
\end{assumption}

\begin{note}\label{ch2:note:diag_dom_note1}
Diagonally dominant Gram matrix is common in kernel learning. Mathematically, one can justify that a Gram matrix is diagonally dominant by imposing proper assumptions on the kernel function $K(\cdot,\cdot)$ and the data distribution. 
One can see Appendix \ref{ch2:app:diagonal_dominance} for some examples.
\end{note}

\begin{note}\label{ch2:note:diag_dom_note2}
Thinking of the kernel function as the inner product of high-dimensional features, the resulting Gram matrix is diagonally dominant when the high-dimension features are sparse. 
This happens in a lot of practical problems \cite{scholkopf2002kernel,weston2003dealing}, for example, linear or string kernels being applied to text data \cite{greene2006practical}, 
domain-specific kernels being applied to image retrieval \cite{tao2004extended} and bioinformatics \cite{saigo2004protein}, and  the Global Alignment kernel being applied to most datasets \cite{cuturi2007kernel,cuturi2011fast}.
\end{note}

\begin{proposition}[Lemma 1 in \cite{wu2021direction}]
Consider the bilinear kernel $K(\bm{x},\bm{x}') := \langle\bm{x},\bm{x}'\rangle$. 
Assume the data $\bm{x}_i, i=1,\ldots,n$, are i.i.d. uniformly distributed on the unit sphere $S^{d-1}$, where $d\gg n$. 
When $d \geq 4 \log(2n^2/\delta)$ for some $\delta\in(0, 1)$. Then with probability at least $1 - \delta$, we have
$$  |K_{i,j}| = |\langle \bm{x}_i,\bm{x}_j\rangle| <\tilde{\tau} := \Tilde{\mathcal{O}}(1/\sqrt{d})\ll K_{n,n} = 1,\forall i \neq j.$$
\end{proposition}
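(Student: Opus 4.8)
The statement decomposes into a trivial part and a concentration part. Since every $\bm x_i$ lies on $S^{d-1}$, the diagonal entries satisfy $K_{i,i}=\langle \bm x_i,\bm x_i\rangle=\|\bm x_i\|_2^2=1$, so in particular $K_{n,n}=1$; the content is therefore entirely in the off-diagonal entries. The plan is to establish a sub-Gaussian tail bound for $\langle \bm x_i,\bm x_j\rangle$ at a single fixed pair $i\neq j$, and then take a union bound over the at most $\binom n2<n^2$ pairs.

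For a fixed pair, rotational invariance of the uniform law on the sphere lets me condition on $\bm x_j$ and reduce to the tail of one coordinate $u_1$ of a uniform point on $S^{d-1}$. There are two standard routes to such a tail bound. One is to use the explicit marginal density proportional to $(1-s^2)^{(d-3)/2}$ on $[-1,1]$, bound $(1-s^2)^{(d-3)/2}\le e^{-(d-3)s^2/2}$, and combine with a lower bound on the normalizing constant, which yields $\mathbb{P}(|u_1|\ge t)\le 2e^{-\Omega(dt^2)}$. The other is the Gaussian representation $\bm x_i=\bm g^{(i)}/\|\bm g^{(i)}\|_2$ with $\bm g^{(i)}\sim\mathcal{N}(0,I_d)$ independent: then $\langle \bm x_i,\bm x_j\rangle=Z/\|\bm g^{(i)}\|_2$ with $Z\sim\mathcal{N}(0,1)$ conditionally on $\bm g^{(j)}$, and splitting on $\{\|\bm g^{(i)}\|_2^2\ge d/2\}$ one bounds $\mathbb{P}(\|\bm g^{(i)}\|_2^2<d/2)$ by $e^{-cd}$ ($\chi^2$ lower tail) and $\mathbb{P}(|Z|\ge t\sqrt{d/2})$ by $2e^{-t^2d/4}$ (Gaussian tail). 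Either way the per-pair bound has the form $2e^{-\Omega(dt^2)}$ up to a negligible additive term.

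Finally, a union bound gives $\mathbb{P}\big(\exists\, i\neq j:\ |\langle \bm x_i,\bm x_j\rangle|\ge t\big)\le n^2\cdot 2e^{-\Omega(dt^2)}$, and I would choose $t=\tilde\tau$ of order $\sqrt{\log(2n^2/\delta)/d}=\tilde{\mathcal{O}}(1/\sqrt d)$ so that this is at most $\delta$. The hypothesis $d\ge 4\log(2n^2/\delta)$ is exactly what makes $\tilde\tau\le 1=K_{n,n}$, and in the assumed regime $d\gg n$ one gets $\tilde\tau\ll K_{n,n}$. I expect no conceptual obstacle here — it is a textbook concentration-of-measure estimate — the only delicate point being to track the numerical constants in the tail bound carefully enough to reproduce the precise threshold $4\log(2n^2/\delta)$, and (in the Gaussian-representation route) to confirm that the lower-order $e^{-cd}$ term is dominated over the stated range of $d$.
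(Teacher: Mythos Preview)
Your proposal is correct and is the standard concentration-of-measure argument one expects here. The paper itself does not supply a self-contained proof of this proposition: both in the main text and in the appendix (where it is restated as a lemma) the authors simply defer to the proof of Lemma~1 in \cite{wu2021direction}. Your sketch---reduce by rotational invariance to the tail of a single coordinate of a uniform point on $S^{d-1}$, invoke a sub-Gaussian bound of the form $\mathbb{P}(|u_1|\ge t)\le 2e^{-c d t^2}$, then union bound over the fewer than $n^2$ off-diagonal pairs---is exactly the approach the cited reference uses, so there is no methodological divergence to report.
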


Though commonly exists, the diagonal dominance is undesired in classification and clustering tasks. It indicates that the data points are dissimilar to each other, which means not enough information for classification/clustering. 
There are some efforts for solving the issue of diagonal dominance in these cases, see for example \cite{greene2006practical,kandola2003reducing}. But for the regression task, the diagonal dominance, in other words, the dissimilarity of data points, may have benefits. 
One can find similar conditions such as Restricted Isometry Property and $s$-goodness that describes linearly dissimilar features in a  regression literature \cite{candes2007dantzig,chen1994basis}. 
Such conditions are required for proving minimax optimality or exact recovery of a sparse signal in many settings. 
In our case, we adopt the dissimilarity concept and 
apply it to data points in high-dimensional nonlinear feature space. 
Later we will see that in our case, the directional bias drives SGD to select a good solution that generalizes well among all solutions of the same level of empirical loss. In this way, our SGD estimator benefits from the diagonal dominance.

\textbf{Notations}. 
We use the following notations throughout this paper. 
For the Gram matrix $K$, let $K_{i,j}$ denote its $(i,j)$th element. 
Denote $\lambda_i = K_{i,i} =K(\bm{x}_i,\bm{x}_i)$, and assume w.l.o.g. that $\lambda_1\geq\lambda_2\geq\ldots\geq\lambda_n$.  
Denote the $i$th column of $K$ as $\bm{K}_i$, let $K_{-1} = [K_2,\ldots,K_n]$.  
Denote $P_{-1}$ the projection onto column space of $K_{-1}$, and $P_1 = I - P_{-1}$. 
And denote $\gamma_1\geq\ldots\geq\gamma_n>0$ the eigenvalues of $K$ in non-increasing order.

\section{Main result}\label{ch2:sec:3}
The main results are presented in two subsections: 
Subsection \ref{ch2:sec:3.1} states the directional bias results of SGD and GD estimators, respectively; 
Subsection \ref{ch2:sec:3.2} shows that certain directional bias leads to good generalization performance, and applies this result to show that an outcome from SGD potentially generalizes better than an outcome from GD. 

\subsection{Directional bias }\label{ch2:sec:3.1}
By our assumption, $K$ will be full rank, (S)GD on \eqref{ch2:eq:reg_RKHS} converges at $\hat{\bm{\alpha}} = K^{-1}\bm{y}$. We are interested in the direction at which $\bm{\alpha}_t$ converges to $\hat{\bm{\alpha}}$, i.e., the quantity
$$
\bm{b}_t := \bm{\alpha}_t - \hat{\bm{\alpha}}.
$$
With Assumption \ref{ch2:ass:1} that the Gram matrix is diagonally dominant, we prove that a two-stage SGD has $\bm{b}_t$ converge in the direction that is aligned with the eigenvector associated with the largest eigenvalue of the Gram matrix $K$.

\begin{theorem}[Directional bias of an SGD-based estimator]\label{ch2:thm:informal1}
Assume Assumption \ref{ch2:ass:1} holds, run a two-stage SGD with a fixed step-size for each stage: stage 1 with step-size $\eta_1$ for steps $1,\ldots,k_1$, stage $2$ with step-size $\eta_2$ for steps $k_1 + 1,\ldots, k_2$, such that
\begin{align*}
    &\frac{2}{\lambda_1^2 - C_1 \sqrt{n}\tau} < \eta_1 <\frac{2}{\lambda_2^2 + C_2 \sqrt{n}\tau}, \\
    &\eta_2 < \frac{1}{\lambda_1^2 + C_3 \sqrt{n}\tau}, 
\end{align*}
where $C_1,C_2,C_3$ are constants. For a small $\epsilon > 0$ such that $n\tau < poly(\epsilon)$ there exists $k_1 = \mathcal{O}(\log\frac{1}{\epsilon})$ and $k_2$ such that
$$(1-2\epsilon)\gamma_1\leq E[\|K \bm{b}_{k_2}^{SGD}\|_2]/E[\|\bm{b}_{k_2}^{SGD}\|_2] \leq \gamma_1.$$ 
That is, $\bm{b}_{k_2}^{SGD}$ is close to the direction of eigenvector corresponding to the largest eigenvalue of $K$.
\end{theorem}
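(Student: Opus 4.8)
The plan is to track the dynamics of $\bm{b}_t = \bm{\alpha}_t - \hat{\bm{\alpha}}$ directly. Since $\hat{\bm{\alpha}}$ is a fixed point of both the GD and the (in expectation) SGD map, one step of SGD gives $\bm{b}_{t+1} = (I - \eta_t \bm{K}_{i_t}\bm{K}_{i_t}^T)\bm{b}_t$, so taking expectation over $i_t$ and conditioning on $\bm{b}_t$ yields $E[\bm{b}_{t+1}\mid\bm{b}_t] = (I - \tfrac{\eta_t}{n} K^2)\bm{b}_t$, because $\frac1n\sum_i \bm{K}_i\bm{K}_i^T = \frac1n K^2$. So the ``mean update'' contracts along every eigendirection of $K^2$ by a factor $1 - \tfrac{\eta_t}{n}\gamma_j^2$. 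The heart of the argument is that the step-size window for stage 1 is chosen so that this factor has \emph{absolute value less than one} on the eigendirection of $\gamma_1$ (fast forgetting) but \emph{larger than one in magnitude} — i.e.\ $1 - \tfrac{\eta_1}{n}\gamma_j^2 < -1$, an overshoot that amplifies — on the directions of all smaller eigenvalues, or at least contracts them strictly less. Under Assumption~\ref{ch2:ass:1} the diagonal entries $\lambda_j$ and the eigenvalues $\gamma_j$ differ by only $\mathcal{O}(\sqrt{n}\tau)$ (Gershgorin), so the stated bounds $\tfrac{2}{\lambda_1^2 - C_1\sqrt{n}\tau} < \eta_1 < \tfrac{2}{\lambda_2^2 + C_2\sqrt{n}\tau}$ translate into: the $\gamma_1$-coordinate of $\bm{b}_t$ shrinks geometrically while the orthogonal complement's coordinates grow (or shrink slower), so after $k_1 = \mathcal{O}(\log\frac1\epsilon)$ steps the expected vector $E[\bm{b}_{k_1}]$ is $\epsilon$-close in direction to the top eigenvector.

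First I would diagonalize $K = \sum_j \gamma_j \bm{u}_j\bm{u}_j^T$ and write $\bm{b}_t = \sum_j c_j(t)\bm{u}_j$. In the noiseless mean-dynamics this decouples: $E[c_j(t)] = \prod_{s<t}(1 - \tfrac{\eta_s}{n}\gamma_j^2)\, c_j(0)$. Stage 1 drives $|E[c_1(k_1)]/E[c_j(k_1)]| \to 0$ for $j\ge 2$; then stage 2 uses the small step-size $\eta_2 < \tfrac{1}{\lambda_1^2 + C_3\sqrt n\tau}$, for which \emph{every} contraction factor lies in $(0,1)$, so stage 2 does not disturb the achieved directional alignment (it shrinks all coordinates without flipping signs or changing ratios adversely), and it is there mainly to let the variance settle — $k_2$ is chosen large enough that the stochastic fluctuations are negligible relative to $\epsilon\,\gamma_1\|\bm{b}_{k_2}\|$. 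Finally I would convert the coordinate statement into the stated ratio: if $\bm{b}$ is $\epsilon$-aligned with $\bm{u}_1$ then $\|K\bm{b}\|/\|\bm{b}\| = \sqrt{\sum_j \gamma_j^2 c_j^2}/\sqrt{\sum_j c_j^2}$ lies between $(1-2\epsilon)\gamma_1$ and $\gamma_1$, the upper bound being immediate and the lower bound following from $\sum_{j\ge2} c_j^2 \le \epsilon^2 \sum_j c_j^2$ and $\gamma_j \ge 0$.

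The genuinely technical part — and the main obstacle — is controlling the \emph{stochastic} deviation of $\bm{b}_t$ from its mean trajectory, because in stage 1 the mean map is \emph{expanding} on the bottom $n-1$ directions, so naive variance bounds blow up geometrically alongside the signal. The fix is that we only need the \emph{ratio} (direction), not the norm: I would set up a recursion for the second moment $E[\|P_1\bm{b}_t\|^2]$ and $E[\|(I-P_1)\bm{b}_t\|^2]$ (with $P_1$ the projection onto $\bm{u}_1$), show that the cross terms and the noise injected per step are of order $\eta_1^2 \lambda_1^2 \cdot(\text{current second moment}) + \mathcal{O}(\sqrt n\tau)\cdot(\dots)$, and argue that the noise contaminating the $\bm{u}_1$-direction stays a fixed fraction $\mathcal{O}(\sqrt n\tau)$ of the total, which is $< \mathrm{poly}(\epsilon)$ by hypothesis $n\tau < \mathrm{poly}(\epsilon)$; the off-diagonal entries of $K$ enter precisely here, since $\bm{K}_i\bm{K}_i^T$ averaged is $\tfrac1n K^2$ but a single sample has $\mathcal{O}(\tau)$ leakage between the decoupled blocks. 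After bounding $E[\|P_1\bm{b}_{k_1}\|^2]$ above by $\epsilon^2\cdot E[\|(I-P_1)\bm{b}_{k_1}\|^2]$ (using the $\mathcal{O}(\log\frac1\epsilon)$ geometric gap between the contraction rate $|1-\tfrac{\eta_1}{n}\gamma_1^2|<1$ and the expansion rate $|1-\tfrac{\eta_1}{n}\gamma_j^2|>1$), stage 2 with its uniformly-contracting step-size preserves this inequality, and Jensen / Cauchy–Schwarz convert the second-moment statement into the claimed bound on $E[\|K\bm{b}_{k_2}\|_2]/E[\|\bm{b}_{k_2}\|_2]$.
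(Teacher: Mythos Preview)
Your proposal has a genuine and fatal gap: the mean-dynamics map you analyze, $E[\bm{b}_{t+1}\mid\bm{b}_t]=(I-\tfrac{\eta}{n}K^2)\bm{b}_t$, cannot produce the claimed directional bias. With $\eta_1\in\bigl(2/\lambda_1^2,\,2/\lambda_2^2\bigr)$ the multipliers $1-\tfrac{\eta_1}{n}\gamma_j^2$ all lie in $(1-2/n,\,1)$: none of them overshoots, and the \emph{largest} eigenvalue direction contracts \emph{fastest}. So the mean trajectory aligns with the bottom eigenvector --- exactly the GD behavior of Theorem~\ref{ch2:thm:informal2}, not the SGD claim. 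Your sentence ``$1-\tfrac{\eta_1}{n}\gamma_j^2<-1$ on the directions of all smaller eigenvalues'' is impossible (it would force the same for $\gamma_1$), and your statement ``$|E[c_1(k_1)]/E[c_j(k_1)]|\to 0$ for $j\ge 2$'' literally says $\bm{b}$ is leaving the $\bm{u}_1$ direction, contradicting your own conclusion.

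What the paper actually exploits is the gap between $\|E[\cdot]\|$ and $E[\|\cdot\|]$. The single-sample SGD map $I-\eta\,\bm{K}_{i_t}\bm{K}_{i_t}^T$ acts on the one-dimensional space $P_1:=I-P_{-1}$ (where $P_{-1}$ projects onto $\mathrm{span}\{\bm{K}_2,\ldots,\bm{K}_n\}$) by the scalar $1-\eta\|P_1\bm{K}_{i_t}\|_2^2$, which is $1$ when $i_t\ne 1$ and $1-\eta\|P_1\bm{K}_1\|_2^2\approx 1-\eta\lambda_1^2$ when $i_t=1$. Hence the \emph{expected norm} multiplier on $P_1\bm{b}_t$ is
\[
q_1(\eta)=\tfrac{n-1}{n}+\tfrac{1}{n}\bigl|1-\eta\|P_1\bm{K}_1\|_2^2\bigr|,
\]
which exceeds $1$ precisely when $\eta\lambda_1^2>2$, while the corresponding rate $q_{-1}(\eta)$ on the $P_{-1}$ component stays below $1$ as long as $\eta\lambda_2^2<2$. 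The paper tracks $A_t=E[\|P_1\bm{b}_t\|_2]$ and $B_t=E[\|P_{-1}\bm{b}_t\|_2]$ through a coupled $2\times2$ recursion (Lemma~\ref{ch2:lem:E1}), shows $A_t$ stays bounded below while $B_t\to 0$ in stage~1 (Lemma~\ref{ch2:lem:E2}), then uses stage~2 to contract $A_t$ back down without letting $B_t$ rebound (Lemma~\ref{ch2:lem:E3}), so that at the stopping time $B_{k_2}\le\epsilon A_{k_2}$. The final ratio bound then comes from relating $P_1$ to the top eigenvector via the diagonal-dominance estimates on $\|P_{-1}\bm{K}_1\|_2$ (Lemma~\ref{ch2:lem:C3}). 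None of this is visible at the level of $E[\bm{b}_t]$; you need to track first moments of norms, and the stage-1 expansion happens on the \emph{top} direction, not the bottom ones.
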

\begin{note}\label{ch2:note:05}
One should assume $\tau$ in Assumption \ref{ch2:ass:1} to be small enough for $\epsilon$ to be very small if one would like the resulting estimator $\bm{b}_{k_2}^{SGD}$  to have the direction that corresponds to the largest eigenvalue of $K$. Later we will see that if one only wants different directional bias of SGD and GD estimators, a moderate $\epsilon$ is allowed, the assumption on $\tau$ is not that strong.
\end{note}
The proof for Theorem \ref{ch2:thm:informal1} is in Appendix \ref{ch2:app:E}. Next, we see the different convergence direction of GD.

\begin{theorem}[Directional bias of a GD-based estimator]\label{ch2:thm:informal2}
Assume Assumption \ref{ch2:ass:1} holds, run GD with a fixed step-size $\eta$ such that 
\begin{align*}
   \eta<n/(\lambda_1 + n\tau)^2.
\end{align*}
For a $\epsilon' > 0$, let $k = \mathcal{O}(\log \frac{1}{\epsilon'})$, we have the GD estimator after $k$ steps satisfying:
\begin{align*}
    \gamma_n\leq \|K \bm{b}_k^{GD} \|_2/\|\bm{b}_k^{GD}\|_2\leq \sqrt{1 + \epsilon'}\gamma_n. 
\end{align*}
That is, $\bm{b}_k^{GD}$ is close to the direction that corresponds to the smallest eigenvalue of $K$.
\end{theorem}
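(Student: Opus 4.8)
The GD update on the quadratic objective \eqref{ch2:eq:reg_RKHS} is linear, so the error $\bm{b}_t^{GD} = \bm{\alpha}_t - \hat{\bm{\alpha}}$ satisfies the exact recursion
$$
\bm{b}_{t+1}^{GD} = \Bigl(I - \tfrac{\eta}{n} K^T K\Bigr)\bm{b}_t^{GD} = \Bigl(I - \tfrac{\eta}{n} K^2\Bigr)\bm{b}_t^{GD},
$$
using symmetry of $K$. Hence $\bm{b}_k^{GD} = (I - \tfrac{\eta}{n}K^2)^k \bm{b}_0^{GD}$. The plan is to diagonalize $K = \sum_{i=1}^n \gamma_i \bm{v}_i \bm{v}_i^T$ and write $\bm{b}_0^{GD} = \sum_i c_i \bm{v}_i$, so that $\bm{b}_k^{GD} = \sum_i c_i (1 - \tfrac{\eta}{n}\gamma_i^2)^k \bm{v}_i$. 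The step-size condition $\eta < n/(\lambda_1 + n\tau)^2$ together with Gershgorin's disc theorem (which under Assumption \ref{ch2:ass:1} gives $\gamma_1 \le \lambda_1 + (n-1)\tau \le \lambda_1 + n\tau$) guarantees $0 < 1 - \tfrac{\eta}{n}\gamma_i^2 < 1$ for every $i$, with the \emph{largest} contraction factor $1 - \tfrac{\eta}{n}\gamma_n^2$ attached to the smallest eigenvalue $\gamma_n$. Thus all components decay, but the $\bm{v}_n$-component decays slowest, so $\bm{b}_k^{GD}$ is asymptotically dominated by its $\bm{v}_n$-component — this is exactly the claimed directional bias toward the eigenvector of the smallest eigenvalue.

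**Quantifying the rate.**

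Next I would make the "asymptotically dominated" statement quantitative. Write $\rho_i := 1 - \tfrac{\eta}{n}\gamma_i^2$, so $\rho_n = \max_i \rho_i$. Then
$$
\frac{\|K\bm{b}_k^{GD}\|_2^2}{\|\bm{b}_k^{GD}\|_2^2} = \frac{\sum_i \gamma_i^2 c_i^2 \rho_i^{2k}}{\sum_i c_i^2 \rho_i^{2k}}.
$$
The lower bound $\gamma_n \le \|K\bm{b}_k^{GD}\|_2/\|\bm{b}_k^{GD}\|_2$ is immediate since $\gamma_i \ge \gamma_n$ for all $i$. For the upper bound, factor out $\rho_n^{2k}$ from numerator and denominator; the ratio becomes $\gamma_n^2$ plus a perturbation of size $O\bigl(\sum_{i<n}(\gamma_i^2-\gamma_n^2)(c_i^2/c_n^2)(\rho_i/\rho_n)^{2k}\bigr)$, and since $\rho_i/\rho_n < 1$ strictly, choosing $k = \mathcal{O}(\log(1/\epsilon'))$ — with the implied constant depending on the spectral gap $\rho_n - \rho_{n-1}$ and on $c_n$ — drives this perturbation below, say, $\epsilon' \gamma_n^2$, giving $\|K\bm{b}_k^{GD}\|_2/\|\bm{b}_k^{GD}\|_2 \le \sqrt{1+\epsilon'}\,\gamma_n$ after taking square roots.

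**Main obstacle.**

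The delicate point is controlling the coefficient $c_n = \langle \bm{b}_0^{GD}, \bm{v}_n\rangle$ away from zero, since the logarithmic bound on $k$ degrades (and the statement can fail) if the initial error is nearly orthogonal to $\bm{v}_n$. I expect the paper either initializes at $\bm{\alpha}_0 = \bm{0}$ (so $\bm{b}_0^{GD} = -\hat{\bm{\alpha}} = -K^{-1}\bm{y}$, whose $\bm{v}_n$-component is $-\gamma_n^{-1}\langle\bm{y},\bm{v}_n\rangle$, generically nonzero and in fact \emph{amplified} by the $\gamma_n^{-1}$ factor — a pleasant feature of this initialization), or bundles a genericity/non-degeneracy hypothesis on $\bm{b}_0^{GD}$ into the statement. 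A secondary technical step is verifying, via the diagonal-dominance bound, that the eigenvalues $\gamma_i$ are genuinely separated from the $\lambda_i$ only up to $O(n\tau)$ corrections, which is what lets the clean spectral argument go through; this is routine Gershgorin/Weyl perturbation bookkeeping rather than a real obstacle. Overall this theorem is substantially easier than Theorem \ref{ch2:thm:informal1}, because GD is deterministic and linear — there is no stochastic fluctuation term to absorb and no need for the two-stage step-size gymnastics — so I would expect the proof to be short once the coefficient-$c_n$ issue is pinned down.
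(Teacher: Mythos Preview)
Your proposal is correct and follows essentially the same route as the paper: diagonalize $K$, track the eigencoordinates of $\bm{b}_t^{GD}$ under the linear map $I-\tfrac{\eta}{n}K^2$, use Gershgorin (via $\gamma_1\le\lambda_1+n\tau$) to verify $0<\rho_i<1$, and then bound the Rayleigh quotient by showing the non-$\bm{v}_n$ components decay geometrically faster by a factor $(\rho_{n-1}/\rho_n)^{2k}$. The paper handles your two flagged issues exactly as you anticipate: it imposes an extra separation condition $\lambda_n+2n\tau<\lambda_{n-1}$ to force a genuine spectral gap $\gamma_n<\gamma_{n-1}$, and it tacitly assumes the $\bm{v}_n$-component of $\bm{b}_0^{GD}$ is nonzero (phrased loosely as ``$\bm{\alpha}_0$ is away from $0$''), absorbing the dependence on that coefficient into the $\mathcal{O}(\log\tfrac{1}{\epsilon'})$ constant.
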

\begin{note}\label{ch2:note:07}
The assumption (on $\tau$) is mild for differentiating the directional bias of SGD and GD. Comparing Theorem \ref{ch2:thm:informal1} and \ref{ch2:thm:informal2},when $\gamma_n < (1 - 2\epsilon)\gamma_1$, taking $k$ large enough we have
$$\frac{\|K \bm{b}_k^{GD} \|_2}{\|\bm{b}_k^{GD}\|_2} < \frac{E\|K \bm{b}_{k_2}^{SGD} \|_2}{ E\|\bm{b}_{k_2}^{SGD}\|_2}.$$
That is, one may expect $\bm{b}_{k_2}^{SGD}$ to be in the direction of larger eigenvalue compared with $\bm{b}_{k}^{GD}$. 
In the following subsection, we will see that the directional bias towards a larger eigenvalue of the kernel is good for generalization. 
That is, directional bias helps an SGD-based estimator to generalize. 
\end{note}
One can see the detailed proof of Theorem 
\ref{ch2:thm:informal2} in Appendix \ref{ch2:app:F}. 
Though assumption \ref{ch2:ass:1} appears in Theorem \ref{ch2:thm:informal2}, it is just used to bound the step-size so that GD converges; the diagonally dominant structure of $K$ is not required. 
Moreover, the choice of $\epsilon'$ is independent of $\tau$, then for an arbitrarily small $\epsilon' >0$, run GD long enough then the theorem will apply. 
The estimator $\bm{b}_{k}^{GD}$ can be arbitrarily close to the eigenvector that correspond to the smallest eigenvalue.

\begin{figure*}[!t]
    \centering
    \includegraphics[scale = 0.23]{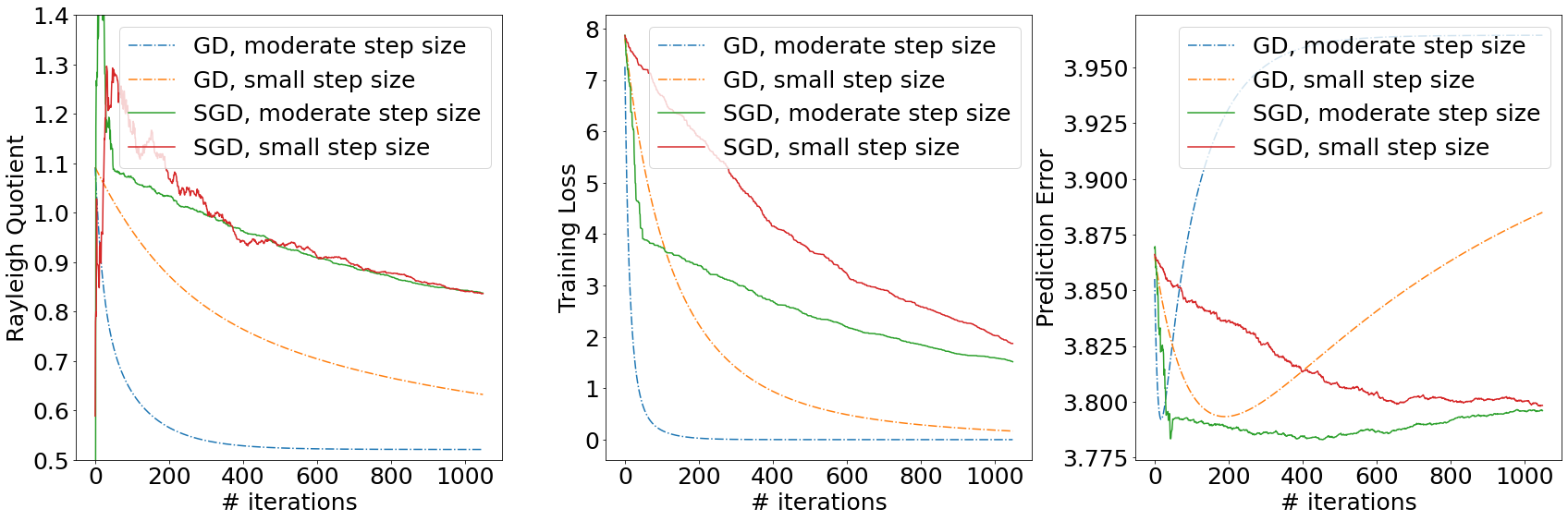}
    \caption{Kernel regression on synthetic data. The first plot shows directional bias by Rayleigh Quotient(RQ):= $\|K \bm{b}\|_2^2/\|\bm{b}\|_2^2$. The SGD indeed converges in the direction of a larger RQ, which matches our Theorems \ref{ch2:thm:informal1} and  \ref{ch2:thm:informal2}. In the third plot we show the prediction error of the solution paths, and the SGD does have lower prediction error than GD, even GD has smaller training loss than SGD. This supports Theorem \ref{ch2:thm:generalization}.}
    \label{ch2:fig:sim}
\end{figure*}

\subsection{Effect of directional bias}\label{ch2:sec:3.2}
In this subsection, the estimator that is biased towards the largest eigenvalue of the Hessian is shown to be the best for parameter estimation, see Theorem \ref{ch2:thm:quad_loss}. 
Later we define a realizable problem setting of kernel regression where the generalization error depends on the parameter estimation error, and in this way, the directional bias helps an SGD-based estimator to generalize.

\begin{theorem}\label{ch2:thm:quad_loss}
Consider minimizing the quadratic loss function 
\begin{equation*}
    L(\bm{w}) = \|A\bm{w} - \bm{y}\|_2^2. 
\end{equation*}
Assume there is a ground truth $\bm{w}^*$ such that $\bm{y} = A\bm{w}^*$. 
For a fixed level of the quadratic loss, the parameter estimation error $\|\bm{w} - \bm{w}^*\|_2^2$ has a lower bound: 
\begin{equation*}
    \forall \bm{w} \in \{\bm{w}: L(\bm{w}) = a\}: \|\bm{w} - \bm{w}^*\|_2^2 \geq a/\|A^T A\|_2. 
\end{equation*}
Moreover, the equality is obtained when $\bm{w}- \bm{w}^*$ is in the direction of the eigenvector that corresponds to the largest eigenvalue of matrix $A^T A$.
\end{theorem}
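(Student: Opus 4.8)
The plan is to reduce the statement to the standard Rayleigh-quotient bound for the symmetric positive semidefinite matrix $A^T A$. First I would introduce the error vector $\bm{v} := \bm{w} - \bm{w}^*$ and use the realizability assumption $\bm{y} = A\bm{w}^*$ to rewrite the residual as $A\bm{w} - \bm{y} = A\bm{v}$, so that the loss becomes the quadratic form $L(\bm{w}) = \|A\bm{v}\|_2^2 = \bm{v}^T (A^T A)\bm{v}$. This change of variables is the only conceptual step; everything else is linear algebra on $M := A^T A$.

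Next, since $M = A^T A$ is symmetric and positive semidefinite, its spectral norm $\|M\|_2$ coincides with its largest eigenvalue $\lambda_{\max}(M)$, and the Courant--Fischer / Rayleigh-quotient inequality gives $\bm{v}^T M \bm{v} \le \lambda_{\max}(M)\|\bm{v}\|_2^2$ for every $\bm{v}$. Evaluating this on the level set $L(\bm{w}) = a$ yields $a \le \|A^T A\|_2\,\|\bm{v}\|_2^2$, which is exactly the claimed bound $\|\bm{w} - \bm{w}^*\|_2^2 \ge a/\|A^T A\|_2$ after dividing by $\|A^T A\|_2$.

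For the equality claim I would recall that equality in the Rayleigh bound holds precisely when $\bm{v}$ lies in the eigenspace of $\lambda_{\max}(A^T A)$, i.e., when $\bm{w} - \bm{w}^*$ points along the top eigenvector. To verify that this bound is genuinely attained on $\{L = a\}$, and hence tight, I would exhibit a witness: take a unit top eigenvector $\bm{u}$ of $A^T A$ and set $\bm{w} = \bm{w}^* + \sqrt{a/\lambda_{\max}}\,\bm{u}$; a direct computation gives $L(\bm{w}) = (a/\lambda_{\max})\lambda_{\max} = a$ and $\|\bm{w} - \bm{w}^*\|_2^2 = a/\lambda_{\max} = a/\|A^T A\|_2$.

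There is essentially no hard step here; the only minor points to handle carefully are (i) the identification $\|A^T A\|_2 = \lambda_{\max}(A^T A)$, which relies on symmetry and positive semidefiniteness, and (ii) ensuring $\lambda_{\max}(A^T A) > 0$ so that the division is legitimate, which holds whenever $A \neq 0$ (otherwise the level set $\{L = a\}$ would be empty for $a > 0$). The main ``work,'' such as it is, amounts to stating the Rayleigh-quotient fact cleanly and spelling out the equality characterization.
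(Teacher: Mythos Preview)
Your proposal is correct and essentially identical to the paper's own proof: both introduce $\bm{v}=\bm{w}-\bm{w}^*$, rewrite $L(\bm{w})=\|A\bm{v}\|_2^2$, and invoke the Rayleigh-quotient bound $\|A\bm{v}\|_2^2\le\lambda_{\max}(A^TA)\|\bm{v}\|_2^2$ with equality along the top eigenvector. The paper derives that inequality by writing out the eigendecomposition of $A^TA$ explicitly, whereas you simply cite it, and you additionally spell out the witness and the $A\neq 0$ caveat, but these are cosmetic differences.
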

\begin{note}
Theorem \ref{ch2:thm:quad_loss} implies that the directional bias towards the largest eigenvalue is good for parameter estimation. 
As discussed in Remark \ref{ch2:note:07}, the SGD-based estimator is biased towards a larger eigenvalue compared to the GD-based estimator, by Theorem \ref{ch2:thm:quad_loss} the SGD estimator potentialy better estimates the true parameter and thus generalizes better, which we will formalize later.
\end{note}

Suppose $\exists f^* \in \bH$ such that $y = f^* (\bm{x})$. 
Consider the generalization error $L_D(f) := \| f - f^*\|_{\bH}^2$.
For an algorithm output $f^{\mbox{alg}}$, we decompose its generalization error as:
\begin{align*}
    &L_D(f^{\mbox{alg}}) - \inf_{f\in\bH} L_D(f)\\
    = &\underbrace{L_D(f^{\mbox{alg}}) - \inf_{f\in\bH_s} L_D (f)}_{:=\Delta(f^{\mbox{alg}}), \text{ estimation error}} + \underbrace{\inf_{f\in\bH_s} L_D(f) - \inf_{f\in\bH} L_D(f)}_{\text{approximation error}},
\end{align*}
where $\bH_s$ is the hypothesis class that the output of the algorithm is restricted to. By formulation \eqref{ch2:eq:reg_RKHS}, we have $\bH_s$:
$$
\bH_s = \{f\in \bH: f = \bm{\alpha}^T K(\cdot,X), \bm{\alpha}\in \bR^n \}. 
$$
We define the $a$-level set of training loss:
$$
\nu_a = \{f\in \bH_s: f = \bm{\alpha}^T K(\cdot,X), \frac{1}{2n}\|K\bm{\alpha} - \bm{y}\|_2^2 = a \},
$$ 
and denote $\Delta_a^* := \inf_{f\in\nu_a}\Delta(f)$.

Note that the approximation error can not be improved unless we change the hypothesis class, which is, changing the problem formulation in our case. 
So we just minimize the estimation error for estimators that are in the $a$-level set. One can check the estimation error is given by 
\begin{equation*}
   f\in \bH_s:  \Delta(f) = \bm{b}^TK\bm{b}, 
\end{equation*}
where $\bm{b} = \bm{\alpha} - \hat{\bm{\alpha}}$, seeing details in Appendix \ref{ch2:app:G2}. 
Similar to Theorem \ref{ch2:thm:quad_loss}, the estimation error is minimized when $\bm{b}$ is in the direction of the largest eigenvalue of $K$, so the directional bias towards a larger eigenvalue helps to generalize in kernel regression. We compare the estimation error of SGD and GD in following theorem. 

\begin{theorem}[Generalization performance]\label{ch2:thm:generalization}
Follow Theorems \ref{ch2:thm:informal1} and \ref{ch2:thm:informal2}, we have the following:
\begin{itemize}
    \item The output of SGD has $E[\Delta^{1/2}(f^{SGD})]\leq (1+4\epsilon)(\Delta_a^*)^{1/2}$, where $a$ is such that $E[\|K\bm{\alpha}^{SGD} - y\|_2]^2 = 2na$ and $\epsilon$ could be any positive small constant;
    \item The output of GD has $\Delta(f^{GD})\geq M\Delta_a^*$, where $a$ is the training loss of GD estimator, and $M = \frac{\gamma_1}{\gamma_n} (1-\epsilon') > 1$ is a large constant.
\end{itemize}
\end{theorem}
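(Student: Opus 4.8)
The plan is to establish each bullet of Theorem~\ref{ch2:thm:generalization} by combining the directional-bias estimates from Theorems~\ref{ch2:thm:informal1} and~\ref{ch2:thm:informal2} with the estimation-error identity $\Delta(f) = \bm{b}^T K \bm{b}$ and the elementary lower bound $\Delta_a^* = 2na/\gamma_1$ (which follows from the same Rayleigh-quotient argument as Theorem~\ref{ch2:thm:quad_loss}, applied to $A = K/\sqrt{2n}$, $A^T A = K^2/(2n)$, whose largest eigenvalue is $\gamma_1^2/(2n)$: minimizing $\bm{b}^T K \bm{b}$ subject to $\|K\bm{b}\|_2^2 = 2na$ gives value $2na/\gamma_1$). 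So the heart of the matter is to translate a bound on the ratio $\|K\bm{b}\|_2 / \|\bm{b}\|_2$ into a bound on the ratio $\Delta(f) / \Delta_a^* = \bm{b}^T K \bm{b} \,\gamma_1 / \|K\bm{b}\|_2^2$.

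For the GD bullet I would argue as follows. Write $\bm{b} = \bm{b}_k^{GD}$, $a$ for its training loss, so $\|K\bm{b}\|_2^2 = 2na$ and $\Delta_a^* = 2na/\gamma_1 = \|K\bm{b}\|_2^2/\gamma_1$. Then
\[
\frac{\Delta(f^{GD})}{\Delta_a^*} = \frac{\bm{b}^T K \bm{b}}{\|K\bm{b}\|_2^2}\,\gamma_1.
\]
Expanding $\bm{b}$ in the eigenbasis of $K$ as $\bm{b} = \sum_i c_i \bm{v}_i$, one has $\bm{b}^T K \bm{b} = \sum_i \gamma_i c_i^2$ and $\|K\bm{b}\|_2^2 = \sum_i \gamma_i^2 c_i^2$; since Theorem~\ref{ch2:thm:informal2} guarantees $\|K\bm{b}\|_2/\|\bm{b}\|_2 \le \sqrt{1+\epsilon'}\,\gamma_n$, the mass of $\bm{b}$ concentrates on eigenvectors with eigenvalue near $\gamma_n$, so $\bm{b}^T K\bm{b}/\|K\bm{b}\|_2^2$ is close to $1/\gamma_n$. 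Making this quantitative — i.e.\ showing $\bm{b}^T K \bm{b} \ge (1-\epsilon')\|K\bm{b}\|_2^2/\gamma_n$ — is a short computation: from $\sum \gamma_i^2 c_i^2 \le (1+\epsilon')\gamma_n^2 \sum c_i^2$ and $\gamma_i \ge \gamma_n$ one deduces that $\sum_{i<n}(\gamma_i^2 - \gamma_n^2)c_i^2$ is small relative to $\gamma_n^2\sum c_i^2$, hence $\sum_{i<n}(\gamma_i - \gamma_n)c_i^2$ is correspondingly small, giving $\sum \gamma_i c_i^2 \ge \gamma_n \sum c_i^2 \ge (1-\epsilon')\|K\bm{b}\|_2^2/\gamma_n$ after renormalizing the slack into a single constant. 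Multiplying by $\gamma_1$ yields $\Delta(f^{GD}) \ge \frac{\gamma_1}{\gamma_n}(1-\epsilon')\Delta_a^* = M\Delta_a^*$.

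For the SGD bullet the structure is the same but done in expectation, and this is where the main obstacle lies: $\Delta(f^{SGD}) = (\bm{b}_{k_2}^{SGD})^T K \bm{b}_{k_2}^{SGD}$ is a quadratic functional of a random vector, and Theorem~\ref{ch2:thm:informal1} controls only the ratio of expectations $E\|K\bm{b}\|_2 / E\|\bm{b}\|_2 \ge (1-2\epsilon)\gamma_1$, not an almost-sure statement or a second-moment statement. I would work with $\Delta^{1/2}(f^{SGD}) = \|K^{1/2}\bm{b}\|_2$ so that the triangle inequality and linearity are available: write $\|K^{1/2}\bm{b}\|_2^2 \le \gamma_1^{-1}\|K\bm{b}\|_2^2$ only on the "good direction'' component and bound the orthogonal remainder using the gap between $\|K\bm{b}\|_2$ and $\gamma_1\|\bm{b}\|_2$, then take expectations; the inequality $E[\|K^{1/2}\bm{b}\|_2] \le \gamma_1^{-1/2} E[\|K\bm{b}\|_2] \cdot(1 + O(\epsilon))$ should come out of decomposing $\bm{b}$ into its projection onto $\bm{v}_1$ and the rest, noting $\|K^{1/2}(\text{rest})\|_2^2 = \sum_{i\ge2}\gamma_i c_i^2 \le \gamma_2 \|\text{rest}\|_2^2$, and controlling $\|\text{rest}\|_2^2 = \|\bm{b}\|_2^2 - c_1^2$ via $\|K\bm{b}\|_2^2 \ge \gamma_1^2 c_1^2$ together with the near-equality $E\|K\bm{b}\|_2 \approx \gamma_1 E\|\bm{b}\|_2$. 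Finally, since $\Delta_a^* = 2na/\gamma_1$ with $2na = (E\|K\bm{\alpha}^{SGD}-\bm{y}\|_2)^2 = (E\|K\bm{b}\|_2)^2$, we get $(\Delta_a^*)^{1/2} = \gamma_1^{-1/2} E\|K\bm{b}\|_2$, and the chain of inequalities delivers $E[\Delta^{1/2}(f^{SGD})] \le (1+4\epsilon)(\Delta_a^*)^{1/2}$ after absorbing the several $O(\epsilon)$ error terms into the single factor $(1+4\epsilon)$ — the bookkeeping of constants being the most delicate part, since one must check the polynomial smallness $n\tau < \mathrm{poly}(\epsilon)$ assumed in Theorem~\ref{ch2:thm:informal1} is enough to make each slack term below $\epsilon$.
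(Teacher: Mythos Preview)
Your GD argument is essentially the paper's: the paper also uses the crude lower bound $\bm{b}^T K\bm{b}\ge \gamma_n\|\bm{b}\|_2^2$ and then applies Theorem~\ref{ch2:thm:informal2} in the form $\|\bm{b}\|_2^2 \ge \|K\bm{b}\|_2^2/[(1+\epsilon')\gamma_n^2]$; your detour through ``$\sum_{i<n}(\gamma_i-\gamma_n)c_i^2$ small'' is unnecessary, since the two crude inequalities chained together already give $\Delta(f^{GD})\ge \|K\bm{b}\|_2^2/[(1+\epsilon')\gamma_n] = \frac{\gamma_1}{(1+\epsilon')\gamma_n}\Delta_a^* > M\Delta_a^*$.

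For SGD you are overcomplicating things. The ``main obstacle'' you identify (that Theorem~\ref{ch2:thm:informal1} controls only a ratio of first moments, not second moments or pathwise ratios) is a phantom. The paper bypasses it entirely by using the \emph{pointwise} trivial bound
\[
\Delta^{1/2}(f)=\|K^{1/2}\bm{b}\|_2 \le \sqrt{\gamma_1}\,\|\bm{b}\|_2,
\]
valid for every realization. Taking expectations and then invoking Theorem~\ref{ch2:thm:informal1} in the form $E\|\bm{b}\|_2 \le E\|K\bm{b}\|_2/[(1-2\epsilon)\gamma_1]$ gives immediately
\[
E[\Delta^{1/2}(f^{SGD})] \le \sqrt{\gamma_1}\,E\|\bm{b}\|_2 \le \frac{E\|K\bm{b}\|_2}{(1-2\epsilon)\sqrt{\gamma_1}} = \frac{(\Delta_a^*)^{1/2}}{1-2\epsilon} < (1+4\epsilon)(\Delta_a^*)^{1/2}
\]
for $\epsilon<1/4$, since by the definition of $a$ one has $(\Delta_a^*)^{1/2}=\sqrt{2na/\gamma_1}=\gamma_1^{-1/2}E\|K\bm{b}\|_2$. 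No eigenvector decomposition, no control of the ``rest'' component, and no second-moment bookkeeping is needed. Your proposed route---splitting $\bm{b}$ along $\bm{v}_1$ and bounding the orthogonal remainder via the near-equality $E\|K\bm{b}\|_2\approx\gamma_1 E\|\bm{b}\|_2$---would require turning a first-moment closeness into control of the expected norm of a projection, which is exactly the kind of nonlinear step the simple argument avoids.
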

{
\begin{note}
This theorem indicates that $E[\Delta^{1/2}(f^{SGD})]\leq \Delta^{1/2}(f^{GD})$ when $1 + 4\epsilon \leq M^{1/2}$.
Taking $\epsilon < (\sqrt{\gamma_1/\gamma_n} - 1)/4$ in Theorem \ref{ch2:thm:informal1} and combining with Theorem \ref{ch2:thm:informal2} which states that $\epsilon'\stackrel{k\to\infty}{\longrightarrow} 0$, we will have $1 + 4\epsilon \leq M^{1/2}$ holds. 
In this way, $\Delta(f^{SGD}) < \Delta(f^{GD})$ with high probability. This finishes our claim that SGD generalizes better than GD.
\end{note}
}

\begin{figure*}
     \centering
     \begin{subfigure}{0.45\textwidth}
         \centering
         \includegraphics[scale = 0.2]{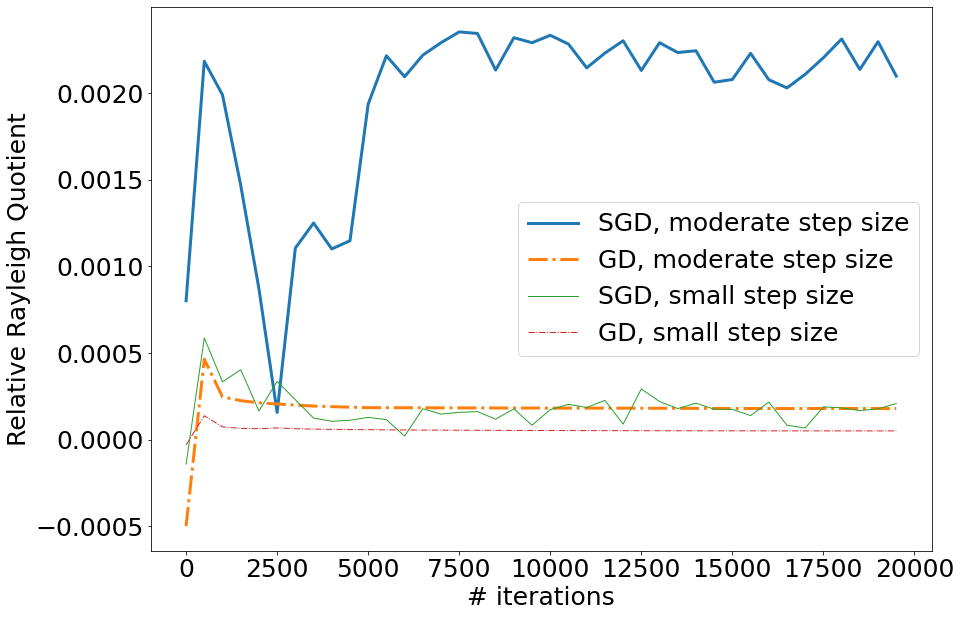}
        \caption{Relative Rayleigh Quotient.}
        \label{ch2:fig:RRQ_MNIST}
     \end{subfigure}
     \begin{subfigure}{0.45\textwidth}
         \centering
         \includegraphics[scale = 0.2]{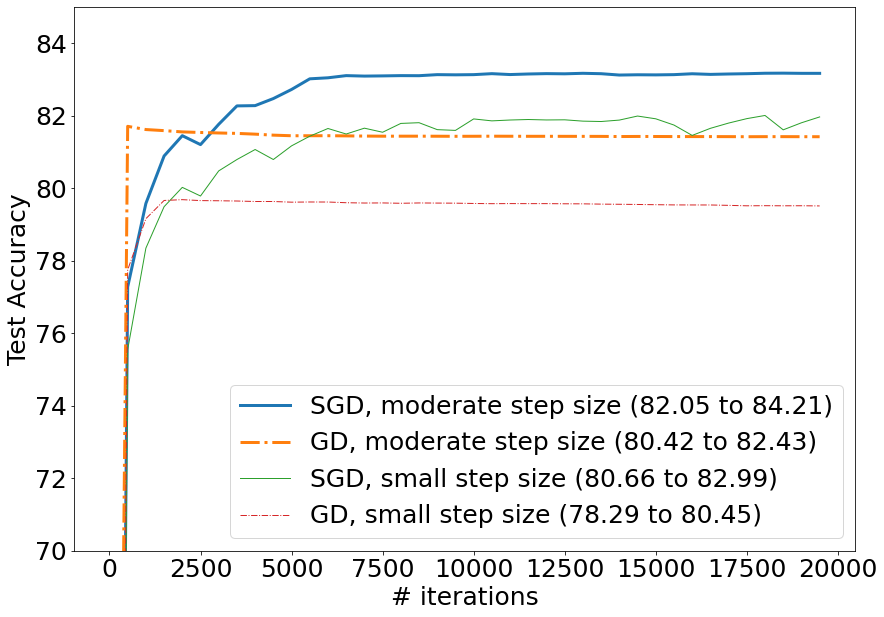}
        \caption{Test accuracy}
         \label{ch2:fig:acc_MNIST}
     \end{subfigure}
\caption{The experiment of a small ResNet on FashionMNIST. In (a), we follow \cite{wu2021direction} to use the Relative Rayleigh Quotient(RRQ) as the measurement of the convergence direction. 
The SGD with moderate step-size has higher RRQ than the GD with either moderate step-size or small step-size, which supports the theory in Theorems \ref{ch2:thm:informal1} and \ref{ch2:thm:informal2}. 
It is interesting to observe that SGD with a small step-size has a different directional bias compared with SGD with a moderate step-size, indicating that the directional bias studied in this work does not hold for a general SGD. 
In (b), we plot the testing accuracy from $20$ repetitions of experiments, the test accuracy (inside bracket) of SGD with moderate step-size is higher than the other cases, and we have Wilcoxon signed-rank test to confirm that the difference is significant at $0.01$ level. 
The test accuracy validates Theorem \ref{ch2:thm:generalization}. 
For more details of the experiments, the rank test, and more experiments, see Appendix \ref{ch2:app:H2}.
    }
    \label{ch2:fig:MNIST}
\end{figure*}

\section{Numeric Study}\label{ch2:sec:4}
\textbf{Simulation.} We simulate data from a nonlinear regression model with Gaussian additive noise as
$y_i = \sum_{j = 1}^{100}\sin(x_{i,j})+ \epsilon_i$, where $x_{i,j}\sim N(0,1)$ and $\epsilon_i\sim N(0,0.01)$. We fit kernel regression using the polynomial kernel $K(\bm{x}_1,\bm{x}_2) = (\langle \bm{x}_1,\bm{x}_2\rangle + .01)^2$ on $10$ training data and test the estimator on $5$ testing data. We run both SGD and GD for two step-size schemes: small step-size, and moderate annealing step-size. The results are in Fig. \ref{ch2:fig:sim}.

\textbf{Real data experiment.}
We run a 6-layer ResNet \cite{He_2016_CVPR} on FashionMNIST. The network structure is
\begin{align*}
    \text { Input } &\Rightarrow 7\times 7\text { Conv } \Rightarrow \text{BatchNorm}  \Rightarrow \text{ReLU}\\
    &\Rightarrow 3\times 3\text{ MaxPool }\Rightarrow \text{ResBlock1} \Rightarrow \text{ResBlock2}\\ &\Rightarrow \text{Global AvePool} \Rightarrow \text{FC} \Rightarrow \text{output}.
\end{align*}
We run SGD and GD for two step-size schemes, similar to our simulation. There are $1,500$ training data and $10,000$ testing data in our experiment. The result is in Fig. \ref{ch2:fig:MNIST}.

\begin{note}
The purposes of experiment using a Neural Network (Fig. \ref{ch2:fig:MNIST}) are: first, the Neural Network results support our finding on kernel regression, since Neural Network is related to kernel regression through NTK theory \cite{Arthur2018}; second, our experiment indicates that our result may be empirically true for the more general deep learning framework \cite{belkin2018understand}.
\end{note}



\section{Discussion and Further Work}\label{ch2:sec:5}
We advance one more step towards understanding the directional bias of SGD in kernel learning.
We discuss some implications of our results. 
 
\textbf{Implication to the SGD scheme}: 
Our result shows the directional bias holds to SGD with annealing step-size. 
Specifically, the first stage of SGD with moderate step-size should run long enough, then in the second stage by decreasing step-size we have the directional bias towards the largest eigenvalue of the Hessian, which helps in obtaining a better generalization error bound. 
This explains a technique for tuning the learning rate that people adopt in practice: starting with a large step-size, running long enough until the error plateaus, then decreasing the step-size \cite{He_2016_CVPR}. 
Although this technique is always used for speed convergence, we show that it also helps in predictive power, which becomes even better.
 
\textbf{Implication to deep learning}: 
Our assumption in the analysis implies certain structures for the deep learning models. 
Per our discussion in Remark \ref{ch2:note:diag_dom_note2}, our assumption holds when the feature space is high dimensional and/or when features are possibly sparse. 
This matches the deep learning scenario where we have a highly overparameterized model and when the trained parameter estimator becomes sparse. 
In addition, considering that some deep learning tasks can be approximated by kernel learning \cite{Arthur2018}, our results help to explain why the SGD-based estimator can perform better in an overparameterized deep learning setting.

Just as stated in \cite{belkin2018understand}, to understand deep learning one needs to understand kernel learning. 
This work improves our understanding in kernel learning, and can possibly lead to deep understanding of practices in deep learning. 

\printbibliography

\appendix

\section{Background on RKHS}\label{ch2:app:RKHS}
This section details the background on RKHS in two subsections. The first subsection includes notations, theorems, and an example of RKHS, the second section reduces the kernel regression in RKHS from infinite dimension to finite dimension, which gives our objective function \eqref{ch2:eq:reg_RKHS} . 

\subsection{Nonparametric model in RKHS}
In this subsection, we give the definition and notations for our model in RKHS, and its associated norms, basis, etc. The definitions are similar to those in \cite{raskutti2012minimax}.\\
\\
Given $n$ data pairs $\{\bm{x}_i,y_i\}_{i=1}^n$, where $\bm{x}_i\in\bX \subset\bR^p$ and $y_i\in \bR$, assume that $y_i$s are associated with $\bm{x}_i$s through $f(\bm{x}_i)$, where $f(\cdot)$ is some unknown function in the reproducing kernel Hilbert space (RKHS) of functions $\bX\to\bR$, our goal is to estimate the function $f(\cdot)$ from the data. \\
\\
Denote the RKHS where $f$ lives as $\bH$, with reproducing kernel $K: \bX \times \bX \to \bR_+$ (which is known to us). And we associate the functions in $\bH$ with probability measure $\mathbbm{Q}$, assume w.l.o.g. that $\int_{\bR^p} f(\bm{a}) d\mathbbm{Q}(\bm{a}) = 0$. By Mercer's theorem, $K$ has eigen-expansion:
\begin{equation}\nonumber
    K(\bm{a},\bm{b}) = \sum_{j=1}^{\infty} \gamma_{j}\phi_{j}(\bm{a}) \phi_{j}(\bm{b})
\end{equation}
Where $\{\phi_{j}\}_{j=1}^\infty$ are orthonormal basis in $\mathbbm{L}^2(\mathbbm{Q})$, w.r.t. the usual inner product in $\mathbbm{L}^2(\mathbbm{Q})$ as
$$\langle g(\cdot), h(\cdot) \rangle_{\mathbbm{L}^2(\mathbbm{Q})} = \int_{\bX} g(\bm{a}) h(\bm{a})d\mathbbm{Q}(\bm{a})$$
Now for any $f\in\bH$, we can expand 
$f(\cdot) = \sum_{j=1}^{\infty} c_{j} \phi_{j}(\cdot)$, where $c_{j} = \langle f(\cdot), \phi_{j}(\cdot) \rangle_{\mathbbm{L}^2(\mathbbm{Q})}$. And for $f(\cdot) = \sum_{j=1}^{\infty} c_{j} \phi_{j}(\cdot)$, $g(\cdot) = \sum_{j=1}^{\infty} c'_{j} \phi_{j}(\cdot)$, by Parseval’s theorem

$$\langle f(\cdot), g(\cdot) \rangle_{\mathbbm{L}^2(\mathbbm{Q})} = \sum_{j=1}^{\infty} c_j c'_{j}$$
And we have another inner product that is defined for RKHS $\bH$ as
$$\langle f(\cdot), g(\cdot) \rangle_{\bH} = \sum_{j=1}^{\infty} \frac{c_{j} c'_{j}}{\gamma_{j}}$$
The reproducing property of RKHS says that $\forall f\in \bH$, we have
\begin{align}\nonumber
\langle f(\cdot),K(\cdot,\bm{x})  \rangle_{\bH} = f(\bm{x})
\end{align}

\textbf{Cubic Splines Formulate a RKHS}. We go over an example of RKHS for better understanding. Consider the cubic spline of one dimension, we can show that the space of cubic splines is a RKHS. One can also find the cubic spline example in \cite{hazimeh2021grouped}. For more details on the relationship between polynomial smoothing splines and RKHS, one can check Section 1.2 of \cite{wahba1990spline} .

Assume w.l.o.g. that $x_i \in \bX= [0,1]\subset\bR$. The cubic spline $f$ on $\bX$ is continuous, has a continuous first-order derivative and square integrable second order derivative. By Taylor's theorem with remainder, we have
 \begin{align*}
 f(t) &= f(0) + t f'(0) + \int_{0}^t (t-u) f''(u) du\\
 &= f(0) + t f'(0) + \int_{0}^1 (t-u)_+ f''(u) du
 \end{align*}
 where $(t-u)_+ = \max\{0,t-u\}$. Let $\mathcal{B}$ be the set of cubic splines $f$ on $[0,1]$ that satisfies the boundary condition $f(0) = f'(0) = 0$, then for $f\in\mathcal{B}$
 \begin{align*}
 f(t) = \int_{0}^1 (t-u)_+ f''(u) du
 \end{align*}
 Let $G(t,u) = (t-u)_+$, then we claim that $\mathcal{B}$ is RKHS with reproducing kernel
 \begin{align}
K(s,t) = \int_{0}^1G(s,u)G(t,u) du\nonumber
 \end{align}
 and inner product
 \begin{align}
 \langle f,g \rangle_{\mathcal{B}} = \int_{0}^1 f''(u)g''(u)du\nonumber
 \end{align}
as one can check the reproducing property
\begin{align}\nonumber
 \langle f(\cdot),K(\cdot,t) \rangle_{\mathcal{B}}&=\int_{0}^1\frac{\partial^2 K(u,t)}{\partial u^2} f''(u)du\\
 &=\int_{0}^1(t-u)_+f''(u)du = f(t)\nonumber
\end{align}
\subsection{Optimization problem considered}
This subsection gives problem formulation of kernel regression. Given data pairs $\{\bm{x}_i,y_i\}_{i=1}^n$ and RKHS $\bH$, consider a loss function $\ell$ which is selected according to how $y$ is connected with $f(\bm{x})$, we may estimate the model by 
\begin{align}
\label{ch2:eq:obj}
    &\min_{f\in \bH} \frac{1}{n}\sum_{i=1}^n\ell(y_i,f(\bm{x}_i))\\
    =&\min_{c_{j}}\frac{1}{n} \sum_{i=1}^n\ell(y_i,\sum_{j=1}^{\infty} c_{j} \phi_{j}(\bm{x}_{i}))\nonumber
\end{align}
Example for $\ell$ includes
\begin{itemize}
    \item Squared error loss $\ell(y,f) = (y-f)^2$, which is usually used in regression;
    \item 0-1 loss $\ell(y,f) = \mathbf{1}(y*f>0)$, for binary classification;
    \item Logistic loss $\ell_(y,f) = \log(1+ \exp(-y*f))$, also a loss function for classification, can be considered as a surrogate function of 0-1 loss, and is the same as negative log likelihood function in logistic regression.
\end{itemize}
Let us come back to the nonparametric model part, to control the model smoothness, the usual practice is to add a penalty to objective \eqref{ch2:eq:obj}, result in 
\begin{align*}
    \begin{split}
    &\min_{f\in \bH} \sum_{i=1}^n\ell(y_i,f(\bm{x}_i)) + \lambda \text{pen}(f)
    \end{split}
\end{align*}
A popular choice of $\text{pen}(f)$ is $\|f\|_{\bH}^2$, or any strictly increasing function of $\|f\|_{\bH}^2$. Such method is explicitly controlling the model smoothness, and by Representer Theorem, it has solution of the form
\begin{align}\label{ch2:eq:rep}
    {f}(\cdot) =\sum_{i=1}^n \alpha_{i}K(\cdot,\bm{x}_{i})
\end{align}
Plug equation \eqref{ch2:eq:rep} into objective \eqref{ch2:eq:obj}, we have the problem becomes
\begin{equation*}
    \min_{\alpha_{i'}} \frac{1}{n}\sum_{i=1}^n\ell(y_i,\sum_{i'=1}^n \alpha_{i'}K(\bm{x}_{i},\bm{x}_{i'}))
\end{equation*}
Which gives the formulation \eqref{ch2:eq:reg_RKHS} under loss function $\ell(y,f(\bm{x})) = (y-f(\bm{x}))^2/2$.

\section{Diagonal Dominance of Some Popular Kernels}\label{ch2:app:diagonal_dominance}
In this section, we justify Assumption \ref{ch2:ass:1} by figuring out a problem setting where some popular kernels give a diagonal dominant Gram matrix. For simplicity, we assume the following data distribution throughout this section: 
\begin{subequations}
\label{ch2:eq:assumptions_a}
\begin{align}
&\text{$\mathbf{x}_i \in R^d, i=1,\ldots,n$, are normalized such that $\|\bm{x}_i\|_2^2 = 1$};\\
&\text{The direction of $\bm{x}_i$s are i.i.d. uniformly distributed on the unit sphere $S^{d-1}$};\\
&\text{$d\gg n$ (overparameterized setting).}
\end{align}
\end{subequations}
Given assumption set (\ref{ch2:eq:assumptions_a}), we can bound the inner product of data $\langle \bm{x}_i,\bm{x}_j\rangle$ with high probability as follows:
\begin{lemma}[Lemma 1 in \cite{wu2021direction}]\label{ch2:lem:A1}
Under assumption set \eqref{ch2:eq:assumptions_a}, let $d \geq 4 \log(2n^2/\delta)$ for some $\delta\in(0, 1)$. Then with probability at least $1 - \delta$, we have
$$ |\langle \bm{x}_i,\bm{x}_j\rangle| <\tilde{\tau} := \Tilde{\mathcal{O}}(\frac{1}{\sqrt{d}}),\forall i \neq j$$
\end{lemma}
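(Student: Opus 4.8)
The plan is to reduce the off-diagonal inner-product bound to a one-dimensional concentration statement by rotational invariance, control that one-dimensional quantity with a standard spherical (sub-Gaussian) marginal tail bound, and finish with a union bound over the $\binom{n}{2}$ pairs.

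First I would fix a pair $i\neq j$. By the rotational invariance of the uniform law on $S^{d-1}$, one may condition on $\bm{x}_j$ and rotate coordinates so that $\bm{x}_j=\bm{e}_1$; then $\langle\bm{x}_i,\bm{x}_j\rangle$ has the same distribution as the first coordinate $u_1$ of a uniformly random $\bm{u}\in S^{d-1}$. Since $\|\bm{x}_i\|_2^2=1$ is deterministic, only the direction of $\bm{x}_i$ matters, so this reduction is exact.

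Next I would invoke the standard marginal tail bound for the sphere: there is an absolute constant so that $\Pr(|u_1|\ge t)\le 2\exp\!\bigl(-(d-2)t^2/2\bigr)$ for all $t\in[0,1]$. If one prefers a self-contained derivation, write $\bm{u}=\bm{g}/\|\bm{g}\|_2$ with $\bm{g}\sim N(0,I_d)$, bound $|g_1|$ by a Gaussian tail and $\|\bm{g}\|_2^2$ from below by $d/2$ using a $\chi^2_d$ lower-tail (Laurent--Massart) estimate, and combine the two events. Choosing $t=\tilde\tau:=c\sqrt{\log(2n^2/\delta)/d}$ for a suitable constant $c$ makes the per-pair failure probability at most $\delta/n^2$; the hypothesis $d\ge 4\log(2n^2/\delta)$ guarantees $\tilde\tau<1$ so the bound is not vacuous, and also $\tilde\tau\ll 1=K_{n,n}$ because $d\gg n$. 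Note $\tilde\tau=\tilde{\mathcal O}(1/\sqrt d)$, the polylogarithmic factor being hidden in the $\tilde{\mathcal O}$ notation.

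Finally a union bound over the at most $\binom{n}{2}<n^2$ unordered pairs shows that, with probability at least $1-\delta$, $|\langle\bm{x}_i,\bm{x}_j\rangle|<\tilde\tau$ holds for all $i\ne j$ simultaneously, which is the claim. The only delicate point is the bookkeeping of absolute constants in the marginal tail bound (equivalently, in the Gaussian/$\chi^2$ estimates) so that the stated hypothesis $d\ge 4\log(2n^2/\delta)$ is exactly what is consumed; this is routine, and any slack in the logarithmic factor is absorbed by the $\tilde{\mathcal O}$.
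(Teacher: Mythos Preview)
Your proposal is correct and follows the standard route for this kind of statement: reduce by rotational invariance to the first coordinate of a uniform point on $S^{d-1}$, apply a sub-Gaussian marginal tail bound, and union-bound over the $\binom{n}{2}$ pairs. The paper does not actually give its own argument here---its proof consists solely of the pointer ``See proof of Lemma~1 in \cite{wu2021direction}''---so there is no independent approach to compare against; your write-up is precisely the kind of argument one would expect to find in the cited source, and it correctly accounts for where the hypothesis $d\ge 4\log(2n^2/\delta)$ and the $\tilde{\mathcal{O}}$ slack are used.
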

\begin{proof}
See proof of Lemma 1 in \cite{wu2021direction}.
\end{proof}

The bound on the inner product $\langle \bm{x}_i,\bm{x}_j\rangle$ induces bound on $K(\bm{x}_i,\bm{x}_j)$ for some popular kernels. We show the diagonal dominance for two groups of kernels in the following propositions, and list some examples for kernels in each group.

\begin{proposition}[Inner product kernel]\label{ch2:prop:innerproduct}
The inner product kernel is defined as a smooth transformation of inner product. We can write it as:
$$K(\bm{x}_i,\bm{x}_j) = g(\langle \bm{x}_i, \bm{x}_j \rangle)$$
Assume assumptions \eqref{ch2:eq:assumptions_a} hold, and assume the function $g: [-1,1]\to R$ satisfies:
\begin{subequations}
\label{ch2:eq:assumptions_b}
\begin{align}
    &g \text{ is convex};\label{ch2:eq:a1}\\
    &\text{$g$ is $L-$smooth, that is, $\nabla g$ is $L-$Lipschitz continuous};\label{ch2:eq:a2}\\
    &|g(0)|\leq c\tilde{\tau}\text{ for a constant $c$}, g'(0) \geq 0.\label{ch2:eq:a3}
\end{align}
\end{subequations}
we will have with probability $1-\delta$
\begin{align}
    &|K_{i,j}| \leq (c+g'(0))\tilde{\tau} + \frac{L}{2}\tilde{\tau}^2 \text{ for } i\neq j
\end{align}
where $\delta$ and $\Tilde{\tau}$ the same as in Lemma \ref{ch2:lem:A1}.When $g'(0)\tilde{\tau} + \frac{L}{2}\tilde{\tau}^2\ll g(1)$ for a small enough $\Tilde{\tau}$, the Gram matrix is diagonal dominant.
\end{proposition}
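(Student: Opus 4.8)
The plan is to condition on the high-probability event of Lemma~\ref{ch2:lem:A1}, reduce the bound on the off-diagonal entries to a one-dimensional estimate on $g$ near the origin, and then compare $|K_{i,j}|$ with the common diagonal value $g(1)$. So first I would work on the event of Lemma~\ref{ch2:lem:A1}, which holds with probability at least $1-\delta$; on it $|s_{ij}|<\tilde\tau$ for every pair $i\neq j$, where $s_{ij}:=\langle\bm{x}_i,\bm{x}_j\rangle$. Since $K_{i,j}=g(s_{ij})$, it suffices to bound $|g(s)|$ uniformly over $|s|\le\tilde\tau$.

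Second, for the one-dimensional estimate I would invoke the $L$-smoothness assumption \eqref{ch2:eq:a2} via the integral remainder: from $g(s)-g(0)-g'(0)s=\int_0^s\bigl(g'(u)-g'(0)\bigr)\,du$ and $|g'(u)-g'(0)|\le L|u|$ one gets $|g(s)-g(0)-g'(0)s|\le\tfrac{L}{2}s^2$ (here $g$ need not be twice differentiable, so the pointwise form $\tfrac12 g''(\xi)s^2$ is unavailable; convexity \eqref{ch2:eq:a1} alternatively supplies the clean lower bound $g(s)\ge g(0)+g'(0)s$). Combining with the triangle inequality and the assumptions $|g(0)|\le c\tilde\tau$, $g'(0)\ge0$ from \eqref{ch2:eq:a3}, and $|s|\le\tilde\tau$, I obtain $|g(s)|\le|g(0)|+g'(0)|s|+\tfrac{L}{2}s^2\le(c+g'(0))\tilde\tau+\tfrac{L}{2}\tilde\tau^2$, which is the asserted bound on $|K_{i,j}|$ for $i\neq j$.

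Third, for the diagonal-dominance claim I would use the normalization $\|\bm{x}_i\|_2^2=1$ from \eqref{ch2:eq:assumptions_a}, which forces $K_{i,i}=g(\|\bm{x}_i\|_2^2)=g(1)$ for all $i$, so in particular $K_{n,n}=g(1)$, positive for a genuine kernel. Setting $\tau:=(c+g'(0))\tilde\tau+\tfrac{L}{2}\tilde\tau^2$, Assumption~\ref{ch2:ass:1} is verified as soon as $\tau\ll g(1)$; since $c$ and $L$ are fixed constants while $\tilde\tau=\tilde{\mathcal O}(1/\sqrt d)\to0$, the contribution $c\tilde\tau$ is automatically negligible, so the stated hypothesis $g'(0)\tilde\tau+\tfrac{L}{2}\tilde\tau^2\ll g(1)$ (with $\tilde\tau$ small enough) is exactly what forces $\tau\ll g(1)$ and finishes the argument.

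I do not expect a genuine obstacle: the argument is essentially the descent lemma plus a triangle inequality, layered on top of Lemma~\ref{ch2:lem:A1}. The one point that needs care is that $g$ is assumed only $L$-smooth, not $C^2$, so the second-order remainder must be controlled through the integral representation (equivalently the descent lemma) rather than a pointwise second derivative; a lesser point is to make sure the inner-product bound is applied uniformly over all $\binom{n}{2}$ off-diagonal pairs, which is already built into the statement of Lemma~\ref{ch2:lem:A1}.
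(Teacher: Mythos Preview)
Your proposal is correct and follows essentially the same path as the paper: condition on the event of Lemma~\ref{ch2:lem:A1}, then control $g$ near the origin by a first-order Taylor bound with quadratic remainder, and compare with the diagonal value $g(1)$. The only cosmetic difference is that the paper splits the estimate into a lower bound via convexity and an upper bound via $L$-smoothness, whereas you obtain both sides at once from the two-sided descent-lemma inequality $|g(s)-g(0)-g'(0)s|\le \tfrac{L}{2}s^2$; your version makes clear that convexity is not actually needed for the stated bound.
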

\begin{proof}
We have the following with probability at least $1- \delta$ by Lemma \ref{ch2:lem:A1}. For any off-diagonal elements of $K$:
\begin{align*}
\begin{split}
    K_{i,j} &= g( \langle \bm{x}_i, \bm{x}_j \rangle)\\
    &\geq g(0) +  g'(0) \langle \bm{x}_i, \bm{x}_j \rangle \\
    &\geq - (g'(0)+c) \tilde{\tau}
\end{split}
\end{align*}
and
\begin{align*}
\begin{split}
    K_{i,j} &= g( \langle \bm{x}_i, \bm{x}_j \rangle)\\
    &\leq g(0) + g'(0) \langle \bm{x}_i, \bm{x}_j \rangle + \frac{L}{2}\langle \bm{x}_i, \bm{x}_j \rangle^2 \\
    &\leq (c+ g'(0)) \tilde{\tau} + \frac{L}{2} \tilde{\tau}^2
\end{split}
\end{align*}
Thus $|K_{i,j}| \leq (c + g'(0)) \tilde{\tau} + \frac{L}{2} \tilde{\tau}^2$.
\end{proof}
\begin{note}
We list some examples of inner product kernels that give diagonal dominant kernel matrices:
\begin{itemize}
    \item \textbf{Bilinear Kernel:} $K(\bm{x},\bm{x}') = \langle \bm{x},\bm{x}' \rangle$, then
    $$|K(\bm{x}_i,\bm{x}_j)|\leq \tilde{\tau}\ll K(\bm{x}_n,\bm{x}_n) = 1.$$
    
    \item \textbf{Polynomial Kernel:} $K(\bm{x},\bm{x}') = (\langle \bm{x},\bm{x}' \rangle + c)^m$ for $m\in \mathbbm{N}$ and $c\sim \mathcal{O}(\tilde{\tau})$, then by Proposition \ref{ch2:prop:innerproduct}
        $$|K(\bm{x}_i,\bm{x}_j)| \leq (1+m)\tilde{\tau} + \frac{m*\exp((m-1)\tilde{\tau})}{2}\tilde{\tau}^2$$
   when $(1+m)\tilde{\tau} + \frac{m*\exp((m-1)\tilde{\tau})}{2}\tilde{\tau}^2 \ll (1 + c)^m$, we have diagonal dominant Gram matrix.
    
    \item \textbf{Hyperbolic Tangent Kernel (Sigmoid Kernel):} $K(\bm{x},\bm{x}') = \tanh (\alpha\langle\bm{x},\bm{x}'\rangle + c)$, where $\alpha > 0, c\geq 0$. Note that Hyperbolic Tangent Kernel does not satisfy all the assumptions in Proposition \ref{ch2:prop:innerproduct}, one can still calculate 
    $$|K(\bm{x}_i,\bm{x}_j)| \leq \tanh (\alpha\tilde{\tau} + c) $$
    and 
    $$K(\bm{x}_k,\bm{x}_k) = \tanh(\alpha + c)$$ 
    When $\tanh (\alpha\tilde{\tau} + c)\ll \tanh (\alpha + c)$ (which is the case when $\alpha$ is large, and $c, \tilde{\tau}$ are small enough), we have $|K(\bm{x}_i,\bm{x}_j)|\ll K(\bm{x}_n,\bm{x}_n)$ and the Gram matrix is diagonal dominant.
\end{itemize}
\end{note}

\begin{proposition}[Radial Basis Function (RBF) kernel]
Radial Basis Function kernel depends on two data points through their distance, which is of following form
$$K(\bm{x}_i,\bm{x}_j) = \exp(-\gamma \| \bm{x}_i - \bm{x}_j\|_2^2), \gamma > 0$$
Assume assumptions \eqref{ch2:eq:assumptions_a} hold, when $\gamma = -c_0\log (\tilde{\tau})$ for a constant $c_0$, we have with probability $1-\delta$
\begin{align*}
    &|K_{i,j}| \leq \tilde{\tau}^{2c_0(1 - \tilde{\tau})}\ll K_{n,n} = 1,  \text{ for } i\neq j
\end{align*}
where $\delta$ and $\Tilde{\tau}$ the same as in Lemma \ref{ch2:lem:A1}. That is, the Gram matrix is diagonal dominant.
\end{proposition}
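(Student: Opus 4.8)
The plan is to rewrite the RBF kernel as a monotone function of the inner product $\langle \bm{x}_i, \bm{x}_j\rangle$ and then feed in the inner-product bound of Lemma~\ref{ch2:lem:A1}. First I would use the normalization $\|\bm{x}_i\|_2 = 1$ from assumption set~\eqref{ch2:eq:assumptions_a} to expand $\|\bm{x}_i - \bm{x}_j\|_2^2 = \|\bm{x}_i\|_2^2 + \|\bm{x}_j\|_2^2 - 2\langle \bm{x}_i,\bm{x}_j\rangle = 2 - 2\langle \bm{x}_i,\bm{x}_j\rangle$, so that $K_{i,j} = \exp\!\big(-2\gamma(1 - \langle \bm{x}_i,\bm{x}_j\rangle)\big)$; in particular $K_{i,i} = \exp(0) = 1$ for every $i$, which pins down the diagonal entries.

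Next, on the event of probability at least $1-\delta$ supplied by Lemma~\ref{ch2:lem:A1} we have $|\langle \bm{x}_i,\bm{x}_j\rangle| < \tilde{\tau}$ for all $i\neq j$, hence $1 - \langle \bm{x}_i,\bm{x}_j\rangle > 1 - \tilde{\tau} > 0$. Since $\gamma > 0$ and $t\mapsto \exp(-2\gamma t)$ is decreasing, this yields $0 < K_{i,j} < \exp\!\big(-2\gamma(1-\tilde{\tau})\big)$, so $|K_{i,j}| = K_{i,j} < \exp\!\big(-2\gamma(1-\tilde{\tau})\big)$. Substituting the prescribed bandwidth $\gamma = -c_0 \log \tilde{\tau}$ (positive because $\tilde{\tau} < 1$) gives $-2\gamma(1-\tilde{\tau}) = 2c_0(1-\tilde{\tau})\log\tilde{\tau}$, and therefore $\exp\!\big(-2\gamma(1-\tilde{\tau})\big) = \tilde{\tau}^{\,2c_0(1-\tilde{\tau})}$, which is exactly the claimed bound $|K_{i,j}| \le \tilde{\tau}^{\,2c_0(1-\tilde{\tau})}$ for $i\neq j$. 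Finally, because the exponent $2c_0(1-\tilde{\tau})$ is positive and bounded away from $0$ while $\tilde{\tau} = \tilde{\mathcal{O}}(1/\sqrt{d})$ is small, we get $\tilde{\tau}^{\,2c_0(1-\tilde{\tau})} \ll 1 = K_{n,n}$, which is precisely the diagonal-dominance condition of Assumption~\ref{ch2:ass:1}.

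I do not anticipate a genuine obstacle here: the argument is a short composition of the already-established high-probability inner-product bound with a monotone transformation. The only points needing care are (i) tracking the direction of the inequalities after composing with the decreasing exponential, and (ii) verifying $\tilde{\tau} < 1$ so that $\log\tilde{\tau} < 0$ makes both $\gamma$ and the resulting exponent $2c_0(1-\tilde{\tau})$ positive; both are immediate consequences of $d \gg n$. This mirrors the structure of the proof of Proposition~\ref{ch2:prop:innerproduct}, differing only in that the RBF kernel is handled directly rather than through a Taylor expansion of a transfer function $g$.
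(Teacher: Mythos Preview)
Your proposal is correct and follows essentially the same route as the paper: expand $\|\bm{x}_i-\bm{x}_j\|_2^2 = 2 - 2\langle \bm{x}_i,\bm{x}_j\rangle$ via the normalization, apply the inner-product bound from Lemma~\ref{ch2:lem:A1}, and substitute $\gamma = -c_0\log\tilde{\tau}$ to obtain $K_{i,j}\leq \tilde{\tau}^{2c_0(1-\tilde{\tau})}$. The paper's proof is just a terser version of your argument, omitting the explicit checks on the sign of $\log\tilde{\tau}$ and the monotonicity direction that you spell out.
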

\begin{proof}
Bound off-diagonal terms of $K$ by Lemma \ref{ch2:lem:A1}:
\begin{align*}
\begin{split}
    K_{i,j} &= \exp(-\gamma\|\bm{x}_i- \bm{x}_j\|_2^2)\\
    &\leq \exp( 2\gamma\tilde{\tau} - 2\gamma)\\
    &\ = \tilde{\tau}^{2c_0(1-\tilde{\tau})}
\end{split}
\end{align*}
\end{proof}
\begin{note}
We note some popular kernels that are related to Radial Basis Function kernel, and show that they lead to diagonal dominance:
\begin{itemize}
    \item \textbf{Gaussian Kernel:} $K(\bm{x},\bm{x}') = \exp(-\frac{\|\bm{x} - \bm{x}'\|_2^2}{2\sigma^2}) $. One can see that Gaussian Kernel is reparameterizing RBF Kernel by $\gamma = 1/(2\sigma^2)$. Thus Gaussian Gram matrix is diagonal dominant when $\sigma^2 \sim \mathcal{O}( -\frac{1}{\log(\tilde{\tau})})$.
    
    \item \textbf{Laplace Kernel:} $K(\bm{x},\bm{x}') = \exp(-\frac{\|\bm{x} - \bm{x}'\|_2}{\sigma})  $ for $\sigma > 0$. The Laplace Kernel is very similar to Gaussian Kernel, and one can check by similar steps that when $\sigma \sim \mathcal{O}( -\frac{1}{\log(\tilde{\tau})})$, Laplace Gram matrix is diagonal dominant. 
\end{itemize}
\end{note}

\section{Lemmas}
This section includes two useful lemmas for characterizing the eigenvalues of a symmetric matrix.
\begin{lemma}[Gershgorin circle theorem, restated for symmetric matrix]\label{ch2:lem:B1}
Let $A\in \bR^{n\times n}$ be a symmetric matrix. Let $A_{ij}$ be the entry in the $i-$th row and the $j-$th column. Let
$$R_i(A) :=
\sum_{j\neq i} |A_{ij}|, i = 1,\ldots,n$$
Consider n Gershgorin discs
$$D_i(A) := \{z \in \bR, |z - A_{ii}|\leq R_i(A)\}, i=1,
\ldots,n$$
The eigenvalues of A are in the union of Gershgorin discs 
$$G(A) := \cup_{i=1}^n D_i(A)$$
Furthermore, if the union of $k$ of the $n$ discs that comprise $G(A)$ forms a set $G_k(A)$ that is disjoint
from the remaining $n-k$ discs, then $G_k(A)$ contains exactly $k$ eigenvalues of $A$, counted according
to their algebraic multiplicities.
\end{lemma}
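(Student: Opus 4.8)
I would prove the two assertions separately: the inclusion $\mathrm{spec}(A)\subseteq G(A)$ by a one-line eigenvector estimate, and the localization-and-counting statement by a homotopy argument. For the inclusion, let $\lambda$ be an eigenvalue of $A$ with eigenvector $\bm{v}\neq\bm{0}$, and pick an index $i$ with $|v_i|=\max_j|v_j|>0$. The $i$-th coordinate of $A\bm{v}=\lambda\bm{v}$ gives $(\lambda-A_{ii})v_i=\sum_{j\neq i}A_{ij}v_j$, so $|\lambda-A_{ii}|\,|v_i|\leq\sum_{j\neq i}|A_{ij}|\,|v_j|\leq|v_i|R_i(A)$; dividing by $|v_i|>0$ yields $\lambda\in D_i(A)\subseteq G(A)$. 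Since $A$ is symmetric its eigenvalues are real, so each $D_i(A)$ may equivalently be read as the real interval $[A_{ii}-R_i(A),\,A_{ii}+R_i(A)]$.

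For the counting statement, let $S\subseteq\{1,\dots,n\}$ with $|S|=k$ index the discs forming $G_k(A)$, write $D$ for the diagonal part of $A$, and set $A(t)=D+t(A-D)$ for $t\in[0,1]$, so that $A(0)=D$ and $A(1)=A$. Each $A(t)$ has the same diagonal as $A$ and off-diagonal entries $tA_{ij}$, hence $R_i(A(t))=tR_i(A)\leq R_i(A)$ and $D_i(A(t))\subseteq D_i(A)$ for all $i,t$. Thus $\bigcup_{i\in S}D_i(A(t))\subseteq G_k(A)$ while $\bigcup_{i\notin S}D_i(A(t))$ stays inside the union of the other $n-k$ discs; by hypothesis these two sets are disjoint, and being finite unions of compact intervals they are at a strictly positive distance. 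At $t=0$ the spectrum of $A(0)=D$ is $\{A_{11},\dots,A_{nn}\}$, of which exactly $k$ values (those indexed by $S$), counted with multiplicity, lie in $G_k(A)$.

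To propagate this to $t=1$, I would use that the eigenvalues of $A(t)$ depend continuously on $t$: the characteristic polynomial $p_t(z)=\det(zI-A(t))$ has coefficients polynomial in $t$, and the roots of a monic polynomial depend continuously on its coefficients. By the inclusion already proved, the eigenvalues of every $A(t)$ lie in $G_k(A)\cup\bigl(\bigcup_{i\notin S}D_i(A)\bigr)$, a disjoint union of two compact sets at positive distance, so no eigenvalue can cross from one piece to the other as $t$ varies. Hence the number of eigenvalues of $A(t)$ in $G_k(A)$, counted with algebraic multiplicity, is constant in $t$; equivalently, fixing a contour $\Gamma$ that encloses $G_k(A)$ and avoids the spectrum of every $A(t)$, the integer $\tfrac{1}{2\pi i}\oint_\Gamma p_t'(z)/p_t(z)\,dz$ is continuous, hence constant, in $t$. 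Its value $k$ at $t=0$ therefore persists at $t=1$, which is the claim. (Since each $A(t)$ is symmetric one may instead simply track the $n$ real eigenvalues as continuous functions of $t$, which also bypasses any distinction between algebraic and geometric multiplicity.)

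The main obstacle is this last propagation step — making rigorous that the count of eigenvalues inside $G_k(A)$, with multiplicities, cannot change along the homotopy. It rests on the standard continuity of the spectrum together with the disjointness hypothesis, which via compactness supplies the strictly positive separation between $G_k(A)$ and the remaining discs that forbids eigenvalues from leaking across.
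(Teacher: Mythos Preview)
Your proof is correct and complete: the eigenvector/maximum-coordinate argument for the inclusion and the homotopy $A(t)=D+t(A-D)$ together with continuity of eigenvalues for the counting statement are exactly the standard treatment, and you have handled the separation-by-compactness point carefully.

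The paper, however, does not actually prove this lemma at all --- it simply cites Horn and Johnson, \emph{Matrix Analysis}, Chapter~6.1, Theorem~6.1.1. So your proposal is not ``the same approach'' as the paper's proof; rather, you have supplied a self-contained argument where the paper defers to a reference. The proof in Horn and Johnson is essentially the one you wrote (maximum-coordinate estimate plus the diagonal-to-$A$ deformation), so in effect you have reproduced the cited source. There is nothing to correct.
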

\begin{proof}
See \cite{horn2012matrix}, Chap 6.1, Theorem 6.1.1.
\end{proof}

\begin{lemma}[Cauchy interlacing theorem, restated for symmetric matrix]\label{ch2:lem:B2}
Let $B \in \bR ^{m\times m}$ be a symmetric matrix, let $\bm{y} \in\bR^n$ and $a \in\bR$, and let $A = \begin{bmatrix}B & \bm{y}\\\bm{y}^T & a\end{bmatrix}$. Then
\begin{align*}
    \lambda_1(A) \geq \lambda_1(B) \geq \lambda_2(A) \geq \ldots \geq \lambda_m(A) \geq  \lambda_m(B) \geq \lambda_{m+1}(A).
\end{align*}
\end{lemma}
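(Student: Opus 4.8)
The plan is to obtain both inequality chains from the Courant--Fischer min--max characterisation of the eigenvalues of a real symmetric matrix, which is the classical route to Cauchy interlacing. (We read the border as $\bm y\in\bR^{m}$, so that $A\in\bR^{(m+1)\times(m+1)}$, and throughout $\lambda_k(\cdot)$ denotes the $k$-th largest eigenvalue.) For a symmetric $M$ of size $N\times N$ write $R_M(\bm v):=\bm v^{T}M\bm v/\bm v^{T}\bm v$ for the Rayleigh quotient; Courant--Fischer supplies the two equivalent forms $\lambda_k(M)=\max_{\dim S=k}\min_{\bm 0\neq\bm v\in S}R_M(\bm v)$ and $\lambda_k(M)=\min_{\dim S=N-k+1}\max_{\bm 0\neq\bm v\in S}R_M(\bm v)$, with $S$ ranging over linear subspaces of the ambient $\bR^{N}$. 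The one geometric ingredient I need is the isometric embedding $\iota:\bR^{m}\hookrightarrow\bR^{m+1}$, $\bm v\mapsto(\bm v,0)$, for which $R_A(\iota(\bm v))=R_B(\bm v)$ for every $\bm v\neq\bm 0$, since $(\bm v,0)^{T}A(\bm v,0)=\bm v^{T}B\bm v$ and $\|(\bm v,0)\|_2=\|\bm v\|_2$.

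For the inequalities $\lambda_k(A)\geq\lambda_k(B)$, $k=1,\dots,m$, fix $k$, take a $k$-dimensional subspace $S^{\star}\subseteq\bR^{m}$ attaining $\lambda_k(B)$ in the max--min form, and note that $\iota(S^{\star})$ is a $k$-dimensional subspace of $\bR^{m+1}$; using it as a (generally suboptimal) competitor in the max--min form for $A$ gives $\lambda_k(A)\geq\min_{\bm 0\neq\bm w\in\iota(S^{\star})}R_A(\bm w)=\min_{\bm 0\neq\bm v\in S^{\star}}R_B(\bm v)=\lambda_k(B)$. For the inequalities $\lambda_{k+1}(A)\leq\lambda_k(B)$, $k=1,\dots,m$, fix $k$, take an $(m-k+1)$-dimensional subspace $T^{\star}\subseteq\bR^{m}$ attaining $\lambda_k(B)$ in the min--max form, and observe that $\dim\iota(T^{\star})=m-k+1=(m+1)-(k+1)+1$, so $\iota(T^{\star})$ is an admissible competitor in the min--max form for $\lambda_{k+1}(A)$, whence $\lambda_{k+1}(A)\leq\max_{\bm 0\neq\bm w\in\iota(T^{\star})}R_A(\bm w)=\max_{\bm 0\neq\bm v\in T^{\star}}R_B(\bm v)=\lambda_k(B)$. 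Interleaving the two families is exactly the asserted chain $\lambda_1(A)\geq\lambda_1(B)\geq\lambda_2(A)\geq\cdots\geq\lambda_m(A)\geq\lambda_m(B)\geq\lambda_{m+1}(A)$.

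There is no genuine obstacle here, as the statement is standard; the only point requiring care is the index bookkeeping when passing between the max--min and min--max forms of Courant--Fischer (equivalently, one may apply the max--min form to $-A$ and $-B$ and avoid the min--max form entirely). Should a variational argument be unwanted, a more computational alternative is available via the secular equation: for $B$ with simple spectrum and generic border $\bm y$ one expresses $\det(A-\lambda I)$ as $\prod_i(\lambda_i(B)-\lambda)$ times a rational function with simple poles at the $\lambda_i(B)$, reads off the sign pattern to locate the roots between consecutive poles, and then removes the genericity by a continuity/perturbation argument; I would resort to this only if needed. One could of course also simply cite \cite{horn2012matrix}, as is done for Lemma \ref{ch2:lem:B1}.
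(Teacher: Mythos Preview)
Your proof is correct, but you have supplied far more than the paper does: the paper's ``proof'' of this lemma is simply a one-line citation to \cite{horn2012matrix}, Chapter~4.3, Theorem~4.3.17, exactly as you yourself anticipated in your final sentence. Your Courant--Fischer argument is the standard textbook route (and indeed is essentially how Horn and Johnson prove it), so in substance you are not taking a different mathematical path --- you are just unpacking what the citation points to. The advantage of your version is that it is self-contained and makes transparent the one structural fact actually used later in the paper (that the eigenvalues of $K_{-1}^TK_{-1}$ interlace those of $K^TK$); the advantage of the paper's version is brevity, since the lemma is classical and not the object of study. Your parenthetical correction that $\bm y\in\bR^{m}$ rather than $\bR^{n}$ is also right and worth noting.
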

\begin{proof}
See \cite{horn2012matrix}, Chap 4.3, Theorem 4.3.17.
\end{proof}

\section{Spectrum of Gram matrix}
This section analyzes the eigen structure of the Gram matrix.
\begin{lemma}[Characterizing $K^2$]\label{ch2:lem:C1} Under Assumption \ref{ch2:ass:1}, we have
\begin{align}
    \langle \bm{K}_i, \bm{K}_i \rangle \in [\lambda_i^2,\lambda_i^2 + (n-1)  \tau^2 ]\label{ch2:eq:lemC1eq1}\\
    |\langle \bm{K}_i, \bm{K}_j \rangle| \leq [2\lambda_1 + (n-2)\tau]\tau, i\neq j\label{ch2:eq:lemC1eq2}
\end{align}
\end{lemma}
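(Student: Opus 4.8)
The plan is to prove both bounds by expanding the inner products entrywise and separating the contributions coming from the diagonal entries of $K$ (which equal the $\lambda_i$'s) from those coming from the off-diagonal entries, to which Assumption \ref{ch2:ass:1} applies directly.

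For the diagonal statement \eqref{ch2:eq:lemC1eq1}, I would write $\langle \bm{K}_i, \bm{K}_i\rangle = \sum_{k=1}^n K_{k,i}^2 = \lambda_i^2 + \sum_{k\neq i} K_{k,i}^2$, using $K_{i,i} = \lambda_i$. The lower bound $\lambda_i^2$ is then immediate because the residual sum is nonnegative, and the upper bound $\lambda_i^2 + (n-1)\tau^2$ follows by bounding each of the $n-1$ remaining terms $K_{k,i}^2$ by $\tau^2$ via Assumption \ref{ch2:ass:1}.

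For the off-diagonal statement \eqref{ch2:eq:lemC1eq2} with $i\neq j$, I would write $\langle \bm{K}_i,\bm{K}_j\rangle = \sum_{k=1}^n K_{k,i}K_{k,j}$ and peel off the two indices $k=i$ and $k=j$. Using symmetry of $K$, the $k=i$ term is $\lambda_i K_{i,j}$ and the $k=j$ term is $\lambda_j K_{i,j}$, so together they contribute $(\lambda_i+\lambda_j)K_{i,j}$, while each of the remaining $n-2$ terms is a product of two off-diagonal entries. Applying the triangle inequality, then $|K_{i,j}|\le\tau$, $\lambda_i+\lambda_j\le 2\lambda_1$, and $|K_{k,i}K_{k,j}|\le\tau^2$ for the $n-2$ cross terms, yields $|\langle\bm{K}_i,\bm{K}_j\rangle| \le 2\lambda_1\tau + (n-2)\tau^2 = [2\lambda_1 + (n-2)\tau]\tau$.

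There is no substantive obstacle: the argument is a direct computation from the definition of diagonal dominance. The only point needing care is the bookkeeping in the off-diagonal sum — recognizing that \emph{both} $k=i$ and $k=j$ produce a diagonal factor ($\lambda_i$ or $\lambda_j$) and that exactly $n-2$ summands are genuine off-diagonal$\times$off-diagonal products — since the stated constant $2\lambda_1 + (n-2)\tau$ depends on this count being exact.
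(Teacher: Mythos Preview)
Your proposal is correct and follows essentially the same approach as the paper's own proof: both expand the inner products entrywise, separate the diagonal term(s) from the off-diagonal ones, and apply Assumption~\ref{ch2:ass:1} termwise, with the off-diagonal case handled by peeling off the two summands $k=i$ and $k=j$ and bounding the remaining $n-2$ cross terms by $\tau^2$ each.
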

\begin{proof}
For $\langle \bm{K}_i, \bm{K}_i\rangle$
\begin{align*}
    \langle \bm{K}_i, \bm{K}_i\rangle &= K_{i,i}^2 + \sum_{l \neq i} K_{l,i}^2 \\
    &\in [\lambda_i^2,\lambda_i^2 + (n-1)  \tau^2 ]
\end{align*}
And for $\langle \bm{K}_i, \bm{K}_j\rangle, i\neq j$
\begin{align*}
    \begin{split}
        |\langle \bm{K}_i, \bm{K}_j\rangle| &= |\sum_{l = 1}^n K_{l,i} K_{l,j}|\\
        &=|K_{i,i} K_{i,j} + K_{j,i} K_{j,j} + \sum_{l \neq i,j} K_{l,i} K_{l,j}|\\
        &\leq |K_{i,i} K_{i,j}| + |K_{j,i} K_{j,j}| + \sum_{l \neq i,j} |K_{l,i} K_{l,j}|\\
        &\leq [\lambda_i + \lambda_j]\tau + (n-2) \tau^2\\
        &\leq [2\lambda_1 + (n-2)\tau]\tau
    \end{split}
\end{align*}
\end{proof}

\begin{lemma}[Eigenvalue of $K$]\label{ch2:lem:C2}
Under Assumption \ref{ch2:ass:1}, we have
\begin{align}
    &\gamma_1 \leq \lambda_1 + n\tau \label{ch2:eq:lemB2_eq1}\\
    &\gamma_n \geq \lambda_{n} - n\tau
\end{align}
If we further assume $\lambda_j + n\tau < \lambda_{j-1} - n\tau$, we will have
\begin{align*}
    \gamma_{j-1} \geq \lambda_{j-1} - n\tau > \lambda_j + n\tau \geq \gamma_j.
\end{align*}
\end{lemma}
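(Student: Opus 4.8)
\textbf{Proof proposal for Lemma \ref{ch2:lem:C2}.}

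The plan is to apply the Gershgorin circle theorem (Lemma \ref{ch2:lem:B1}) to the symmetric matrix $K$, whose diagonal entries are $\lambda_1 \geq \lambda_2 \geq \ldots \geq \lambda_n$ and whose off-diagonal entries are bounded by $\tau$ in absolute value by Assumption \ref{ch2:ass:1}. For each row $i$, the Gershgorin radius is $R_i(K) = \sum_{j \neq i} |K_{i,j}| \leq (n-1)\tau < n\tau$, so every eigenvalue of $K$ lies in $\bigcup_{i=1}^n [\lambda_i - n\tau, \lambda_i + n\tau]$. In particular, the largest eigenvalue $\gamma_1$ cannot exceed the right endpoint of the rightmost disc, which is $\lambda_1 + n\tau$, giving the first bound; and the smallest eigenvalue $\gamma_n$ cannot be below the left endpoint of the leftmost disc, which is $\lambda_n - n\tau$, giving the second bound.

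For the refined statement, I would invoke the separation clause of Lemma \ref{ch2:lem:B1}. Under the extra hypothesis $\lambda_j + n\tau < \lambda_{j-1} - n\tau$, the discs $D_1(K), \ldots, D_{j-1}(K)$ (all centered at points $\geq \lambda_{j-1}$) are disjoint from the discs $D_j(K), \ldots, D_n(K)$ (all centered at points $\leq \lambda_j$), since the separation condition forces every point of the first group of discs to exceed every point of the second group. By Gershgorin, the union $G_{j-1}(K)$ of the first $j-1$ discs therefore contains exactly $j-1$ eigenvalues of $K$, counted with multiplicity; these must be the $j-1$ largest, namely $\gamma_1, \ldots, \gamma_{j-1}$. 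Hence $\gamma_{j-1} \geq \lambda_{j-1} - n\tau$ (the left endpoint of $G_{j-1}(K)$), while the remaining eigenvalues, including $\gamma_j$, lie in the second group of discs and so $\gamma_j \leq \lambda_j + n\tau$. Chaining these with the assumed strict inequality yields $\gamma_{j-1} \geq \lambda_{j-1} - n\tau > \lambda_j + n\tau \geq \gamma_j$.

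The argument is essentially a direct application of a cited theorem, so there is no serious obstacle; the only point requiring a little care is verifying that the two groups of Gershgorin discs are genuinely disjoint and that the $j-1$ eigenvalues captured by $G_{j-1}(K)$ are indeed the top $j-1$ (rather than some other subset). This follows because the discs in the first group all lie strictly to the right of those in the second group under the separation hypothesis, so the $j-1$ eigenvalues in $G_{j-1}(K)$ are all larger than the $n-j+1$ eigenvalues in the complementary region — which, by the ordering convention on the $\gamma$'s, identifies them as $\gamma_1, \ldots, \gamma_{j-1}$.
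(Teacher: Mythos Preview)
Your proposal is correct and follows exactly the paper's approach: apply the Gershgorin circle theorem (Lemma \ref{ch2:lem:B1}) to bound each eigenvalue within $[\lambda_i - n\tau, \lambda_i + n\tau]$, and then use the disc-separation clause for the refined inequality. Your write-up is actually more careful than the paper's (which is a one-line citation of Gershgorin), particularly in justifying that the $j-1$ eigenvalues trapped in the upper group of discs are precisely $\gamma_1,\ldots,\gamma_{j-1}$.
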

\begin{proof}
Use Gershgorin circle theorem, calculate
\begin{align*}
    R_i(K) = \sum_{j\neq i}|K_{i,j}|\leq n\tau
\end{align*}
then 
\begin{align*}
    D_i(K) \subset [\lambda_i- n\tau, \lambda_i + n\tau].
\end{align*}
By Gershgorin circle theorem, the lemma claim holds. 
\end{proof}

\begin{lemma}[Characterize $P_1 K$ and $P_{-1}K$]\label{ch2:lem:C3}
Recall our definition: $P_{-1}$ is the projection on column space of  $K_{-1} = [\bm{K}_2,\bm{K}_3,\ldots,\bm{K}_n]$, and $P_1 = I - P_{-1}$. We claim the following hold
\begin{align}
    P_1 K &= [P_1 \bm{K}_1,\bm{0},\ldots,\bm{0}] \label{ch2:eq:lemB3_eq1}\\
    P_{-1} K &= [P_{-1} \bm{K}_1,\bm{K}_2,\ldots,\bm{K}_n] \label{ch2:eq:lemB3_eq2}
\end{align}

Assume $\tau$ is small enough that $n\tau \leq \mathcal{O}(1)$, $\lambda_{n} - n\tau \geq c_1 >0$ and $\lambda_{1} + n\tau \leq c_2$, let $c_3 := \frac{c_2}{c_1^2} $, then we have the following:
\begin{align}
    &\|P_{-1} \bm{K}_1\|_2\in \left[0, c_3  (2\lambda_1 + (n-2)\tau) \sqrt{n}\tau\right]\label{ch2:eq:lemB3_eq3}\\
    &\|P_1 \bm{K}_1\|_2 \in\left[\sqrt{\lambda_1^2 - c_3^2 [2\lambda_1 + (n-2)\tau]^2 n\tau^2}, \lambda_1 + \sqrt{n} \tau\right]\label{ch2:eq:lemB3_eq4}.
\end{align}
\end{lemma}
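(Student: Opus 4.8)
The two identities \eqref{ch2:eq:lemB3_eq1}–\eqref{ch2:eq:lemB3_eq2} are immediate from the definition of $P_{-1}$: since $\bm{K}_2,\ldots,\bm{K}_n$ all lie in the column space of $K_{-1}$, we have $P_{-1}\bm{K}_j = \bm{K}_j$ and $P_1\bm{K}_j = \bm{0}$ for $j\geq 2$, while the first column is left untouched except for the projection itself. So the plan is to write $K = [\bm{K}_1,\bm{K}_2,\ldots,\bm{K}_n]$ column-by-column, apply $P_1$ and $P_{-1}$, and read off the result; this takes one or two lines.

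For the quantitative bounds, the plan is to control $\|P_{-1}\bm{K}_1\|_2$ first, then obtain \eqref{ch2:eq:lemB3_eq4} from the Pythagorean identity $\|\bm{K}_1\|_2^2 = \|P_1\bm{K}_1\|_2^2 + \|P_{-1}\bm{K}_1\|_2^2$ together with the bound $\|\bm{K}_1\|_2^2 \in [\lambda_1^2,\lambda_1^2+(n-1)\tau^2]$ from \eqref{ch2:eq:lemC1eq1}. To bound $\|P_{-1}\bm{K}_1\|_2$: writing $P_{-1} = K_{-1}(K_{-1}^T K_{-1})^{-1}K_{-1}^T$, we get $\|P_{-1}\bm{K}_1\|_2 \le \|(K_{-1}^T K_{-1})^{-1}\|_2^{1/2}\,\|K_{-1}^T\bm{K}_1\|_2$. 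The vector $K_{-1}^T\bm{K}_1$ has entries $\langle \bm{K}_j,\bm{K}_1\rangle$ for $j=2,\ldots,n$, each of which is bounded by $(2\lambda_1+(n-2)\tau)\tau$ via \eqref{ch2:eq:lemC1eq2}, so $\|K_{-1}^T\bm{K}_1\|_2 \le \sqrt{n}\,(2\lambda_1+(n-2)\tau)\tau$. For the other factor, $K_{-1}^T K_{-1}$ is a principal-type submatrix of $K^2$ (more precisely the Gram matrix of $\bm{K}_2,\ldots,\bm{K}_n$), and its smallest eigenvalue is at least that of $K^2$ restricted appropriately; using Lemma \ref{ch2:lem:C2} one has $\gamma_n \ge \lambda_n - n\tau \ge c_1$, hence the eigenvalues of $K^2$ are at least $c_1^2$, and by Cauchy interlacing (Lemma \ref{ch2:lem:B2}) the smallest eigenvalue of the submatrix $K_{-1}^T K_{-1}$ is also at least $c_1^2$. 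Wait — one must be a little careful here: $K_{-1}^T K_{-1}$ is the Gram matrix of the columns $\bm{K}_2,\dots,\bm{K}_n$, which equals the submatrix of $K^2$ obtained by deleting the first row and column; interlacing then gives $\lambda_{\min}(K_{-1}^TK_{-1}) \ge \lambda_{\min}(K^2) \ge c_1^2$. Therefore $\|(K_{-1}^TK_{-1})^{-1}\|_2^{1/2} \le 1/c_1$, and combining, $\|P_{-1}\bm{K}_1\|_2 \le \frac{1}{c_1}\sqrt{n}(2\lambda_1+(n-2)\tau)\tau$. Absorbing one factor of $c_2/c_1 \ge 1$ (to match the stated constant $c_3 = c_2/c_1^2$, using $\lambda_1+n\tau \le c_2$ so $c_2/c_1 \ge 1$) gives \eqref{ch2:eq:lemB3_eq3}; the lower bound $0$ is trivial.

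Finally, \eqref{ch2:eq:lemB3_eq4} follows by plugging into Pythagoras: the upper bound uses $\|P_1\bm{K}_1\|_2 \le \|\bm{K}_1\|_2 \le \sqrt{\lambda_1^2+(n-1)\tau^2} \le \lambda_1 + \sqrt{n}\tau$ (the last step by $\sqrt{a^2+b}\le a + \sqrt{b}$ or subadditivity of $\sqrt{\cdot}$ after a coarser bound), and the lower bound uses $\|P_1\bm{K}_1\|_2^2 = \|\bm{K}_1\|_2^2 - \|P_{-1}\bm{K}_1\|_2^2 \ge \lambda_1^2 - c_3^2[2\lambda_1+(n-2)\tau]^2 n\tau^2$. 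The main obstacle is the eigenvalue bound on $(K_{-1}^TK_{-1})^{-1}$: one needs to correctly identify $K_{-1}^TK_{-1}$ as the $(n-1)\times(n-1)$ trailing principal submatrix of $K^2$ and then invoke interlacing (Lemma \ref{ch2:lem:B2}) plus the spectral lower bound $\gamma_n \ge \lambda_n - n\tau$ from Lemma \ref{ch2:lem:C2}; getting the constants to line up with the stated $c_3$ is bookkeeping but must be done honestly, tracking where the hypotheses $n\tau \le \mathcal{O}(1)$, $\lambda_n - n\tau \ge c_1$, $\lambda_1 + n\tau \le c_2$ each enter.
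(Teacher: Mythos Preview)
Your proposal is correct and follows essentially the same route as the paper: the identities \eqref{ch2:eq:lemB3_eq1}--\eqref{ch2:eq:lemB3_eq2} are immediate, $\|K_{-1}^T\bm{K}_1\|_2$ is bounded via \eqref{ch2:eq:lemC1eq2}, the spectrum of $K_{-1}^TK_{-1}$ is controlled by Cauchy interlacing (Lemma~\ref{ch2:lem:B2}) combined with Lemma~\ref{ch2:lem:C2}, and \eqref{ch2:eq:lemB3_eq4} follows from Pythagoras and \eqref{ch2:eq:lemC1eq1}. The only cosmetic difference is that the paper bounds $\|K_{-1}(K_{-1}^TK_{-1})^{-1}\|_2 \le c_2/c_1^2 = c_3$ directly (via submultiplicativity), whereas you use the projection identity $\|P_{-1}\bm{K}_1\|_2^2 = (K_{-1}^T\bm{K}_1)^T(K_{-1}^TK_{-1})^{-1}(K_{-1}^T\bm{K}_1)$ to get the slightly sharper factor $1/c_1$ and then relax it to $c_3$; both are fine.
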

\begin{proof}
For $i\neq 1$, we calculate 
\begin{align*}
    P_{-1} \bm{K}_i &= \bm{K}_i\\
    P_1 \bm{K}_i &= \bm{K}_i - P_{-1} \bm{K}_i = \bm{0}
\end{align*}
thus we have equations \eqref{ch2:eq:lemB3_eq1} and \eqref{ch2:eq:lemB3_eq2}.
\\
\\
For $\|P_{-1}\bm{K}_1\|_2$:
\begin{align*}
    \begin{split}
        &\|P_{-1}\bm{K}_1\|_2 \\
        =&\|K_{-1}(K_{-1}^T K_{-1})^{-1}K_{-1}^T \bm{K}_1\|_2\\
        \leq & \|K_{-1}(K_{-1}^T K_{-1})^{-1}\|_2 \|K_{-1}^T \bm{K}_1\|_2
    \end{split} 
\end{align*}
where 
\begin{align*}
    \begin{split}
        &\|K_{-1}^T \bm{K}_1\|_2^2 \\
        =& \sum_{i=2}^n (\bm{K}_i^T \bm{K}_1)^2\\
        \stackrel{\eqref{ch2:eq:lemC1eq2}}{\leq}& (n-1) [2\lambda_1 + (n-2)\tau]^2 \tau^2\\
        \leq & [2\lambda_1 + (n-2)\tau]^2 n\tau^2
    \end{split}
\end{align*}
and $K_{-1}^T K_{-1} $ has all eigenvalues in $[\gamma_n^2, \gamma_1^2]$ by Cauchy interlacing theorem (Lemma \ref{ch2:lem:B2}), that is, all singular values of $K_{-1}$ are in $[c_1,c_2]$ by our assumption. Then 
\begin{align*}
\|K_{-1}(K_{-1}^T K_{-1})^{-1}\|_2 \leq \frac{c_2}{c_1^2} := c_3 
\end{align*}
So we have
\begin{align*}
    \|P_{-1}\bm{K}_1\|_2 &\leq \|K_{-1}(K_{-1}^T K_{-1})^{-1}\|_2 \|K_{-1}^T \bm{K}_1\|_2\\
    &\leq c_3  [2\lambda_1 + (n-2)\tau] \sqrt{n}\tau.
\end{align*}
For $\|P_1\bm{K}_1\|_2$:
\begin{align*}
    &\|P_1 \bm{K}_1\|_2 \leq \| \bm{K}_1\|_2\\
    \leq& (\lambda_1^2 + (n-1) \tau^2)^{.5}\\
    \leq& \lambda_1 + \sqrt{n} \tau 
\end{align*}
and 
\begin{align*}
    &\|P_1 \bm{K}_1\|_2^2 \\
    =& \| \bm{K}_1\|_2^2 -  \|P_{-1}\bm{K}_1\|_2^2\\
    \geq & \lambda_1^2 - c_3^2 [2\lambda_1 + (n-2)\tau]^2 n\tau^2.
\end{align*}
\end{proof}

\begin{lemma}[Spectrum of $H_{-1} := P_{-1} K K^T P_{-1}$]\label{ch2:lem:C4} 
Assume $$c_3^2[2\lambda_1 + (n-2)\tau]^2 n \tau^2 + 2[2\lambda_1 + (n-2)\tau]n\tau \leq \lambda_n^2 $$ 
We have the following:
\begin{itemize}
    \item $0$ is an eigenvalue of $H_{-1}$, corresponding eigenspace is the column space of $P_1$;
    \item Restricted to the column space of $P_{-1}$, the eigenvalues of $H_{-1}$ are all in the interval:
     $$\left(\lambda_n^2 - [2\lambda_1 + (n-2)\tau]n\tau,\lambda_2^2 + [2\lambda_1 + (n-1)\tau]n\tau\right).$$
\end{itemize}
\end{lemma}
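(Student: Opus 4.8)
The plan is to analyze $H_{-1} = P_{-1}KK^TP_{-1}$ by decomposing the matrix $KK^T$ and using the structure of $P_{-1}$ established in Lemma \ref{ch2:lem:C3}. For the first bullet, note that $P_{-1}$ is an orthogonal projection onto the column space of $K_{-1}$, so $\mathbb{R}^n$ splits as $\mathrm{col}(K_{-1}) \oplus \mathrm{col}(P_1)$, with $\dim\mathrm{col}(P_1) = 1$. For any $\bm{v}$ in the column space of $P_1$ we have $P_{-1}\bm{v} = \bm{0}$, hence $H_{-1}\bm{v} = \bm{0}$; this is immediate and gives the eigenvalue $0$ with the claimed eigenspace (using that $K$ is full rank, so $P_1 KK^T P_1 \neq 0$ off that subspace — actually one only needs that the remaining eigenvalues are bounded away from $0$, which the second bullet provides).

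For the second bullet, restrict attention to $\mathrm{col}(K_{-1})$, on which $H_{-1}$ acts as $P_{-1}KK^TP_{-1}$. I would write $K = K_{-1}' + E$ where $K_{-1}' := [\bm 0, \bm K_2,\dots,\bm K_n]$ is $K$ with its first column zeroed out, and $E := [\bm K_1, \bm 0,\dots,\bm 0]$. Using \eqref{ch2:eq:lemB3_eq2}, $P_{-1}K = [P_{-1}\bm K_1, \bm K_2, \dots, \bm K_n] = P_{-1}K_{-1}' + P_{-1}E$, and note $P_{-1}K_{-1}' = K_{-1}'$ since columns $2,\dots,n$ lie in $\mathrm{col}(K_{-1})$. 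So on $\mathrm{col}(K_{-1})$, $H_{-1}$ equals $(K_{-1}' + P_{-1}E)(K_{-1}' + P_{-1}E)^T$ restricted there. The dominant term is $K_{-1}'(K_{-1}')^T = \sum_{i=2}^n \bm K_i\bm K_i^T = KK^T - \bm K_1\bm K_1^T$, whose nonzero eigenvalues (equivalently the eigenvalues of $(K_{-1}')^TK_{-1}' = \sum_{i\ge 2}\bm K_i\bm K_i^T$ entrywise) can be localized by Gershgorin (Lemma \ref{ch2:lem:B1}) applied to the Gram matrix $(\langle \bm K_i,\bm K_j\rangle)_{i,j=2}^n$: the diagonal entries lie in $[\lambda_i^2, \lambda_i^2+(n-1)\tau^2]$ by \eqref{ch2:eq:lemC1eq1} and the off-diagonal row sums are at most $(n-2)[2\lambda_1+(n-2)\tau]\tau \le [2\lambda_1+(n-2)\tau]n\tau$ by \eqref{ch2:eq:lemC1eq2}. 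This places those eigenvalues in roughly $(\lambda_n^2 - [2\lambda_1+(n-2)\tau]n\tau,\ \lambda_2^2 + [2\lambda_1+(n-2)\tau]n\tau)$. The cross and quadratic correction terms involving $P_{-1}E$ contribute an operator-norm perturbation of size $O(\|P_{-1}\bm K_1\|_2\cdot\|\bm K_1\|_2 + \|P_{-1}\bm K_1\|_2^2)$, which by \eqref{ch2:eq:lemB3_eq3} is $O(c_3[2\lambda_1+(n-2)\tau]\sqrt n\tau \cdot (\lambda_1+\sqrt n\tau))$; Weyl's inequality then shifts the bounds by at most this amount, and the hypothesis $c_3^2[2\lambda_1+(n-2)\tau]^2 n\tau^2 + 2[2\lambda_1+(n-2)\tau]n\tau \le \lambda_n^2$ is exactly what is needed to keep the lower endpoint positive and to absorb the perturbation into the stated interval.

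The main obstacle I anticipate is bookkeeping the perturbation cleanly: one must be careful that the Gershgorin localization is applied to the right $(n-1)\times(n-1)$ Gram matrix (the one built from $\bm K_2,\dots,\bm K_n$, not a submatrix of $KK^T$), and that the $P_{-1}\bm K_1$ correction is handled by a single Weyl-type bound rather than tracked term by term. A cleaner alternative worth trying is to bound the eigenvalues of $H_{-1}$ on $\mathrm{col}(K_{-1})$ directly via the variational (Rayleigh quotient) characterization: for unit $\bm v \in \mathrm{col}(K_{-1})$, $\bm v^TH_{-1}\bm v = \|K^TP_{-1}\bm v\|_2^2 = \|K^T\bm v\|_2^2$, and then control $\|K^T\bm v\|_2^2$ using $\bm v = K_{-1}\bm c$ and Lemma \ref{ch2:lem:C1}; this sidesteps the explicit matrix splitting but requires care relating $\|\bm v\|_2$ to $\|\bm c\|_2$ via the singular values of $K_{-1}$ (again in $[c_1,c_2]$). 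Either route reduces the lemma to the two eigenvalue-localization tools already proved plus the norm bounds of Lemma \ref{ch2:lem:C3}.
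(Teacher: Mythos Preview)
Your first route (split $K$ into $K_{-1}'$ plus the first-column piece, Gershgorin on $K_{-1}^TK_{-1}$, then Weyl for the $P_{-1}\bm K_1$ perturbation) is sound and would prove the lemma with comparable constants. Two small remarks on it: the cross terms $K_{-1}'(P_{-1}E)^T$ and $(P_{-1}E)(K_{-1}')^T$ actually vanish (because the first column of $K_{-1}'$ is zero), so the only perturbation is the PSD rank-one term $P_{-1}\bm K_1(P_{-1}\bm K_1)^T$ of norm $\|P_{-1}\bm K_1\|_2^2$; and the stated hypothesis is not literally ``exactly what is needed'' for your Weyl bookkeeping, though a hypothesis of the same shape suffices.

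The paper's argument is shorter and avoids Weyl entirely. It passes to $H_{-1}':=(P_{-1}K)^T(P_{-1}K)$, which has the same eigenvalues as $H_{-1}$, and applies Gershgorin directly to this $n\times n$ matrix. The entries are immediate: $(H_{-1}')_{11}=\|P_{-1}\bm K_1\|_2^2$ is tiny by \eqref{ch2:eq:lemB3_eq3}, $(H_{-1}')_{ii}=\|\bm K_i\|_2^2$ for $i\ge 2$, and $(H_{-1}')_{ij}=\langle\bm K_i,\bm K_j\rangle$ for $i\neq j$ (since at least one index is $\ge 2$, whence $P_{-1}$ drops out). The hypothesis is then used not to absorb a perturbation but to separate the first Gershgorin disc from the union of the others; that first disc therefore contains exactly one eigenvalue, which must be the $0$ coming from the rank deficiency, and the remaining $n-1$ eigenvalues sit in $\cup_{i\ge 2}D_i(H_{-1}')$, giving the stated interval in one stroke. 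Your approach trades this disc-separation step for a Weyl step; both work, but the paper's is the one that makes the role of the hypothesis transparent. Your Rayleigh-quotient alternative, as you suspected, is the least promising: since $\bm v^TH_{-1}\bm v=\bm v^TKK^T\bm v$ on $\mathrm{col}(K_{-1})$, variational bounds only give interlacing with the full spectrum $\gamma_1^2,\dots,\gamma_n^2$ of $KK^T$, which does not exclude $\lambda_1^2$ without further work.
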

\begin{proof}
The first claim is by construction of $P_1$ and $P_{-1}$.\\
\\
For the second claim, note that $H_{-1}$ has the same eigenvalues as 
\begin{equation*}
    H_{-1}' = (P_{-1} K)^T P_{-1}K
\end{equation*}
Now the diagonal entries of $H_{-1}'$ are:
\begin{align*}
    (H_{-1}')_{ii} = \|P_{-1} \bm{K}_i\|_2^2 = \left\{\begin{array}{ll}
        \|P_{-1}\bm{K}_1\|_2^2\leq c_3^2 [2\lambda_1 + (n-2)\tau]^2 n\tau^2 &, i = 1 \\
         \|\bm{K}_i\|_2^2 \in [\lambda_i^2, \lambda_i^2 + (n-1)\tau^2]&, i\neq 1
    \end{array}\right.
\end{align*}
And the off-diagonal entries of $H_{-1}'$ are:
\begin{align*}
    |(H_{-1}')_{ij}| = |\langle P_{-1}\bm{K}_i,P_{-1}\bm{K}_j\rangle| = |\langle \bm{K}_i,\bm{K}_j\rangle|\leq [2\lambda_1 + (n-2)\tau]\tau
\end{align*}
To use Gershgorin circle theorem, calculate
\begin{align*}
    R_{i}(H_{-1}') = \sum_{j\neq i} |(H_{-1}')_{ij}| < [2\lambda_1 + (n-2)\tau]n\tau
\end{align*}
Thus the Gershgorin discs:
\begin{align*}
    &D_1(H_{-1}') \in (\|P_{-1}\bm{K}_1\|_2^2 - [2\lambda_1 + (n-2)\tau]n\tau,\|P_{-1}\bm{K}_1\|_2^2 + [2\lambda_1 + (n-2)\tau]n\tau)\\
    &D_i(H_{-1}') \in (\|\bm{K}_i\|_2^2 - [2\lambda_1 + (n-2)\tau]n\tau,\|\bm{K}_i\|_2^2 + [2\lambda_1 + (n-2)\tau]n\tau)
\end{align*}
when $c_3^2[2\lambda_1 + (n-2)\tau]^2 n \tau^2 + [2\lambda_1 + (n-2)\tau]n\tau \leq \lambda_n^2 - [2\lambda_1 + (n-2)\tau]n\tau$, the first Gershgorin discs does not intersect with the others, so we have $n-1$ nonzero eigenvalues in 
$$\cup_{i=2}^n D_i(H_{-1}')\subset \left(\lambda_n^2 - [2\lambda_1 + (n-2)\tau]n\tau,\lambda_2^2 + [2\lambda_1 + (n-1)\tau]n\tau\right).$$
\end{proof}

\section{Directional bias of SGD with moderate step size}\label{ch2:app:E}
This section gives formal proof of Theorem \ref{ch2:thm:informal1} and specifies the constants. The proof is done in four steps: Lemma \ref{ch2:lem:E1} analyzes one update of SGD; Lemma \ref{ch2:lem:E2} uses Lemma \ref{ch2:lem:E1} to bound the first stage updates of SGD with moderate step size; Lemma \ref{ch2:lem:E3} again uses Lemma \ref{ch2:lem:E1}, and bounds the second stage updates of SGD with small step size; finally, Theorem \ref{ch2:thm:E4} combines Lemma \ref{ch2:lem:E2} and Lemma \ref{ch2:lem:E3} to formalize the directional bias of SGD, it is the same as Theorem \ref{ch2:thm:informal1}, but restated using the constants defined therein.

\begin{lemma}[One step update of SGD]\label{ch2:lem:E1}
Under Assumption \ref{ch2:ass:1}, denote $A_t := E[\|P_1 \bm{b}_t\|_2]$, $B_t := E[\|P_{-1} \bm{b}_t\|_2]$, fix a constant $c_4 \geq (\lambda_1 + \sqrt{n} \tau)(2\lambda_1 + (n-2)\tau)c_3$, then we have:
\begin{align}
    &A_{t + 1} \leq q_1(\eta) A_t + \xi(\eta) B_t\label{ch2:eq:lemc1_1}\\
    &A_{t + 1} \geq q_1(\eta) A_t - \xi(\eta) B_t\label{ch2:eq:lemc1_2}\\
    &B_{t + 1} \leq q_{-1}(\eta) B_t + \xi(\eta) A_t\label{ch2:eq:lemc1_3}
\end{align}
where
\begin{align*}
    &q_1(\eta) = \frac{n-1}{n} + \frac{1}{n}|1 - \eta\|P_1 \bm{K}_1\|_2^2|\\
    &q_{-1}(\eta) = \sqrt{1 + \frac{\| K P_{-1} \bm{b}_t\|_2^2}{n\|P_{-1} \bm{b}_t\|_2^2} \left[\eta^2(\lambda_2^2 + (n-1)\tau^2) - 2\eta\right]}\\
    &\xi(\eta) =c_4 \eta  n^{-1/2} \tau.
\end{align*}
\end{lemma}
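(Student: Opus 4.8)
The plan is to track how a single SGD step transforms the error vector $\bm{b}_t = \bm{\alpha}_t - \hat{\bm{\alpha}}$, decomposed into its component $P_1 \bm{b}_t$ in the (one-dimensional) complement of the column space of $K_{-1}$ and its component $P_{-1} \bm{b}_t$ in that column space. First I would write the SGD update \eqref{ch2:eq:SGD_update} in terms of $\bm{b}_t$: since $\bm{K}_{i_t}^T \hat{\bm{\alpha}} = y_{i_t}$ (because $K\hat{\bm{\alpha}} = \bm{y}$), we get $\bm{b}_{t+1} = \bm{b}_t - \eta \bm{K}_{i_t} \bm{K}_{i_t}^T \bm{b}_t = (I - \eta \bm{K}_{i_t}\bm{K}_{i_t}^T)\bm{b}_t$. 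Then $P_1 \bm{b}_{t+1} = P_1 \bm{b}_t - \eta P_1 \bm{K}_{i_t}(\bm{K}_{i_t}^T \bm{b}_t)$ and similarly for $P_{-1}$. The key structural input is Lemma \ref{ch2:lem:C3}: $P_1 \bm{K}_i = \bm{0}$ for $i \neq 1$, so the randomness in $i_t$ matters a great deal — only the event $\{i_t = 1\}$, of probability $1/n$, moves $P_1 \bm{b}$ via its own subspace, while the other $n-1$ indices contribute nothing to $P_1 \bm{K}_{i_t}$ but their scalar factor $\bm{K}_{i_t}^T \bm{b}_t$ still couples to $P_{-1}\bm{b}_t$ and (through the small cross term $P_1 \bm{K}_1$) back to $P_1\bm{b}_t$.

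Next I would condition on $i_t$ and take expectations. For the $A_{t+1}$ bounds: split $E[\|P_1\bm{b}_{t+1}\|_2]$ over the $n$ equally likely choices of $i_t$. When $i_t \neq 1$, $P_1\bm{b}_{t+1} = P_1\bm{b}_t$ contributes $\tfrac{n-1}{n}A_t$. When $i_t = 1$, $P_1 \bm{b}_{t+1} = P_1\bm{b}_t - \eta (\bm{K}_1^T\bm{b}_t) P_1\bm{K}_1$; writing $\bm{K}_1^T\bm{b}_t = (P_1\bm{K}_1)^T(P_1\bm{b}_t) + (P_{-1}\bm{K}_1)^T(P_{-1}\bm{b}_t)$ and using that $P_1$ has rank one (so $P_1\bm{b}_t$ and $P_1\bm{K}_1$ are parallel), the "diagonal" part becomes $(1 - \eta\|P_1\bm{K}_1\|_2^2)P_1\bm{b}_t$, giving the $\tfrac1n|1-\eta\|P_1\bm{K}_1\|_2^2|$ factor, and the "cross" part is bounded in norm by $\eta \|P_1\bm{K}_1\|_2 \|P_{-1}\bm{K}_1\|_2 \|P_{-1}\bm{b}_t\|_2$, which after invoking the bounds $\|P_1\bm{K}_1\|_2 \leq \lambda_1 + \sqrt{n}\tau$ and $\|P_{-1}\bm{K}_1\|_2 \leq c_3(2\lambda_1 + (n-2)\tau)\sqrt{n}\tau$ from \eqref{ch2:eq:lemB3_eq3}–\eqref{ch2:eq:lemB3_eq4} and dividing by $n$, is at most $\xi(\eta)B_t$ with $\xi(\eta) = c_4 \eta n^{-1/2}\tau$ for the stated $c_4$; a triangle inequality up and down yields \eqref{ch2:eq:lemc1_1} and \eqref{ch2:eq:lemc1_2}.

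For \eqref{ch2:eq:lemc1_3} I would bound $E[\|P_{-1}\bm{b}_{t+1}\|_2]$. Here $P_{-1}\bm{b}_{t+1} = P_{-1}\bm{b}_t - \eta(\bm{K}_{i_t}^T\bm{b}_t)P_{-1}\bm{K}_{i_t}$. The cleanest route is to bound $E[\|P_{-1}\bm{b}_{t+1}\|_2] \le \bigl(E[\|P_{-1}\bm{b}_{t+1}\|_2^2]\bigr)^{1/2}$ by Jensen, expand the square, and take the expectation over $i_t$: the cross term $-2\eta E[(\bm{K}_{i_t}^T\bm{b}_t)\langle P_{-1}\bm{b}_t, P_{-1}\bm{K}_{i_t}\rangle]$ and the quadratic term $\eta^2 E[(\bm{K}_{i_t}^T\bm{b}_t)^2\|P_{-1}\bm{K}_{i_t}\|_2^2]$ both turn into expressions involving $\tfrac1n\sum_i$. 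Replacing $\bm{K}_{i_t}^T\bm{b}_t$ by $(P_{-1}\bm{K}_{i_t})^T(P_{-1}\bm{b}_t)$ plus the small $P_1$-piece, the dominant contribution assembles into $\|P_{-1}\bm{b}_t\|_2^2\bigl(1 + \tfrac{\|KP_{-1}\bm{b}_t\|_2^2}{n\|P_{-1}\bm{b}_t\|_2^2}[\eta^2(\lambda_2^2 + (n-1)\tau^2) - 2\eta]\bigr)$ once one bounds $\|P_{-1}\bm{K}_i\|_2^2 = \|\bm{K}_i\|_2^2 \le \lambda_i^2 + (n-1)\tau^2 \le \lambda_2^2 + (n-1)\tau^2$ (for $i\neq 1$, via \eqref{ch2:eq:lemC1eq1}) and uses $\sum_i P_{-1}\bm{K}_i\bm{K}_i^T P_{-1} = P_{-1}KK^TP_{-1} = H_{-1}$, whose relevant spectral scale is what feeds the $\|KP_{-1}\bm{b}_t\|_2$ term; the leftover terms (those involving $P_1\bm{K}_1$ and the $i_t=1$ summand) are all $O(\tau)$ and get folded into $\xi(\eta)A_t$. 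Taking the square root gives $q_{-1}(\eta)\|P_{-1}\bm{b}_t\|_2 + \xi(\eta)\|P_1\bm{b}_t\|_2$ in expectation, hence \eqref{ch2:eq:lemc1_3}.

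\textbf{The main obstacle} I anticipate is the bookkeeping in \eqref{ch2:eq:lemc1_3}: keeping $q_{-1}(\eta)$ exact in the "clean" part (the $\lambda_2^2$ and the $\|KP_{-1}\bm{b}_t\|_2^2/\|P_{-1}\bm{b}_t\|_2^2$ ratio must survive unmolested, since later lemmas need this sharpness to get contraction in the right regime of $\eta$) while systematically collecting every $\tau$-order remainder — the cross terms mixing $P_1$ and $P_{-1}$, the $i_t=1$ term, and the error in replacing $\|\bm{K}_i\|_2^2$ by $\lambda_2^2 + (n-1)\tau^2$ — into a single clean $\xi(\eta)A_t = c_4\eta n^{-1/2}\tau A_t$ bound with one universal constant $c_4$. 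This requires being careful that each such remainder genuinely carries a factor of $\tau$ and at most the stated power of $n$, so that the constant $c_4 \ge (\lambda_1 + \sqrt n\tau)(2\lambda_1 + (n-2)\tau)c_3$ absorbs all of them; the algebra is routine but the accounting is where a slip would hide.
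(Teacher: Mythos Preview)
Your treatment of \eqref{ch2:eq:lemc1_1}--\eqref{ch2:eq:lemc1_2} matches the paper's proof essentially line for line: rewrite the SGD step as $\bm b_{t+1}=(I-\eta\bm K_{i_t}\bm K_{i_t}^T)\bm b_t$, project, use that $P_1\bm K_i=\bm 0$ for $i\neq 1$ and that $P_1$ is rank one, average over $i_t$, and bound the single cross contribution $\tfrac{\eta}{n}\|P_1\bm K_1\|_2\|P_{-1}\bm K_1\|_2$ via \eqref{ch2:eq:lemB3_eq3}--\eqref{ch2:eq:lemB3_eq4} to get $\xi(\eta)$.

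For \eqref{ch2:eq:lemc1_3} your plan diverges from the paper in the \emph{order} of the two key inequalities, and that order matters for the clean statement. The paper first applies the triangle inequality at the level of $\|P_{-1}\bm b_{t+1}\|_2$, splitting
\[
P_{-1}\bm b_{t+1}=(I-\eta P_{-1}\bm K_i\bm K_i^TP_{-1})P_{-1}\bm b_t-\eta P_{-1}\bm K_i\bm K_i^TP_1\,P_1\bm b_t,
\]
so the cross term is isolated \emph{outside} any square root; its expectation is exactly $\tfrac{\eta}{n}\|P_1\bm K_1\|_2\|P_{-1}\bm K_1\|_2\|P_1\bm b_t\|_2\le\xi(\eta)\|P_1\bm b_t\|_2$ with the stated $c_4$. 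Only then is Jensen applied, to the average of the ``clean'' norms $\tfrac1n\sum_i\|(I-\eta P_{-1}\bm K_i\bm K_i^TP_{-1})P_{-1}\bm b_t\|_2\le\sqrt{\tfrac1n\sum_i\|\cdot\|_2^2}$, and the bound $\|P_{-1}\bm K_i\|_2^2\le\lambda_2^2+(n-1)\tau^2$ together with $\sum_i(\bm K_i^TP_{-1}\bm b_t)^2=\|KP_{-1}\bm b_t\|_2^2$ produces $q_{-1}(\eta)\|P_{-1}\bm b_t\|_2$ directly.

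Your Jensen-first route puts the cross terms \emph{under} the square root, and then the sentence ``Taking the square root gives $q_{-1}(\eta)\|P_{-1}\bm b_t\|_2+\xi(\eta)\|P_1\bm b_t\|_2$'' is not justified as written: $\sqrt{q_{-1}^2B_t^2+(\text{cross})}$ does not split into $q_{-1}B_t+\xi A_t$ without further work, and a crude extraction (e.g.\ $\sqrt{a+b}\le\sqrt a+\sqrt b$) will not land on the stated $\xi$ with $c_4$ at the specified lower bound. This is not a fatal gap---you can repair it either by enlarging $c_4$ or, more simply, by doing exactly what the paper does: peel off the cross term via the triangle inequality \emph{before} squaring, then apply Jensen only to the clean part.
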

\begin{proof}
One step of SGD update is:
\begin{equation*}
    \bm{b}_{t+1} = \bm{b}_{t} - \eta \bm{K}_i \bm{K}_i^T \bm{b}_t = [I-\eta \bm{K}_i \bm{K}_i^T]\bm{b}_t
\end{equation*}
where $i$ is uniformly random sample from $[1,\ldots,n]$.\\
\\
For inequalities \eqref{ch2:eq:lemc1_1} and \eqref{ch2:eq:lemc1_2}, check  
\begin{align*}
    P_1 \bm{b}_{t+1} &= P_1[I-\eta \bm{K}_i \bm{K}_i^T]\bm{b}_t\\
    &= P_1 \bm{b}_t - \eta P_1 \bm{K}_i \bm{K}_i^T (P_1 + P_{-1}) \bm{b}_t\\
    &= [I - \eta P_1 \bm{K}_i \bm{K}_i^T P_1] P_1 \bm{b}_t - \eta [P_1 \bm{K}_i \bm{K}_i^T P_{-1}] P_{-1} \bm{b}_t
\end{align*}
where $P_1 \bm{K}_i$ and $P_1 \bm{b}_t$ are in the same $1$ dimensional linear space, thus
\begin{align*}
     & P_1 \bm{K}_i \bm{K}_i^T P_1 P_1 \bm{b}_t\\
     =& \|P_1 \bm{K}_i\|_2 \|P_1 \bm{b}_t\|_2 sign(\langle P_1 \bm{K}_i,P_1 \bm{b}_t\rangle) P_1 \bm{K}_i\\
     =& \|P_1 \bm{K}_i\|_2^2 P_1 \bm{b}_t\\
     \Rightarrow \quad [I& - \eta P_1 \bm{K}_i \bm{K}_i^T P_1] P_1 \bm{b}_t = [1 - \eta\|P_1 \bm{K}_i\|_2^2] P_1 \bm{b}_t
\end{align*}
and 
\begin{align*}
    &\|[P_1 \bm{K}_i \bm{K}_i^T P_{-1}] P_{-1} \bm{b}_t\|_2\\
    \leq &\|P_1 \bm{K}_i\|_2 \| P_{-1} \bm{K}_i\|_2 \|P_{-1} \bm{b}_t\|_2.
\end{align*}
Then 
\begin{align}
    E[\|P_1 \bm{b}_{t+1}\|_2|\bm{b}_t]\leq E[|1 - \eta\|P_1 \bm{K}_i\|_2^2|] \|P_1 \bm{b}_t\|_2 + \eta E[\|P_1 \bm{K}_i\|_2 \|P_{-1}\bm{K}_i\|_2]\|P_{-1}\bm{b}_t\|_2\label{ch2:eq:lemc1_7}\\
    E[\|P_1 \bm{b}_{t+1}\|_2|\bm{b}_t]\geq E[|1 - \eta\|P_1 \bm{K}_i\|_2^2|] \|P_1 \bm{b}_t\|_2 - \eta E[\|P_1 \bm{K}_i\|_2 \|P_{-1}\bm{K}_i\|_2]\|P_{-1}\bm{b}_t\|_2\label{ch2:eq:lemc1_8}
\end{align}
where
\begin{align*}
    &E[|1 - \eta\|P_1 \bm{K}_i\|_2^2|]\\ =&\frac{1}{n} \sum_{i=1}^n |1 - \eta\|P_1 \bm{K}_i\|_2^2|\\
    =&\frac{n-1}{n} + \frac{1}{n}|1 - \eta\|P_1 \bm{K}_1\|_2^2|\\
    :=& q_1(\eta)
\end{align*}
and
\begin{align*}
    &\eta E[\|P_1 \bm{K}_i\|_2 \|P_{-1}\bm{K}_i\|_2] \\
    =& \eta \frac{1}{n}\sum_{i=1}^n \|P_1 \bm{K}_i\|_2 \| P_{-1}\bm{K}_i\|_2\\
    =&\frac{\eta}{n}\|P_1 \bm{K}_1\|_2 \|P_{-1} \bm{K}_1\|_2\\
    \stackrel{}{\leq}&\frac{\eta}{n}(\lambda_1 + \sqrt{n} \tau)c_3  (2\lambda_1 + (n-2)\tau) \sqrt{n}\tau\\
    \leq & \frac{\eta}{n}c_4  \sqrt{n} \tau := \xi(\eta)\numberthis \label{ch2:eq:lemc1_9}
\end{align*}
where the first inequality by upper bounds \eqref{ch2:eq:lemB3_eq3} and \eqref{ch2:eq:lemB3_eq4}, second inequality by $n\tau \leq \mathcal{O}(1)$. Plug the term \eqref{ch2:eq:lemc1_9} into inequalities \eqref{ch2:eq:lemc1_7} and \eqref{ch2:eq:lemc1_8}, take expectation on both sides, we get claims \eqref{ch2:eq:lemc1_1} and \eqref{ch2:eq:lemc1_2}.\\
\\
For inequality \eqref{ch2:eq:lemc1_3}, check
\begin{align*}
    P_{-1} \bm{b}_{t+1} &= P_{-1}[I-\eta \bm{K}_i \bm{K}_i^T]\bm{b}_t\\
    &= [I - \eta P_{-1} \bm{K}_i \bm{K}_i^T P_{-1}] P_{-1} \bm{b}_t - \eta [P_{-1} \bm{K}_i \bm{K}_i^T P_{1}] P_{1} \bm{b}_t
\end{align*}
Then we have
\begin{align*}
 &\quad E[\|P_{-1} \bm{b}_{t+1}\|_2|\bm{b}_t]\\
 &\leq \frac{1}{n}\sum_{i=1}^n \|[I - \eta P_{-1} \bm{K}_i \bm{K}_i^T P_{-1}] P_{-1} \bm{b}_t\|_2 + \eta E[\|P_{-1} \bm{K}_i\|_2 \|P_{1}\bm{K}_i\|_2]\|P_{1}\bm{b}_t\|_2\\
&\stackrel{\eqref{ch2:eq:lemc1_9}}{\leq} \frac{1}{n}\sum_{i=1}^n \|[I - \eta P_{-1} \bm{K}_i \bm{K}_i^T P_{-1}] P_{-1} \bm{b}_t\|_2 + \xi(\eta)\|P_{1}\bm{b}_t\|_2
\end{align*}
where
\begin{align*}
    &\frac{1}{n}\sum_{i=1}^n \|[I - \eta P_{-1} \bm{K}_i \bm{K}_i^T P_{-1}] P_{-1} \bm{b}_t\|_2\\
    =&\frac{1}{n}\sum_{i=1}^n \sqrt{\|[I - \eta P_{-1} \bm{K}_i \bm{K}_i^T P_{-1}] P_{-1} \bm{b}_t\|_2^2}\\
    =&\frac{1}{n}\sum_{i=1}^n \sqrt{\|P_{-1} \bm{b}_t\|_2^2 + \eta^2\|P_{-1} \bm{K}_i \bm{K}_i^T P_{-1} P_{-1} \bm{b}_t\|_2^2 - 2\eta \langle P_{-1} \bm{b}_t, P_{-1} \bm{K}_i \bm{K}_i^T P_{-1} P_{-1} \bm{b}_t\rangle}\\
    =&\frac{1}{n}\sum_{i=1}^n \sqrt{\|P_{-1} \bm{b}_t\|_2^2 + \eta^2( \bm{K}_i^T P_{-1} P_{-1} \bm{b}_t)^2\|P_{-1} \bm{K}_i\|_2^2 - 2\eta( \bm{K}_i^T P_{-1} P_{-1} \bm{b}_t)^2}\\
    =& \frac{1}{n}\sum_{i=1}^n\sqrt{1 + \frac{( \bm{K}_i^T P_{-1} P_{-1} \bm{b}_t)^2}{\|P_{-1} \bm{b}_t\|_2^2} (\eta^2\|P_{-1} \bm{K}_i\|_2^2 - 2\eta)} \|P_{-1} \bm{b}_t\|_2\\
    \leq & \sqrt{1 + \frac{1}{n}\sum_{i=1}^n\frac{( \bm{K}_i^T P_{-1} P_{-1} \bm{b}_t)^2}{\|P_{-1} \bm{b}_t\|_2^2} (\eta^2\|P_{-1} \bm{K}_i\|_2^2 - 2\eta)} \|P_{-1} \bm{b}_t\|_2
\end{align*}
where the last inequality from Jensen's inequality. Now the term
\begin{align*}
    &\frac{1}{n}\sum_{i=1}^n\frac{( \bm{K}_i^T P_{-1} P_{-1} \bm{b}_t)^2}{\|P_{-1} \bm{b}_t\|_2^2} (\eta^2\|P_{-1} \bm{K}_i\|_2^2 - 2\eta)\\
    \stackrel{\eqref{ch2:eq:lemC1eq1}}{\leq}&\frac{1}{n}\sum_{i=1}^n\frac{( \bm{K}_i^T P_{-1} P_{-1} \bm{b}_t)^2}{\|P_{-1} \bm{b}_t\|_2^2} \left[\eta^2(\lambda_2^2 + (n-1)\tau^2) - 2\eta\right]\\
    = &\frac{\| K P_{-1} \bm{b}_t\|_2^2}{n\|P_{-1} \bm{b}_t\|_2^2} \left[\eta^2(\lambda_2^2 + (n-1)\tau^2) - 2\eta\right]
\end{align*}
Let 
$$q_{-1}(\eta) := \sqrt{1 + \frac{\| K P_{-1} \bm{b}_t\|_2^2}{n\|P_{-1} \bm{b}_t\|_2^2} \left[\eta^2(\lambda_2^2 + (n-1)\tau^2) - 2\eta\right]}$$
combine all three inequalities above, take the expectation w.r.t. $\bm{b}_t$, we have claim \eqref{ch2:eq:lemc1_3}. 

\end{proof}

\begin{lemma}[Long run behavior of SGD with moderate step size]\label{ch2:lem:E2}
Assume $\bm{b}_0$ is away from $0$, $\lambda_n^2 >  (2\lambda_1 + (n-2)\tau)n\tau + c_4 \sqrt{n}\tau$, $\lambda_2^2 + c_6\sqrt{n}\tau < \lambda_1^2 - c_5 \sqrt{n}\tau$ where $c_5, c_6$ are constants such that
$$c_5 \geq  c_3^2 [2\lambda_1 + (n-2)\tau]^2 \sqrt{n}\tau + c_4$$
$$c_6 \geq\frac{\sqrt{n}\tau -c_4^2 n^{-.5} \tau/[\lambda_n^2 - (2\lambda_1 + (n-2)\tau)n\tau]+ \lambda_2^2c_4/[\lambda_n^2 - [2\lambda_1 + (n-2)\tau]n\tau]}{1- c_4 \sqrt{n}\tau/[\lambda_n^2 - (2\lambda_1 + (n-2)\tau)n\tau]}$$
Consider first $k_1$ steps of SGD updates with step size $\eta$:
$$\frac{2}{\lambda_1^2 - c_5\sqrt{n}\tau} < \eta <\frac{2}{\lambda_2^2 + c_6 \sqrt{n}\tau}$$
Fix a $\beta_0\leq A_0$, then for $0<\epsilon<1$ and $0<\beta<\beta_0$ such that $\sqrt{n}\tau \leq poly(\epsilon\beta)$, there exists $k_1 = \mathcal{O}(\log\frac{1}{\epsilon\beta})$ satisfying:
\begin{itemize}
    \item $B_{k_1} \leq \epsilon\beta$
    \item $A_{k_1} \leq \|\bm{b}_0\|_2 *  \rho_1^{k_1} + \epsilon\beta/2$ for some $\rho_1>1$
    \item $A_k > \beta_0$ for $k=0,\ldots,k_1$.
\end{itemize}
\end{lemma}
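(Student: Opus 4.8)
The plan is to iterate the one-step recursion from Lemma \ref{ch2:lem:E1} and track the pair $(A_t, B_t)$ simultaneously. The key observation is that the chosen step-size range $\frac{2}{\lambda_1^2 - c_5\sqrt{n}\tau} < \eta < \frac{2}{\lambda_2^2 + c_6\sqrt{n}\tau}$ is designed so that $q_1(\eta) > 1$ (the $P_1$-component is \emph{expanding}: since $\eta\|P_1\bm{K}_1\|_2^2 > 2$ by the lower bound on $\eta$ together with the estimate \eqref{ch2:eq:lemB3_eq4} on $\|P_1\bm{K}_1\|_2$, one has $|1 - \eta\|P_1\bm{K}_1\|_2^2| = \eta\|P_1\bm{K}_1\|_2^2 - 1$, and averaging with the $\frac{n-1}{n}$ term still leaves $q_1(\eta) = 1 + \frac{1}{n}(\eta\|P_1\bm{K}_1\|_2^2 - 2) > 1$), while $q_{-1}(\eta) < 1$ (the $P_{-1}$-component is \emph{contracting}, because the upper bound on $\eta$ makes $\eta^2(\lambda_2^2 + (n-1)\tau^2) - 2\eta < 0$, using $\|KP_{-1}\bm{b}_t\|_2^2/\|P_{-1}\bm{b}_t\|_2^2$ bounded below by roughly $\lambda_n^2 - \mathcal{O}(n\tau)$ via Lemma \ref{ch2:lem:C4}). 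First I would extract explicit constants $\rho_1 > 1$ and $\rho_{-1} < 1$ with $q_1(\eta) \ge \rho_1$ and $q_{-1}(\eta) \le \rho_{-1}$ uniformly over the iterations, and a bound $\xi(\eta) = c_4\eta n^{-1/2}\tau = \mathcal{O}(n^{-1/2}\tau)$ on the cross term; the constraints on $c_5, c_6$ are exactly what is needed to make these gap inequalities consistent.

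Next I would run the coupled recursion. From \eqref{ch2:eq:lemc1_3}, $B_{t+1} \le \rho_{-1} B_t + \xi A_t$, and from \eqref{ch2:eq:lemc1_1}, $A_{t+1} \le \rho_1 A_t + \xi B_t \le (\rho_1 + \xi)A_t$ as long as $B_t \le A_t$ (which I would maintain inductively, since $B_0$ is small and $A_0 \ge \beta_0$ is bounded away from $0$, and $A_t$ only grows). Unrolling $A_t$ gives $A_t \le A_0(\rho_1 + \xi)^t \le \|\bm{b}_0\|_2 \rho_1'^{\,t}$ for a slightly inflated $\rho_1' > 1$; unrolling $B_t$ against this geometric growth gives $B_t \le \rho_{-1}^t B_0 + \xi \sum_{s=0}^{t-1}\rho_{-1}^{t-1-s} A_0 \rho_1'^{\,s} \le \rho_{-1}^t B_0 + \xi A_0 \frac{\rho_1'^{\,t}}{\rho_1' - \rho_{-1}}$. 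Since $\xi \le poly(\sqrt{n}\tau)$ and we are told $\sqrt{n}\tau \le poly(\epsilon\beta)$, the second term stays below $\epsilon\beta$ once we also ensure the first term $\rho_{-1}^t B_0$ has decayed; choosing $k_1 = \mathcal{O}(\log\frac{1}{\epsilon\beta})$ (large enough for $\rho_{-1}^{k_1} B_0 \le \epsilon\beta/2$ but small enough that $\rho_1'^{\,k_1} \le poly(1/(\epsilon\beta))$ so that $\xi \rho_1'^{\,k_1}$ is still $\le \epsilon\beta/2$) gives $B_{k_1} \le \epsilon\beta$. For the second bullet, I would similarly split $A_{k_1} \le \rho_1^{k_1}\|\bm{b}_0\|_2 + (\text{accumulated }\xi B_s\text{ terms})$, absorbing the cross-term contribution into $\epsilon\beta/2$ by the same smallness of $\xi$. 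For the third bullet, $A_k \ge q_1(\eta) A_{k-1} - \xi B_{k-1} \ge \rho_1 A_{k-1} - \xi B_{k-1}$; since $B_{k-1}$ stays small and $A_{k-1} \ge \beta_0$ inductively with $\rho_1 > 1$, the subtracted term is dominated and $A_k > \beta_0$ persists.

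The main obstacle I anticipate is the simultaneous calibration of the two time-horizon constraints on $k_1$: we need $k_1$ large enough that the genuinely contracting part $\rho_{-1}^{k_1} B_0$ is negligible, yet $k_1$ small enough that the expanding factor $\rho_1'^{\,k_1}$ in the cross-term bound on $B$ does not overwhelm the smallness gained from $\xi = \mathcal{O}(\sqrt{n}\tau)$. This is precisely why the hypothesis is phrased as $\sqrt{n}\tau \le poly(\epsilon\beta)$ rather than merely $\sqrt{n}\tau$ small: the polynomial slack in $\tau$ must beat $\rho_1'^{\,\mathcal{O}(\log(1/(\epsilon\beta)))} = poly(1/(\epsilon\beta))$. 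Getting the exponents to line up — i.e., verifying that there is a genuine window of valid $k_1$ — is the delicate bookkeeping step. A secondary technical point is that $q_{-1}(\eta)$ as defined in Lemma \ref{ch2:lem:E1} depends on $\bm{b}_t$ through the Rayleigh-quotient factor $\|KP_{-1}\bm{b}_t\|_2^2/\|P_{-1}\bm{b}_t\|_2^2$; I would need the uniform lower bound on this quotient coming from the spectral interval in Lemma \ref{ch2:lem:C4} to hold along the entire trajectory, which is fine since $P_{-1}\bm{b}_t$ always lies in the column space of $K_{-1}$ where that bound applies, but it must be stated carefully before the geometric-series estimates can be invoked.
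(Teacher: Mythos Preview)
Your plan is sound and follows the same overall strategy as the paper: use the coupled one-step bounds of Lemma~\ref{ch2:lem:E1}, verify that the step-size window forces $q_1(\eta)>1$ and $q_{-1}(\eta)<1$ (the latter via the spectral lower bound of Lemma~\ref{ch2:lem:C4} on $\|KP_{-1}\bm b_t\|_2^2/\|P_{-1}\bm b_t\|_2^2$, exactly as you note), and then balance the contracting and expanding parts against the cross term $\xi=\mathcal O(\sqrt n\,\tau)$ using the hypothesis $\sqrt n\,\tau\le \mathrm{poly}(\epsilon\beta)$. The difference is purely in bookkeeping: the paper packages the coupled recursion as the $2\times2$ symmetric matrix $\begin{pmatrix}q_1&\xi\\ \xi&q_{-1}\end{pmatrix}$, diagonalizes it with eigenvalues $\rho_1>1>\rho_{-1}$ and rotation angle $\theta$ (so $\sin\theta\asymp\xi/(q_1-q_{-1})$), and reads off $B_{k_1}\le \rho_{-1}^{k_1}\|\bm b_0\|_2+\rho_1^{k_1}\sin\theta\,\|\bm b_0\|_2$. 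Your direct unrolling produces the equivalent term $\xi\,\rho_1'^{\,k_1}/(\rho_1'-\rho_{-1})\cdot\|\bm b_0\|_2$; both reduce the ``window for $k_1$'' you flag to the single inequality $\xi\cdot\mathrm{poly}(1/\epsilon\beta)\le\epsilon\beta$. The diagonalization makes that window (and the meaning of the constant $\rho_1$ appearing in the lemma statement) slightly more transparent; your approach is more elementary and avoids the eigen-decomposition altogether.

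One genuine gap to patch: you justify $A_{t+1}\le(\rho_1+\xi)A_t$ by assuming $B_t\le A_t$ ``since $B_0$ is small,'' but the hypotheses impose no smallness on $B_0$ relative to $A_0$ (only $A_0\ge\beta_0$). The fix is immediate---first bound the sum $A_t+B_t\le(q_1+\xi)(A_{t-1}+B_{t-1})\le(q_1+\xi)^t\sqrt2\,\|\bm b_0\|_2$ and feed that into the $B$-recursion---but as written your induction for the $A$-upper bound does not close. The paper sidesteps this by working with the matrix power directly, so no such auxiliary invariant is needed.
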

\begin{proof}
For this choice of $\eta$, denote $q_1 = q_1(\eta), q_{-1} = q_{-1}(\eta), \xi = \xi(\eta)$. By Lemma \ref{ch2:lem:C1}, we have
\begin{align*}
    A_{k} &\geq q_1 A_{k-1} - \xi B_{k-1}\\
   \begin{bmatrix}A_{k}\\B_{k}\end{bmatrix} &\leq \begin{bmatrix}   q_1 & \xi\\ \xi& q_{-1}   \end{bmatrix}\begin{bmatrix}A_{k-1}\\B_{k-1}\end{bmatrix}
\end{align*}
Decompose the coefficient matrix as
\begin{align*}
    \begin{bmatrix}   q_1 & \xi\\ \xi& q_{-1}   \end{bmatrix} = \begin{bmatrix}   \cos \theta & -\sin \theta\\ \sin \theta& \cos \theta  \end{bmatrix}\begin{bmatrix}\rho_{1}& 0 \\0& \rho_{-1}\end{bmatrix} \begin{bmatrix}   \cos \theta & \sin \theta\\ -\sin \theta& \cos \theta  \end{bmatrix} 
\end{align*}
Assume w.l.o.g. that $\sin \theta \geq 0$ ( since otherwise we can take $\theta \to \theta + \pi$), then we have
\begin{align*}
       \begin{bmatrix}A_{k}\\B_{k}\end{bmatrix} &\leq \begin{bmatrix}   q_1 & \xi\\ \xi& q_{-1}   \end{bmatrix}^k\begin{bmatrix}A_{0}\\B_{0}\end{bmatrix}\\
       &=\begin{bmatrix}   \cos \theta & -\sin \theta\\ \sin \theta& \cos \theta  \end{bmatrix}\begin{bmatrix}\rho_{1}^k& 0 \\0& \rho_{-1}^k\end{bmatrix} \begin{bmatrix}   \cos \theta & \sin \theta\\ -\sin \theta& \cos \theta  \end{bmatrix} \begin{bmatrix}A_{0}\\B_{0}\end{bmatrix}\\
       &= \begin{bmatrix}A_{0}(\rho_1^k\cos^2\theta + \rho_{-1}^k\sin^2 \theta) + B_{0}(\rho_1^k \cos\theta \sin\theta - \rho_{-1}^k \cos\theta \sin\theta)\\B_{0}(\rho_{-1}^k\cos^2\theta + \rho_{1}^k\sin^2 \theta) + A_0(\rho_1^k \cos\theta \sin\theta - \rho_{-1}^k\cos\theta \sin\theta)\end{bmatrix}\\
       &= \begin{bmatrix}A_{0}\rho_1^k +       (\rho_1^k - \rho_{-1}^k)\sin\theta (B_0\cos\theta - A_0\sin\theta) \\
       B_{0}\rho_{-1}^k + (\rho_1^k - \rho_{-1}^k)\sin\theta (B_0\sin\theta + A_0\cos\theta)\end{bmatrix}\\
       &\leq \begin{bmatrix}A_{0}\rho_1^k +       |\rho_1^k - \rho_{-1}^k|\sin\theta \sqrt{B_0^2 + A_0^2} \\
       B_{0}\rho_{-1}^k + |\rho_1^k - \rho_{-1}^k|\sin\theta\sqrt{B_0^2 + A_0^2}\end{bmatrix}\\
        &= \begin{bmatrix}A_{0}\rho_1^k +       |\rho_1^k - \rho_{-1}^k|\sin\theta\|\bm{b}_0\|_2\\
       B_{0}\rho_{-1}^k + |\rho_1^k - \rho_{-1}^k|\sin\theta\|\bm{b}_0\|_2\end{bmatrix}
\end{align*}
We claim the following holds:
\begin{subequations}\label{ch2:eq:lemc2_eq5}
\begin{align}
    &0<\rho_{-1} < 1 < \rho_{1}\leq q_1 + \xi\label{ch2:eq:lemc2_eq5_a}\\
    &\rho_{-1}^{k_1}\|\bm{b}_0\|_2\leq \epsilon\beta/2\label{ch2:eq:lemc2_eq5_b}\\
    &\rho_{1}^{k_1}\|\bm{b}_0\|_2\sin\theta\leq \epsilon\beta/2\label{ch2:eq:lemc2_eq5_c}\\
    &(B_0 + \epsilon\beta_0/2)\xi < (q_1 - 1)\beta_0\label{ch2:eq:lemc2_eq5_d}
\end{align}
\end{subequations}
which we check later. Using inequalities \eqref{ch2:eq:lemc2_eq5}, we can upper bound $B_{k_1}$ as
\begin{align*}
B_{k_1} &\leq  B_{0}\rho_{-1}^{k_1} + (\rho_1^{k1} - \rho_{-1}^{k_1})\sin\theta\|\bm{b}_0\|_2\\
&\leq \|\bm{b}_0\|_2\rho_{-1}^{k_1} + \rho_1^{k1}\sin\theta\|\bm{b}_0\|_2\\
&\stackrel{\eqref{ch2:eq:lemc2_eq5_b},\eqref{ch2:eq:lemc2_eq5_c}}{\leq } \epsilon\beta
\end{align*}

In addition, for $k=0,\ldots,k_1$
\begin{align*}
B_{k} &\leq  B_{0}\rho_{-1}^{k} + (\rho_1^{k} - \rho_{-1}^{k})\sin\theta\|\bm{b}_0\|_2\\
&\leq B_{0} + \rho_1^{k1}\sin\theta\|\bm{b}_0\|_2\\
&\stackrel{\eqref{ch2:eq:lemc2_eq5_c}}{\leq } B_0 + \epsilon\beta/2
\end{align*}
We now lower bound $A_k$ by mathematical induction. We have $A_0 \geq \beta_0$, assume $A_{k-1} > \beta_0$, then 
\begin{align*}
    A_k &\geq q_1 A_{k-1} - \xi B_{k-1}\\
    &\geq q_1 \beta_0 - \xi (B_0 + \epsilon\beta/2)\\
    &\stackrel{\eqref{ch2:eq:lemc2_eq5_d}}{>} q_1 \beta_0 - (q_1 - 1)\beta_0 = \beta_0
\end{align*}
For upper bound $A_{k_1}$, check
\begin{align*}
    A_{k_1} &\leq A_{0}\rho_1^{k_1} +       (\rho_1^{k_1} - \rho_{-1}^{k_1})\sin\theta\|\bm{b}_0\|_2\\
    &\leq \|\bm{b}_0\|_2\rho_1^{k_1} + \rho_1^{k_1} \sin\theta\|\bm{b}_0\|_2\\
    &\stackrel{\eqref{ch2:eq:lemc2_eq5_c}}{\leq} \|\bm{b}_0\|_2\rho_1^{k_1} +\epsilon\beta/2
\end{align*}
We have all lemma claims proved. Now it remains to check inequalities \eqref{ch2:eq:lemc2_eq5}. First note that our choice of the upper bound on $\eta$ guarantees that $q_{-1}< 1$. \\
\\
\textbf{For inequality \eqref{ch2:eq:lemc2_eq5_a}:} By Gershgorin circle theorem, it suffices to show 
$q_{-1} + \xi < 1, q_1 - \xi > 1$, then we have $\rho_1 \geq q_1 - \xi > 1$ and $\rho_{-1} \leq q_{-1} + \xi < 1$. In addition, we need the matrix to be positive definite so that $\rho_{-1} >0$, we just need $q_1 q_{-1} > \xi^2$ to make the matrix p.d..
\begin{align*}
    &q_{-1} + \xi < 1\\
    \Longleftrightarrow&\sqrt{1 + \frac{\| K P_{-1} \bm{b}_t\|_2^2}{n\|P_{-1} \bm{b}_t\|_2^2} \left[\eta^2(\lambda_2^2 + (n-1)\tau^2) - 2\eta\right]} < 1 - c_4 \eta n^{-1/2}\tau\\
    \Longleftarrow&\frac{\| K P_{-1} \bm{b}_t\|_2^2}{n\|P_{-1} \bm{b}_t\|_2^2} \left[\eta^2(\lambda_2^2 + (n-1)\tau^2) - 2\eta\right] < c_4^2 \eta^2 n^{-1}\tau^2 - 2 c_4 \eta n^{-1/2}\tau\\
    \stackrel{\text{Lemma } \ref{ch2:lem:C4}}{\Longleftarrow} &2c_4\eta n^{-1/2}\tau \leq c_4^2\eta^2 n^{-1} \tau^2 + \frac{\lambda_n^2 - [2\lambda_1 + (n-2)\tau]n\tau}{n}\left[2\eta - \eta^2(\lambda_2^2 + (n-1)\tau^2)\right]\\
    \Longleftrightarrow & \frac{\lambda_n^2 - [2\lambda_1 + (n-2)\tau]n\tau}{n}\left[\lambda_2^2 + (n-1)\tau^2 \right] \eta -c_4^2 n^{-1} \tau^2\eta \\
    &\quad\leq 2\frac{\lambda_n^2 - [2\lambda_1 + (n-2)\tau]n\tau}{n} - 2c_4 n^{-1/2}\tau\\
    \Longleftarrow& \eta \leq 2\frac{1- c_4 \sqrt{n}\tau/[\lambda_n^2 - [2\lambda_1 + (n-2)\tau]n\tau]}{\lambda_2^2 + (n-1)\tau^2 -c_4^2 \tau^2/[\lambda_n^2 - [2\lambda_1 + (n-2)\tau]n\tau]}\\
    \Longleftrightarrow& \eta \leq \frac{ 2}{\lambda_2^2  + \frac{(n-1)\tau^2 -c_4^2 \tau^2/[\lambda_n^2 - [2\lambda_1 + (n-2)\tau]n\tau]+ \lambda_2^2 [c_4 \sqrt{n}\tau/[\lambda_n^2 - [2\lambda_1 + (n-2)\tau]n\tau]]}{1- c_4 \sqrt{n}\tau/[\lambda_n^2 - [2\lambda_1 + (n-2)\tau]n\tau]}}
\end{align*}
which is true by our choice of $\eta$.
\begin{align*}
    &q_{1} - \xi > 1\\
    \Longleftrightarrow & \frac{n-1}{n} + \frac{1}{n}|1 - \eta\|P_1 \bm{K}_1\|_2^2| - c_4 \eta n^{-1/2}\tau >  1\\
    \Longleftrightarrow&1 + c_4 \eta \sqrt{n}\tau < |1 - \eta\|P_1 \bm{K}_1\|_2^2|\\
    \Longleftarrow& \eta\|P_1 \bm{K}_1\|_2^2 - 1 > 1 + c_4 \eta \sqrt{n}\tau\\
    \Longleftarrow&\eta > \frac{2}{\|P_1 \bm{K}_1\|_2^2 - c_4 \sqrt{n}\tau}\\
    \stackrel{\eqref{ch2:eq:lemB3_eq4}}{\Longleftarrow}&\eta > \frac{2}{\lambda_1^2 - c_3^2 [2\lambda_1 + (n-2)\tau]^2 n\tau^2 - c_4 \sqrt{n}\tau}
\end{align*}
which is true by our lower bound on $\eta$.
\begin{align*}
    &q_1 q_{-1} > \xi^2\\
    \Longleftarrow & q_{-1}^2 \geq \xi\\
    \Longleftrightarrow& 1 + \frac{\| K P_{-1} \bm{b}_t\|_2^2}{n\|P_{-1} \bm{b}_t\|_2^2} \left[\eta^2(\lambda_2^2 + (n-1)\tau^2) - 2\eta\right] > c_4 \eta n^{-1/2} \tau\\
    \Longleftarrow & 1 -2 \eta \frac{\lambda_2^2 + (2\lambda_1 +(n-1)\tau) n\tau }{n}  > c_4 \eta n^{-1/2} \tau\\
    \Longleftrightarrow& \eta < \frac{n}{2\lambda_2^2 + 2(2\lambda_1 +(n-1)\tau) n\tau + c_4 \sqrt{n} \tau}
\end{align*}
which is true.\\
\\
\textbf{For inequality \eqref{ch2:eq:lemc2_eq5_b}:} It suffices to take 
\begin{align*}
    k_1 = \frac{\log(\epsilon \beta/(2\|\bm{b}_0\|_2))}{\log(\rho_{-1})} = \mathcal{O}(\log\frac{1}{\epsilon\beta})
\end{align*}
\\
\\
\textbf{For inequality \eqref{ch2:eq:lemc2_eq5_c}:} We just need to show $\sin\theta < (\rho_{-1}/\rho_1)^{k_1}$. Calculate that $\xi / (q_1 - q_{-1}) = \frac{\cos\theta \sin\theta}{\cos^2\theta - \sin^2\theta}$, then 
\begin{align*}
    &\sin\theta < (\rho_{-1}/\rho_1)^{k_1}\\
    \Longleftarrow&\xi / (q_1 - q_{-1}) < 0.9 (\rho_{-1}/\rho_1)^{k_1}\\
    \Longleftarrow& \xi < 0.9(q_1 - q_{-1})(\frac{q_{-1} - \xi}{q_1 + \xi})^{k_1}\\
    \Longleftarrow& \xi < 0.9(q_1 - q_{-1})(\epsilon \beta/(2\|\bm{b}_0\|_2))^{ 1 - \frac{\log(q_1 + \xi)}{\log(q_{-1}-\xi)}}\\
    \Longleftarrow & \xi < (q_1 - 1)poly(\epsilon\beta)\\
    \Longleftarrow & \sqrt{n}\tau \leq poly(\epsilon\beta)
\end{align*}
\\
\\
\textbf{For inequality \eqref{ch2:eq:lemc2_eq5_d}:} Suffice to show 
\begin{align*}
    &\xi < \frac{(q_1 - 1)\beta_0}{B_0 + \epsilon\beta_0/2}\\
    \Longleftarrow &\sqrt{n}\tau \leq\mathcal{O}(1)
\end{align*}
\end{proof}

\begin{lemma}[Long run behavior of SGD with small step size]\label{ch2:lem:E3}
Under the same notations and assumptions as Lemma \ref{ch2:lem:E2} unless otherwise specified. Consider another $k_2 - k_1$ steps of SGD update with step size 
$$\eta' < \frac{1}{\lambda_1^2 + c_7 \sqrt{n}\tau}$$
where the constant $c_7\geq \sqrt{n}\tau  +c_4$.
Then we have for $k > k_1$:
\begin{itemize}
    \item $B_{k} \leq \epsilon\beta$
    \item $A_k \leq \left\{\begin{array}{ll}
        q A_{k-1} &, A_{k-1}>\beta  \\
        \beta &, A_{k-1}<\beta
    \end{array}\right.$ 
\end{itemize} where $q:= q_1(\eta') + \xi(\eta')\epsilon<1$.
\end{lemma}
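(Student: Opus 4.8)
The plan is to prove the two conclusions simultaneously by induction on $k\ge k_1$, carrying the joint invariant that $B_k\le\epsilon\beta$ and $A_k\le\bar A:=\max\{A_{k_1},\beta\}$; the refined recursion for $A_k$ will drop out of the inductive step. The base case $k=k_1$ is precisely the first bullet of Lemma \ref{ch2:lem:E2} ($B_{k_1}\le\epsilon\beta$), together with $A_{k_1}\le\bar A$ trivially. All estimates use the one-step bounds \eqref{ch2:eq:lemc1_1} and \eqref{ch2:eq:lemc1_3} of Lemma \ref{ch2:lem:E1} at step size $\eta=\eta'$, and I write $q_1=q_1(\eta')$, $\xi=\xi(\eta')$, and $\bar q_{-1}$ for a uniform-in-$\bm b_t$ upper bound on $q_{-1}(\eta')$.

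First I would settle the three coefficients at the small step size. Since $\eta'<1/(\lambda_1^2+c_7\sqrt n\tau)$ and, by \eqref{ch2:eq:lemC1eq1}, $\|P_1\bm K_1\|_2^2\le\|\bm K_1\|_2^2\le\lambda_1^2+(n-1)\tau^2$, the choice $c_7\ge\sqrt n\tau+c_4$ gives $\eta'\|P_1\bm K_1\|_2^2<1$, hence $q_1=\tfrac{n-1}{n}+\tfrac1n(1-\eta'\|P_1\bm K_1\|_2^2)=1-\tfrac{\eta'}{n}\|P_1\bm K_1\|_2^2<1$. Also $\eta'<1/\lambda_1^2\le 2/(\lambda_2^2+(n-1)\tau^2)$ (using $\lambda_1\ge\lambda_2$ and $n\tau\ll\lambda_n$), so $(\eta')^2(\lambda_2^2+(n-1)\tau^2)-2\eta'<0$; feeding the lower bound $\|KP_{-1}\bm b_t\|_2^2/\|P_{-1}\bm b_t\|_2^2> r_{\min}:=\lambda_n^2-[2\lambda_1+(n-2)\tau]n\tau>0$ from Lemma \ref{ch2:lem:C4} into the definition of $q_{-1}$ yields the $t$-independent bound $\bar q_{-1}:=\sqrt{1-\tfrac{r_{\min}}{n}\big(2\eta'-(\eta')^2(\lambda_2^2+(n-1)\tau^2)\big)}<1$. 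Finally, $q=q_1+\epsilon\xi=1-\tfrac{\eta'}{n}\big(\|P_1\bm K_1\|_2^2-\epsilon c_4\sqrt n\tau\big)$, which is $<1$ because $\|P_1\bm K_1\|_2^2\ge\lambda_1^2-c_3^2[2\lambda_1+(n-2)\tau]^2n\tau^2$ by \eqref{ch2:eq:lemB3_eq4} stays bounded away from $\epsilon c_4\sqrt n\tau$ for small $\sqrt n\tau$.

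For the inductive step, suppose the invariant holds at $k-1$. Substituting $B_{k-1}\le\epsilon\beta$ into \eqref{ch2:eq:lemc1_1} gives $A_k\le q_1A_{k-1}+\xi\epsilon\beta$: if $A_{k-1}>\beta$ then $\xi\epsilon\beta<\xi\epsilon A_{k-1}$, so $A_k\le(q_1+\xi\epsilon)A_{k-1}=qA_{k-1}\le A_{k-1}\le\bar A$; if $A_{k-1}\le\beta$ then $A_k\le q_1\beta+\xi\epsilon\beta=q\beta\le\beta$. This is exactly the claimed recursion and it preserves $A_k\le\bar A$. For $B_k$, inequality \eqref{ch2:eq:lemc1_3} with the invariant gives $B_k\le\bar q_{-1}\,\epsilon\beta+\xi\bar A$, so it suffices to verify $\xi\bar A\le(1-\bar q_{-1})\epsilon\beta$. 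Using $1-\sqrt{1-x}\ge x/2$ and $2-\eta'(\lambda_2^2+(n-1)\tau^2)\ge1$ one gets $1-\bar q_{-1}\ge r_{\min}\eta'/(2n)$, while $\xi=c_4\eta'\tau/\sqrt n$, so after cancelling $\eta'$ the inequality reduces to $\sqrt n\,\tau\,\bar A\le\tfrac{r_{\min}}{2c_4}\,\epsilon\beta$, which holds once $\sqrt n\tau\le poly(\epsilon\beta)$ with a small enough polynomial.

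I expect the last balancing step to be the only delicate point: propagating $B_k\le\epsilon\beta$ forward requires controlling $\xi\bar A$ against $(1-\bar q_{-1})\epsilon\beta$, and $\bar A=\max\{\|\bm b_0\|_2\rho_1^{k_1}+\epsilon\beta/2,\ \beta\}$ is itself only polynomially small in $1/(\epsilon\beta)$ (since $k_1=\mathcal{O}(\log\frac1{\epsilon\beta})$ and $\rho_1>1$), so one must make sure the smallness hypothesis $\sqrt n\tau\le poly(\epsilon\beta)$ inherited from Lemma \ref{ch2:lem:E2} is strong enough to absorb this factor; everything else is bookkeeping with $c_4,c_7$ and the spectral estimates of Lemmas \ref{ch2:lem:C1}, \ref{ch2:lem:C3}, and \ref{ch2:lem:C4}.
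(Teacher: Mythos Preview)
Your proposal is correct and follows the same inductive scheme as the paper: carry the joint invariant $B_k\le\epsilon\beta$ and $A_k\le\bar A$ (the paper writes $\bar A$ as $B=\|\bm b_0\|_2\,\rho_1^{k_1}+\epsilon\beta/2$, which coincides with your $\bar A$ since Lemma~\ref{ch2:lem:E2} gives $A_{k_1}>\beta_0\ge\beta$), and close the $B_k$-step via a balancing inequality that reduces to $\sqrt n\,\tau\le poly(\epsilon\beta)$. Your balancing inequality $\xi\bar A\le(1-\bar q_{-1})\epsilon\beta$ is in fact the one actually needed to close $B_k\le\bar q_{-1}\epsilon\beta+\xi\bar A\le\epsilon\beta$; the paper instead verifies $\xi' B\le(1-q_1')\epsilon\beta$, which only suffices if $q_{-1}'\le q_1'$, an inequality that need not hold at small $\eta'$ (to leading order $1-q_1'\sim\eta'\lambda_1^2/n$ while $1-q_{-1}'\sim\eta'\lambda_n^2/n$). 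This is a minor slip in the paper that your version cleanly avoids; the overall argument and the required smallness of $\tau$ are identical.
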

\begin{proof}
Denote $q_1' = q_1(\eta'), q_{-1}' = q_{-1}(\eta'), \xi' = \xi(\eta')$, then $q =q_1' + \xi'\epsilon $, denote $B = \|\bm{b}_0\|_2 *  \rho_1^{k_1} + \epsilon\beta/2$. We have by proof of Lemma \ref{ch2:lem:E2} that $q_{-1}' + \xi < 1$. We claim the following holds:
\begin{subequations}\label{ch2:eq:lemc3_eq1}
\begin{align}
&q < 1\label{ch2:eq:lemc3_eq1_a}\\
&\xi'B \leq (1 - q_1')\epsilon\beta\label{ch2:eq:lemc3_eq1_b}
\end{align}
\end{subequations}
\textbf{Check inequality} \eqref{ch2:eq:lemc3_eq1_a}:
\begin{align*}
    &q_1' + \xi'\epsilon < 1\\
    \Longleftarrow & \frac{n-1}{n} + \frac{1}{n}|1 - \eta'\|P_1 \bm{K}_1\|_2^2| + c_4 \eta'  n^{-1/2} \tau < 1\\
    \Longleftrightarrow &|1 - \eta'\|P_1 \bm{K}_1\|_2^2| < 1 - c_4 \eta'  \sqrt{n} \tau\\
    \Longleftrightarrow &c_4 \eta'  \sqrt{n} \tau  <  \eta'\|P_1 \bm{K}_1\|_2^2 < 2 - c_4 \eta'  \sqrt{n} \tau\\
    \Longleftrightarrow &\left\{\begin{array}{l}
    c_4  \sqrt{n} \tau  <  \|P_1 \bm{K}_1\|_2^2\\
     \eta' < \frac{2}{\|P_1 \bm{K}_1\|_2^2 +c_4 \sqrt{n} \tau }
    \end{array}\right.\\
    \Longleftarrow &\left\{\begin{array}{l}
    \sqrt{n} \tau  < \mathcal{O}(1)\\
     \eta' < \frac{2}{\lambda_1^2 + n\tau^2 +c_4 \sqrt{n} \tau }
    \end{array}\right.
\end{align*}
which are true by assumption.\\
\\
\textbf{Check inequality} \eqref{ch2:eq:lemc3_eq1_b}:
\begin{align*}
    &\xi'B \leq (1 - q_1')\epsilon \beta\\
    \Longleftrightarrow& c_4 \eta'  n^{-1/2} \tau (\|\bm{b}_0\|_2 *  \rho_1^{k_1} + \epsilon\beta/2)\leq (1 - \frac{n-1}{n} - \frac{1}{n}(1 - \eta'\|P_1\bm{K}_1\|_2^2)) \epsilon\beta\\
    \Longleftrightarrow& c_4 \eta'  \sqrt{n} \tau (\|\bm{b}_0\|_2 *  \exp(k_1)^{\log\rho_1} + \epsilon\beta/2)\leq  \eta'\|P_1\bm{K}_1\|_2^2 \epsilon\beta\\
    \Longleftarrow & \sqrt{n}\tau \leq poly(\epsilon\beta).
\end{align*}
With \eqref{ch2:eq:lemc3_eq1} we can prove the lemma by mathematical induction. Suppose $B_{k-1} \leq \epsilon\beta$ and $A_{k-1} \leq A_{k_1} \leq B$, then check
\begin{align*}
    B_{k} &\leq \xi' A_{k-1} + q_{-1}' B_{k-1}\\
    &\leq \xi' B + q_{-1}' \epsilon\beta\\
    &\stackrel{\eqref{ch2:eq:lemc3_eq1_b}}{\leq}\epsilon\beta
\end{align*}
and 
\begin{align*}
    A_{k} &\leq q_{1}' A_{k-1} + \xi' B_{k-1}\\
    &\leq q_{1}' A_{k-1} + \xi' \epsilon\beta\\
    &\leq (q_{1}' + \xi' \epsilon)\max\{A_{k-1},\beta\}\\
    &\leq \left\{\begin{array}{ll}
        q A_{k-1} &, A_{k-1}>\beta  \\
        q\beta < \beta &, A_{k-1}<\beta
    \end{array}\right..
\end{align*}
\end{proof}

We recap Theorem \ref{ch2:thm:informal1} using our notations in previous lemmas as follows: 
\begin{theorem}[Directional bias of the two-stage SGD]\label{ch2:thm:E4}
Use the two stage SGD scheme as defined in Lemma \ref{ch2:lem:E2} and \ref{ch2:lem:E3}. Assume $n\tau < poly(\epsilon)$, then there exists $k_1 = \mathcal{O}(\log\frac{1}{\epsilon})$ and $k_2$ such that
$$(1-2\epsilon)\gamma_1\leq \frac{E[\|K \bm{b}_{k_2}\|_2]}{E[\|\bm{b}_{k_2}\|_2]} \leq \gamma_1$$ 
where $\gamma_1$ is the largest eigenvalue of $K$.
\end{theorem}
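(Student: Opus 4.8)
The plan is to run Lemmas \ref{ch2:lem:E2} and \ref{ch2:lem:E3}, which control $A_t := E[\|P_1\bm{b}_t\|_2]$ and $B_t := E[\|P_{-1}\bm{b}_t\|_2]$, and then convert the resulting bounds on $(A_{k_2},B_{k_2})$ into the claimed Rayleigh–quotient estimate. Fix the absolute constants $\beta_0 := A_0$ (positive since $\bm{b}_0$ is away from $0$) and $\beta := \beta_0/2$. Lemma \ref{ch2:lem:E2} supplies $k_1 = \mathcal{O}(\log\frac{1}{\epsilon})$ with $B_{k_1}\le\epsilon\beta$ and $A_k>\beta_0=2\beta$ for all $k\le k_1$. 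In the second stage, Lemma \ref{ch2:lem:E3} keeps $B_k\le\epsilon\beta$ while contracting $A_k$ by a factor $q<1$ whenever $A_{k-1}>\beta$; since $A_{k_1}>2\beta$ and $q<1$, the sequence $A_k$ must drop to $2\beta$ in finitely many steps, so I take $k_2$ to be the first index $k>k_1$ with $A_k\le 2\beta$. The one–step lower bound $A_{k_2}\ge q_1(\eta')A_{k_2-1}-\xi(\eta')B_{k_2-1}$ of Lemma \ref{ch2:lem:E1}, combined with $A_{k_2-1}>2\beta$ and with $q_1(\eta')$ being close to $1$ in the small–step regime (there $\eta'\|P_1\bm{K}_1\|_2^2<1$, so $q_1(\eta')=1-\eta'\|P_1\bm{K}_1\|_2^2/n$), gives $A_{k_2}>\beta$. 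Hence $A_{k_2}\in(\beta,2\beta]$, and in particular $B_{k_2}\le\epsilon\beta\le\epsilon A_{k_2}$.

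The second ingredient is a deterministic fact about the range of $P_1$. Because $P_{-1}$ projects onto the column space of $K_{-1}$ and $K$ is full rank, $P_1=I-P_{-1}$ has rank one, with range spanned by $P_1\bm{K}_1$. Thus $P_1\bm{b}_{k_2}$ is a scalar multiple of $P_1\bm{K}_1$, and along that line $K$ acts with the \emph{non-random} gain
\[
r \;:=\; \frac{\|KP_1\bm{K}_1\|_2}{\|P_1\bm{K}_1\|_2},
\qquad\text{so}\qquad \|KP_1\bm{b}_{k_2}\|_2 = r\,\|P_1\bm{b}_{k_2}\|_2 .
\]
I would lower-bound $r$ using the spectral lemmas of the preceding section: $\|K\bm{K}_1\|_2^2=\sum_i(\bm{K}_i^T\bm{K}_1)^2\ge\lambda_1^4$ by \eqref{ch2:eq:lemC1eq1}, $\|KP_{-1}\bm{K}_1\|_2\le\gamma_1\|P_{-1}\bm{K}_1\|_2=\mathcal{O}(\mathrm{poly}(n)\,\tau)$ by \eqref{ch2:eq:lemB3_eq3} and Lemma \ref{ch2:lem:C2}, and $\|P_1\bm{K}_1\|_2\le\lambda_1+\sqrt{n}\tau$ by \eqref{ch2:eq:lemB3_eq4}; together with $\gamma_1\in[\lambda_1,\lambda_1+n\tau]$ (Lemma \ref{ch2:lem:C2}) this yields $r\ge(1-\delta_\tau)\gamma_1$ with $\delta_\tau=\mathcal{O}(\mathrm{poly}(n)\,\tau)$, hence $\delta_\tau\le 2\epsilon^2$ once $n\tau<\mathrm{poly}(\epsilon)$ for a suitable polynomial.

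Assembling the pieces: taking expectations, $E[\|\bm{b}_{k_2}\|_2]\le A_{k_2}+B_{k_2}\le(1+\epsilon)A_{k_2}$, while the triangle inequality $K\bm{b}=KP_1\bm{b}+KP_{-1}\bm{b}$ together with $\|KP_{-1}\bm{b}_{k_2}\|_2\le\gamma_1\|P_{-1}\bm{b}_{k_2}\|_2$ pathwise gives
\[
E[\|K\bm{b}_{k_2}\|_2]\;\ge\; E[\|KP_1\bm{b}_{k_2}\|_2]-E[\|KP_{-1}\bm{b}_{k_2}\|_2]\;\ge\; rA_{k_2}-\gamma_1 B_{k_2}\;\ge\;(1-\delta_\tau-\epsilon)\gamma_1 A_{k_2}.
\]
Dividing, $E[\|K\bm{b}_{k_2}\|_2]/E[\|\bm{b}_{k_2}\|_2]\ge\frac{1-\delta_\tau-\epsilon}{1+\epsilon}\gamma_1\ge(1-2\epsilon)\gamma_1$, where the last inequality uses $\delta_\tau\le2\epsilon^2$. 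The matching upper bound is immediate from $\|K\bm{v}\|_2\le\|K\|_2\|\bm{v}\|_2=\gamma_1\|\bm{v}\|_2$ pathwise, hence $E[\|K\bm{b}_{k_2}\|_2]\le\gamma_1 E[\|\bm{b}_{k_2}\|_2]$.

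I expect the main obstacle to be the bookkeeping in the first paragraph: producing a stopping index $k_2$ at which $A_{k_2}$ is pinned between two constant multiples of $\beta$, so that the absolute bound $B_{k_2}\le\epsilon\beta$ upgrades to the \emph{relative} bound $B_{k_2}\le\epsilon A_{k_2}$, and then checking that all the $\mathcal{O}(\mathrm{poly}(n)\,\tau)$ error terms — those internal to Lemmas \ref{ch2:lem:E2} and \ref{ch2:lem:E3} and those arising in the estimate of $r$ — are simultaneously absorbed by the single hypothesis $n\tau<\mathrm{poly}(\epsilon)$. By contrast, the passage from $(A_{k_2},B_{k_2})$ to the Rayleigh quotient is only the rank-one structure of $P_1$ plus a triangle inequality.
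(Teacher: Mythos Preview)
Your proposal is correct and follows essentially the same route as the paper: run Lemmas~\ref{ch2:lem:E2}--\ref{ch2:lem:E3} to pin down a stopping time $k_2$ at which $B_{k_2}\le\epsilon A_{k_2}$, then exploit the rank-one structure of $P_1$ together with triangle inequalities and the $\mathcal{O}(n\tau)$ spectral estimates of Lemmas~\ref{ch2:lem:C2}--\ref{ch2:lem:C3} to pass to the Rayleigh quotient. The only cosmetic differences are that the paper takes $\beta=\beta_0$ and stops at the \emph{last} $k_2$ with $A_{k_2}\ge\beta_0$ (so $A_{k_2}\ge\beta_0$ is immediate, sparing your one-step lower-bound detour via Lemma~\ref{ch2:lem:E1}), and that the paper expands $\|K\bm b_{k_2}\|_2^2$ directly rather than isolating your deterministic gain $r=\|KP_1\bm K_1\|_2/\|P_1\bm K_1\|_2$; your packaging via $r$ is arguably cleaner.
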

\begin{proof}
In Lemma \ref{ch2:lem:E2} let $\beta = \beta_0$, then for $k_1 = \mathcal{O}(\log\frac{1}{\epsilon})$ we have $B_{k_1} \leq \epsilon\beta_0$. For the 2nd stage, by Lemma \ref{ch2:lem:E3} we can early stop at $k_2$ such that $A_{k_2} \geq \beta_0$ and $A_{k_2 + 1} < \beta_0$. We then have
\begin{align*}
    B_{k_2}\leq  \epsilon\beta_0 \leq \epsilon A_{k_2}
\end{align*}
Then we check
\begin{align*}
    &\frac{E\|K \bm{b}_{k_2}\|_2}{E\|\bm{b}_{k_2}\|_2}\\
    =& \frac{E\sqrt{\|K P_{-1} \bm{b}_{k_2}\|_2^2 + \|K P_{1} \bm{b}_{k_2}\|_2^2 + 2\langle \bm{K}_1^T P_{-1} \bm{b}_{k_2},\bm{K}_1^T P_{1} \bm{b}_{k_2}\rangle}}{E\|\bm{b}_{k_2}\|_2}\\
    \geq & \frac{E\sqrt{\|K P_{1} \bm{b}_{k_2}\|_2^2 -2 \|P_{-1}\bm{K}_1\|_2 \|P_{1}\bm{K}_1\|_2 \| \bm{b}_{k_2}\|_2^2}}{E\|\bm{b}_{k_2}\|_2}\\
    \geq & \frac{E\sqrt{\|K P_{1} \bm{b}_{k_2}\|_2^2} -E\sqrt{2\|P_{-1}\bm{K}_1\|_2 \|P_{1}\bm{K}_1\|_2 \| \bm{b}_{k_2}\|_2^2}}{E\|\bm{b}_{k_2}\|_2}\\
    = & \frac{E\|\bm{K}_1^T P_{1} \bm{b}_{k_2}\|_2 -\sqrt{2\|P_{-1}\bm{K}_1\|_2 \|P_{1}\bm{K}_1\|_2}E \| \bm{b}_{k_2}\|_2}{E\|\bm{b}_{k_2}\|_2}\\
    \stackrel{\eqref{ch2:eq:lemB3_eq3},\eqref{ch2:eq:lemB3_eq4}}{\geq} & \sqrt{\lambda_1^2 - c_3^2 [2\lambda_1 + (n-2)\tau]^2 n\tau^2 }\frac{E\| P_{1} \bm{b}_{k_2}\|_2}{E\|P_1 \bm{b}_{k_2}\|_2 + E\|P_{-1} \bm{b}_{k_2}\|_2}\\
    &\quad - \sqrt{2 (\lambda_1 + \sqrt{n} \tau)(c_3  (2\lambda_1 + (n-2)\tau) \sqrt{n}\tau)}\\
    \geq &\sqrt{\lambda_1^2 - c_3^2 [2\lambda_1 + (n-2)\tau]^2 n\tau^2 }\frac{\beta_0}{\epsilon\beta_0 + \beta_0} - \sqrt{2 (\lambda_1 + \sqrt{n} \tau)(c_3  (2\lambda_1 + (n-2)\tau) \sqrt{n}\tau)}\\
    \stackrel{\eqref{ch2:eq:lemB2_eq1}}{\geq} &(\gamma_1 -  n\tau - c_3(2\lambda_1 + (n-2)\tau) \sqrt{n}\tau )(1 - \epsilon) -\sqrt{2 (\lambda_1 + \sqrt{n} \tau)(c_3  (2\lambda_1 + (n-2)\tau) \sqrt{n}\tau)}\\
    \geq& \gamma_1 (1 - \epsilon) -\gamma_1\epsilon\quad (\text{By } n\tau < poly(\epsilon))\\
    =& \gamma_1 (1 - 2\epsilon)
\end{align*}
And the upper bound in the theorem is by definition of $\gamma_1$. 
\end{proof}

\section{Directional bias of GD with moderate or small step size}\label{ch2:app:F}
This section includes the proof of Theorem \ref{ch2:thm:informal2}. We first rewrite the GD updates as linear combination of eigenvectors. Then the theorem is proved using the transformed variables and finally transformed back to original parameters.

\textbf{The directional bias of GD does not require diagonal dominant Gram matrix.}

\textbf{Reloading notations}
Denote the eigen decomposition of $K$:
\begin{align*}
    K = G \Gamma G^T, \Gamma = diag(\gamma_1, \ldots,\gamma_n), G = [\bm{g}_1,\ldots,\bm{g}_n]
\end{align*}
where the eigenvectors $\bm{g}_i$'s are orthogonal. The GD update as
\begin{align*}
    \bm{\alpha}_{t+1} = \bm{\alpha}_t - \frac{\eta}{n} K (K \bm{\alpha}_t - \bm{y})
\end{align*}
Denote $\bm{w}_t := G^T(\bm{\alpha}_t - \hat{\bm{\alpha}})$, we can rewrite GD update in $\bm{w}_t$:
\begin{align*}
    \bm{w}_{t + 1} = \bm{w}_t - \frac{\eta}{n}\Gamma^2 \bm{w}_t = ( I - \frac{\eta}{n}\Gamma^2) \bm{w}_t
\end{align*}
We recap Theorem \ref{ch2:thm:informal2} to make reading easier as follows:
\begin{theorem}[Direction bias of GD]\label{ch2:thm:D1}
Assume $\bm{\alpha}_0$ is away from $0$, $ \lambda_n + 2 n \tau  < \lambda_{n-1}$, GD with step size:
\begin{align*}
   \eta< \frac{n}{(\lambda_1 + n\tau)^2}
\end{align*}
For a small $\epsilon > 0$, take $k = \mathcal{O}(\log \frac{1}{\epsilon})$, we have
\begin{align*}
    \gamma_n\leq \frac{\|K (\bm{\alpha}_k - \hat{\bm{\alpha}})\|_2}{\|\bm{\alpha}_k - \hat{\bm{\alpha}}\|_2}\leq \sqrt{1 + \epsilon}\gamma_n
\end{align*}
\end{theorem}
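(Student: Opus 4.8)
The plan is to diagonalize and reduce the GD dynamics to $n$ decoupled scalar recursions, then read off the limiting direction. Using the reloaded variables $\bm{w}_t = G^T(\bm{\alpha}_t-\hat{\bm{\alpha}})$ and the recursion $\bm{w}_{t+1} = (I-\tfrac{\eta}{n}\Gamma^2)\bm{w}_t$, I would first note that coordinatewise $w_{t,i} = (1-\tfrac{\eta}{n}\gamma_i^2)^t\, w_{0,i}$. The step-size bound $\eta < n/(\lambda_1+n\tau)^2$ together with $\gamma_1 \le \lambda_1+n\tau$ from Lemma \ref{ch2:lem:C2} gives $0 < 1-\tfrac{\eta}{n}\gamma_i^2 < 1$ for every $i$, so every coordinate contracts, and the slowest-contracting one is $i=n$ since $\gamma_n$ is the smallest eigenvalue. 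The hypothesis $\lambda_n+2n\tau < \lambda_{n-1}$ plugged into the last part of Lemma \ref{ch2:lem:C2} yields the strict gap $\gamma_{n-1} > \gamma_n$, hence $r := (1-\tfrac{\eta}{n}\gamma_{n-1}^2)/(1-\tfrac{\eta}{n}\gamma_n^2) \in (0,1)$, which is the geometric rate at which every off-direction component shrinks relative to the $\bm{g}_n$-component: $w_{k,i}^2/w_{k,n}^2 \le r^{2k}\,w_{0,i}^2/w_{0,n}^2$ for $i<n$.

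Next, since $G$ is orthogonal the Rayleigh quotient transforms cleanly: $\|K\bm{b}_k\|_2^2/\|\bm{b}_k\|_2^2 = \|\Gamma\bm{w}_k\|_2^2/\|\bm{w}_k\|_2^2 = \sum_i \gamma_i^2 w_{k,i}^2 \big/ \sum_i w_{k,i}^2$. The lower bound $\ge \gamma_n^2$ is immediate from $\gamma_i \ge \gamma_n$. For the upper bound I would isolate the $i=n$ term and bound the denominator below by $w_{k,n}^2$, obtaining
\[
\frac{\sum_i \gamma_i^2 w_{k,i}^2}{\sum_i w_{k,i}^2} \;\le\; \gamma_n^2 + \frac{\sum_{i<n}(\gamma_i^2-\gamma_n^2)\,w_{k,i}^2}{w_{k,n}^2} \;\le\; \gamma_n^2 + (\gamma_1^2-\gamma_n^2)\,r^{2k}\,\frac{\|\bm{b}_0\|_2^2 - w_{0,n}^2}{w_{0,n}^2}.
\]
Choosing $k$ large enough that the second term is at most $\epsilon\gamma_n^2$ needs only $k = \mathcal{O}(\log\tfrac1\epsilon)$, since the remaining factors are fixed constants once the data and $\eta$ are fixed; taking square roots gives $\gamma_n \le \|K\bm{b}_k\|_2/\|\bm{b}_k\|_2 \le \sqrt{1+\epsilon}\,\gamma_n$.

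The one delicate point is that the displayed bound is vacuous unless $w_{0,n} = \langle \bm{g}_n,\, \bm{\alpha}_0-\hat{\bm{\alpha}}\rangle \ne 0$; this is precisely what the hypothesis ``$\bm{\alpha}_0$ is away from $0$'' is meant to ensure (for the usual zero initialization $\bm{b}_0 = -K^{-1}\bm{y}$, whose $\bm{g}_n$-component is nonzero for generic $\bm{y}$), so I would state it as a mild genericity condition and carry $1/w_{0,n}^2$ as a constant inside the logarithm. Everything else is routine constant-tracking; in particular, diagonal dominance of $K$ is used only (i) to keep $\eta$ small enough for convergence and (ii) to supply the spectral gap via Lemma \ref{ch2:lem:C2}, matching the remark in the text that the GD directional bias does not genuinely require the diagonally dominant structure. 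I do not expect any real obstacle here beyond the $w_{0,n}\neq 0$ caveat and the bookkeeping of the logarithmic step count.
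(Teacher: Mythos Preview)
Your proposal is correct and follows essentially the same route as the paper's proof: diagonalize via $\bm{w}_t=G^T\bm{b}_t$, use the coordinatewise geometric decay $w_{k,i}=(1-\tfrac{\eta}{n}\gamma_i^2)^k w_{0,i}$, invoke Lemma~\ref{ch2:lem:C2} both for the contraction $0<1-\tfrac{\eta}{n}\gamma_i^2<1$ and for the spectral gap $\gamma_{n-1}>\gamma_n$, and then bound the Rayleigh quotient by isolating the $i=n$ term. The only cosmetic difference is that the paper bounds the residual numerator by $\gamma_1^2$ rather than your slightly sharper $\gamma_1^2-\gamma_n^2$, and your explicit discussion of the $w_{0,n}\neq 0$ caveat is a point the paper leaves implicit.
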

\begin{proof}
For $i = 1,\ldots,n$, we have
\begin{align*}
    \begin{split}
        w_{k}^{(i)} = (1 - \eta\gamma_i^2/n)^{k} w_{0}^{(i)}
    \end{split}
\end{align*}
Denote $q_i = 1 - \eta\gamma_i^2/n$, then $0< q_1 \leq \ldots \leq q_n < 1$ since 
\begin{align*}
    0<\eta< \frac{n}{(\lambda_1 +  n\tau)^2} \stackrel{\eqref{ch2:eq:lemB2_eq1}}{\leq} \frac{n}{\gamma_1^2} \leq \frac{n}{\gamma_i^2}
\end{align*}
Since $ \lambda_n +  n \tau  < \lambda_{n-1} - n\tau$, we have $\gamma_n < \gamma_{n-1}$ by lemma \ref{ch2:lem:B2}, it follows that $q_n > q_{n-1}$. Denote $q = q_{n-1}/q_n < 1$, then 
\begin{align*}
    &\frac{\sum_{i = 1}^{n-1} (w_k^{(i)})^2}{(w_k^{(n)})^2}\\
    =&\frac{\sum_{i = 1}^{n-1}q_i^{2k} (w_0^{(i)})^2}{q_n^{2k}(w_0^{(n)})^2}\\
    \leq &\frac{\sum_{i = 1}^{n-1}q_{n-1}^{2k} (w_0^{(i)})^2}{q_n^{2k}(w_0^{(n)})^2}\\
    =&q^{2k}\frac{\sum_{i = 1}^{n-1} (w_0^{(i)})^2}{(w_0^{(n)})^2}
\end{align*}
Let $q^{2k} \leq \frac{\gamma_n^2 \epsilon(w_0^{(n)})^2}{\gamma_1^2 \sum_{i = 1}^{n-1} (w_0^{(i)})^2} \Longleftrightarrow k \geq \frac{1}{2} \frac{\log \frac{\gamma_n^2 \epsilon(w_0^{(n)})^2}{\gamma_1^2 \sum_{i = 1}^{n-1} (w_0^{(i)})^2}}{\log q} = \mathcal{O}(\log \frac{1}{\epsilon})$, we have
\begin{align*}
    \frac{\sum_{i = 1}^{n-1} (w_k^{(i)})^2}{(w_k^{(n)})^2}\leq \frac{\gamma_n^2\epsilon}{\gamma_1^2}
\end{align*}
Thus 
\begin{align*}
\frac{\|K (\bm{\alpha}_k - \hat{\bm{\alpha}})\|_2^2}{\|\bm{\alpha}_k - \hat{\bm{\alpha}}\|_2^2} &= \frac{\|\Gamma \bm{w}_k\|_2^2}{\|\bm{w}_k\|_2^2}\\
&=\frac{\sum_{i = 1}^{n} (w_k^{(i)})^2\gamma_i^2}{\sum_{i = 1}^{n} (w_k^{(i)})^2}\\
&=\frac{ (w_k^{(n)})^2\gamma_n^2}{\sum_{i = 1}^{n} (w_k^{(i)})^2} + \frac{\sum_{i = 1}^{n-1} (w_k^{(i)})^2\gamma_i^2}{\sum_{i = 1}^{n} (w_k^{(i)})^2}\\
&\leq\gamma_n^2 + \frac{\sum_{i = 1}^{n-1} (w_k^{(i)})^2}{\sum_{i = 1}^{n} (w_k^{(i)})^2}\gamma_1^2\\
&\leq \gamma_n^2 + \gamma_1^2\frac{\gamma_n^2}{\gamma_1^2}\epsilon = \gamma_n^2(1+\epsilon)
\end{align*}
thus $$\frac{\|K (\bm{\alpha}_k - \hat{\bm{\alpha}})\|_2}{\|\bm{\alpha}_k - \hat{\bm{\alpha}}\|_2}\leq\sqrt{\gamma_n^2(1+\epsilon)}$$
The lower bound of the theorem holds by definition of $\gamma_n$.
\end{proof}

\section{Effect of directional bias}\label{ch2:app:G}
In this section, we provide the proof for theorems in Section \ref{ch2:sec:3.2}. There are two theorems there, so we split this section into two subsections. Subsection \ref{ch2:app:G1} proves Theorem \ref{ch2:thm:quad_loss}. For a general problem setting of squared error minimization, the Theorem gives a straightforward understanding for why directional bias towards the largest eigenvalue of the Hessian is good for generalization. Section \ref{ch2:app:G2} proves Theorem \ref{ch2:thm:generalization} by giving concrete generalization bounds of SGD and GD estimators in  kernel regression.

\subsection{Proof of Theorem \ref{ch2:thm:quad_loss}}\label{ch2:app:G1}
Denote $\bm{v} = \bm{w} - \bm{w}^*$, rewrite the objective function as
\begin{align*}
    \min_{\bm{v}} &\quad \|\bm{v}\|_2^2\\
    s.t. & \quad \|A \bm{v}\|_2^2  = a
\end{align*}
 Denote the eigen decomposition of $A^T A = Q\Gamma Q^{T}$ where $Q = [\bm{q}_1,\ldots,\bm{q}_n]$, $Q Q^T = Q^TQ = I$ and $\Gamma = diag([\rho_1,\ldots,\rho_n])$, $\rho_1\geq\ldots\geq\rho_n \geq 0$. 
Then 
$$\|A\bm{v} \|_2^2 = \sum_{i=1}^n \rho_i (\bm{q_i}^T \bm{v})^2$$
So 
$$\|A\bm{v} \|_2^2\leq \rho_1[\sum_{i=1}^n (\bm{q_i}^T \bm{v})^2] = \rho_1\bm{v}^TQ Q^T \bm{v} = \rho_1\|\bm{v}\|_2^2$$
The equality is achieved when $\bm{v}$ is in the direction of $\bm{q}_1$, and $\rho_1 = \|A^T A\|_2$. Take $L(\bm{w}) = \|A\bm{v}\|_2^2 = a$ then the theorem holds. 

\subsection{Proof of Theorem \ref{ch2:thm:generalization}}\label{ch2:app:G2}
\textbf{Calculate $\Delta_a^*$:}
Denote $f^* = \hat{\bm{\alpha}}^T K(\cdot,X) + \Tilde{f}$, then we have for a $f\in\bH_s$, $f = \bm{\alpha}^T K(\cdot,X)$, let $\bm{b} = \hat{\bm{\alpha}} - \bm{\alpha}$, then
\begin{align*}
    &\|f^* - f\|^2_{\bH}\\
    =&\|\bm{b}^T K(\cdot,X) + \Tilde{f}\|_{\bH}^2\\
    =&\|\bm{b}^T K(\cdot,X)\|_{\bH}^2 + \|\Tilde{f}\|_{\bH}^2 + 2\langle \bm{b}^T K(\cdot,X),\Tilde{f} \rangle_{\bH}
\end{align*}
where we can check
\begin{align*}
    \langle \bm{b}^T K(\cdot,X),\Tilde{f} \rangle_{\bH} &= \sum_{i=1}b_i \langle K(\cdot,\bm{x}_i),f^* - \hat{\bm{\alpha}}^T K(\cdot,X) \rangle_{\bH}\\
    &= \sum_{i=1}b_i [\langle K(\cdot,\bm{x}_i),f^*\rangle_{\bH} - \langle K(\cdot,\bm{x}_i),\hat{\bm{\alpha}}^T K(\cdot,X) \rangle_{\bH}]\\
   &=\sum_{i=1}b_i [f^*(\bm{x}_i) - \hat{\bm{\alpha}}^T K(\bm{x}_i,X)] (\text{By reproducing property}) \\
    &= \sum_{i=1}b_i [y_i - y_i] = 0
\end{align*}
And we further calculate that
\begin{align*}
    \|\bm{b}^T K(\cdot,X)\|_{\bH}^2 &=\langle \sum_{i=1}^n b_i K(\cdot,\bm{x}_i) ,\sum_{j=1}^n b_j K(\cdot,\bm{x}_j) \rangle_{\bH}\\
    &=\sum_{i,j = 1}^n b_i b_j \langle K(\cdot,\bm{x}_i), K(\cdot,\bm{x}_j)\rangle\\
    &= \sum_{i,j = 1}^n b_i b_j K(\bm{x}_i,\bm{x}_j) = \bm{b}^T K \bm{b}
\end{align*}
That is,
\begin{align*}
    L_D(f) = \bm{b}^T K \bm{b} + \|\Tilde{f}\|_{\bH}^2
\end{align*}
and
\begin{align*}
    \inf_{f\in\bH_s} L_D(f) = \|\Tilde{f}\|_{\bH}^2
\end{align*}
It follows that 
\begin{align*}
    \Delta(f) = L_D(f) -  \inf_{f\in\bH_s} L_D(f) = \bm{b}^T K \bm{b}
\end{align*}
We claim that
\begin{align*}
    \Delta_a^* = \min_{\bm{b}:\frac{1}{2n}\|K\bm{b} \|_2^2 = a } \bm{b}^T K \bm{b} = \frac{1}{\gamma_1} \|K\bm{b} \|_2^2 = 2na / \gamma_1
\end{align*}
 where the equality is obtained when $\bm{b}$ is in the direction of the largest eigenvector of $K$. To see this, we check $\|K\bm{b} \|_2^2 \leq \gamma_1\bm{b}^T K \bm{b}$. Recall the eigendecomposition of $K = G\Gamma G^{T}$ where $G = [\bm{g}_1,\ldots,\bm{g}_n]$ has orthogonal columns and $\Gamma = diag(\gamma_1,\ldots,\gamma_n)$. 
Then 
$$\|K\bm{b} \|_2^2 = \sum_{i=1}^n \gamma_i^2 (\bm{g_i}^T \bm{b})^2$$
and 
$$\bm{b}^T K\bm{b} = \sum_{i=1}^n \gamma_i (\bm{g_i}^T \bm{b})^2$$
So we have
$$\|K\bm{b} \|_2^2\leq \gamma_1[\sum_{i=1}^n \gamma_i (\bm{g_i}^T \bm{b})^2] = \gamma_1\bm{b}^T K\bm{b}$$
This finishes our claim on $\Delta_a^*$.\\
\\
\textbf{SGD output:}
By Theorem \ref{ch2:thm:informal1}, the SGD output has
$$(1-2\epsilon)\gamma_1 E[\|\bm{b}_{k_2}\|_2]\leq E[\|K \bm{b}_{k_2}\|_2]$$ 
Thus
\begin{align*}
   E[ \Delta^{1/2}(f^{SGD})] &=E\left[\sqrt{\sum_{i=1}^n \gamma_i (\bm{g_i}^T \bm{b}^{SGD})^2}\right]\\
    &\leq \sqrt{\gamma_1}E[(\sum_{i=1}^n (\bm{g_i}^T \bm{b}^{SGD})^2)^{1/2}]\\
    & = \sqrt{\gamma_1} E\|\bm{b}^{SGD}\|_2\\
    &\leq \sqrt{\gamma_1} E[\|K\bm{b}^{SGD} \|_2]/[(1 - 2\epsilon)\gamma_1]\\
    & = \sqrt{2na/\gamma_1} / (1 - 2\epsilon)\\
    &= \frac{1}{1 - 2\epsilon} (\Delta^*_a)^{1/2}\\
    & < (1 + 4\epsilon)(\Delta^*_a)^{1/2}
\end{align*}
last inequality by let $\epsilon < 1/4$.
\\
\\
\textbf{GD output:}
By Theorem \ref{ch2:thm:informal2}, the GD output has
$$\frac{\|K \bm{b}^{GD}\|^2_2}{\|\bm{b}^{GD}\|^2_2}\leq (1 + \epsilon')\gamma_n^2$$
Thus
\begin{align*}
    \Delta(f^{GD}) &=\sum_{i=1}^n \gamma_i (\bm{g_i}^T \bm{b}^{GD})^2\\
    &\geq \gamma_n\sum_{i=1}^n (\bm{g_i}^T \bm{b}^{GD})^2\\
    & = \gamma_n \|\bm{b}^{GD}\|_2^2\\
    &\geq \gamma_n\|K\bm{b}^{GD} \|_2^2/[(1 + \epsilon')\gamma_n^2]\\
    & = 2na / [(1 + \epsilon')\gamma_n]\\
    &= \frac{\gamma_1}{(1 + \epsilon')\gamma_n} \Delta^*_a\\
    & > \frac{\gamma_1}{\gamma_n} (1-\epsilon') \Delta^*_a\\
    &:= M \Delta_a^*
\end{align*}
where $M > 1$ by taking $\epsilon' < 1 - \gamma_n/\gamma_1$.

\section{Experiments}\label{ch2:app:H}
We list the implementation details of the experiments in Section \ref{ch2:sec:4} and include more experiment results. For better presenting, we split into two subsections: Subsection \ref{ch2:app:H1} includes the details of simulation; Subsection \ref{ch2:app:H2} is about the NN experiment on FashionMNIST, including the data description, network structure, and algorithm details, also there are more experiment results in Subsection \ref{ch2:app:H2} that are not listed in Section \ref{ch2:sec:4} due to page limit. 

\subsection{Simulation}\label{ch2:app:H1}
This subsection is corresponding to Figure \ref{ch2:fig:sim}. 

\textbf{Data Generation}. The training data is simulated as follows: Set $n = 10, p=100$, simulate $X_{n\times p}$ where elements of $X$ are i.i.d. $N(0,1)$; denote $i$th row of $X$ as $\bm{x}_i$, normalize $\bm{x}_i$ such that it has squared $\ell_2$ norm in $[.49,1]$; set $y_i = \sum_{j = 1}^p\sin(x_{i,j})+ \epsilon_i$ where $\epsilon_i \stackrel{i.i.d.}{\sim} N(0,.01)$. The testing data is simulated in exactly the same way, except that we only simulate $n=5$ testing data.

\textbf{Kernel Function}. We set the kernel function to be the polynomial kernel 
$$K(\bm{x}_1,\bm{x}_2) = (\langle \bm{x}_1,\bm{x}_2\rangle + .01)^2$$
\textbf{SGD and GD implementation}. 
Both SGD and GD is run for small and moderate step sizes. The moderate step size scheme for SGD is: $\eta_1 = .1$ for the first $50$ steps, and $\eta_2 = .01$ for the next $1000$ steps; for GD is: $\eta_1 = .5$ for the first $50$ steps, and $\eta_2 = .05$ for the next $1000$ steps. The small step size scheme for SGD is $\eta = 0.01$ for $1050$ steps; for GD is $\eta = 0.05$ for $1050$ steps. Note that the step size for SGD is a fraction of that for GD, this matches our Theorem \ref{ch2:thm:informal1} and \ref{ch2:thm:informal2} that the step size of GD is of magnitude $n/2$ times that of SGD.

\subsection{Neural Network on FashionMNIST}\label{ch2:app:H2}
This subsection is corresponding to Figure \ref{ch2:fig:MNIST}.

\textbf{Dataset}. The original FashionMNIST consist of $60,000$ training data and $10,000$ testing data. We randomly sample $1,500$ data from original training data for training, and use all $10,000$ original testing data for testing. All data entries are normalized to $[0,1]$. 

\textbf{Network structure}. we use a $6$-layer ResNet-like \cite{He_2016_CVPR} Neural Network, and the structure is as follows
\begin{align*}
    \text { Input } &\Rightarrow 7\times 7\text { Conv } \Rightarrow \text{BatchNorm}  \Rightarrow \text{ReLU} \Rightarrow 3\times 3\text{ MaxPool }\\
    &\Rightarrow \text{ResBlock1} \Rightarrow \text{ResBlock2} \Rightarrow \text{Global AvePool} \Rightarrow \text{FC} \Rightarrow \text{output}
\end{align*}
The Residual Blocks are as Figure $7.6.3$ in \cite{zhang2021dive} (without $1\times 1$ convolution). Note that each residual block contains two $3\times 3$ convolutional layers, thus total number of layers is as stated. 

\textbf{Algorithm}. We minimize the Cross Entropy Loss objective $L(\mathbf{w}) = \frac{1}{n}\sum_{i=1}^n l_i(\bf{w})$, where $l_i(\bf{w})$ is the loss function at $i$th sample. One step SGD is as follows:
\begin{align*}
    \bm{w}_{t + 1} = \bm{w}_t - \eta_t\frac{1}{|I|}\sum_{i\in I} \nabla l_i(\bm{w}_t)
\end{align*}
where $I$ is a randomly sampled subset of $\{1,\ldots,n\}$ (uniform random sample without replacement). We choose the batch size $|I|$ to be $25$.

One step GD is as follows:
\begin{align*}
    \bm{w}_{t + 1} = \bm{w}_t - \eta_t \nabla L(\bm{w}_t)
\end{align*}
Both SGD and GD are run using two settings of step sizes $\eta_t$. The moderate step size setting is as follows:
\begin{align*}
    \eta_t = \begin{cases} 0.2, & t = 1,\ldots,5000\\ 0.02, & t = 5001,\ldots,20000\end{cases}
\end{align*}
And the small step size setting has $\eta_t = 0.02, t = 1,\ldots,20000$.

\textbf{Comparison of convergence direction}. Since the loss surface is nonconvex and the Hessian varies, we follow \cite{wu2021direction} to measure the convergence direction by Relative Rayleigh Quotient(RRQ), which normalizes the Rayleigh Quotient by the maximum eigenvalue of the Hessian as follows
$$RRQ(\bm{w}) = \frac{\frac{\nabla L(\bm{w})^{\top}}{\|\nabla L(\bm{w})\|_{2}} \cdot \nabla^{2} L(\bm{w}) \cdot \frac{\nabla L(\bm{w})}{\|\nabla L(\bm{w})\|_{2}}}{\left\|\nabla^{2} L(\bm{w})\right\|_{2}}$$
where $L(w)$ is the loss function on the whole training set. A high RRQ indicates that the convergence direction of $w$ is close to a larger eigenvector of the Hessian. 

\textbf{Comparison of test accuracy}. We set $20$ different random seeds. For each random seed, we run: SGD with moderate step size, GD with moderate step size, SGD with small step size, GD with small step size. For each algorithm, we evaluate its test accuracy once every $500$ steps, and use the average of the last $5$ values as its test accuracy. We list the test accuracy in Table \ref{ch2:tab:acc}.
\begin{table}[H]\small
    \centering
    \begin{tabular}{c||cccccccccc}\hline
    Experiment & \#1 & \#2 & \#3 & \#4 & \#5 & \#6 & \#7 & \#8 & \#9 & \#10\\\hline
        SGD + moderate LR & 83.69& 82.95& 82.37& 82.05& 83.4 & 83.16& 83.72& 83.29& 83.28& 83.23 \\\hline
        GD + moderate LR & 80.93& 80.79& 80.79& 81.80 & 81.68& 81.12& 82.43& 81.63& 80.94& 81.54 \\\hline
        SGD + small LR &82.00  & 81.72& 81.34& 81.92& 82.63& 82.67& 82.99& 82.22& 80.78& 82.10 \\\hline
        GD + small LR & 78.88& 78.71& 78.49& 79.3 & 80.45& 79.78& 80.15& 79.66& 79.54& 79.68\\\hline\hline
        Experiment & \#11 & \#12 & \#13 & \#14 & \#15 & \#16 & \#17 & \#18 & \#19 & \#20\\\hline
        SGD + moderate LR & 83.12& 82.92& 83.58& 83.47& 82.35& 83.57& 83.59& 82.43& 84.21& 83.12 \\\hline
        GD + moderate LR & 82.41& 81.56& 81.42& 80.86& 81.23& 81.25& 81.82& 80.42& 81.80 & 82.12\\\hline
        SGD + small LR &82.62& 80.66& 82.01& 81.01& 81.32& 81.66& 82.12& 80.78& 82.28& 82.48\\\hline
        GD + small LR &80.08& 78.29& 79.93& 79.36& 78.9 & 79.69& 80.2 & 79.62& 79.98& 79.69\\\hline
    \end{tabular}
    \caption{Test Accuracy}
    \label{ch2:tab:acc}
\end{table}
We also use one-side Wilcoxon signed-rank test to check if the test accuracy of different algorithm are significantly different, the result is in Table \ref{ch2:tab:t_test}. All the p-values are significant at $0.01$ level, so we reject the null hypothesis and conclude that the SGD with moderate step size has test accuracy significantly higher than all other algorithms.
\begin{table}[H]
    \centering
    \begin{tabular}{|c|c|}\hline
       Null Hypothesis on Test Accuracy  & p-value \\\hline
       SGD + moderate LR $\leq$ GD + moderate LR & $9.54 \times 10^{-7}$ \\\hline
       SGD + moderate LR $\leq$ SGD + small LR & $9.54 \times 10^{-7}$\\\hline
       SGD + moderate LR $\leq$ GD + small LR & $9.54 \times 10^{-7}$\\\hline
    \end{tabular}
    \caption{Wilcoxon signed-rank test result}
    \label{ch2:tab:t_test}
\end{table}
\textbf{Additional experiments}.  We conduct more experiments using different step sizes. The initial step size is taken in $\{1,0.5,0.2,0.1,0.02,0.01,0.005,0.001\}$, and the step size is divided by a factor of $10$ after $5000$ steps. The test accuracy is in Figure \ref{ch2:fig:acc_more}, where we see that SGD with step size $0.2$ has the best test accuracy, and GD with step size $0.5$ performs better than GD with any other step sizes, but is still worse than the best SGD. 
\begin{figure}
    \centering
    \includegraphics[scale = 0.25]{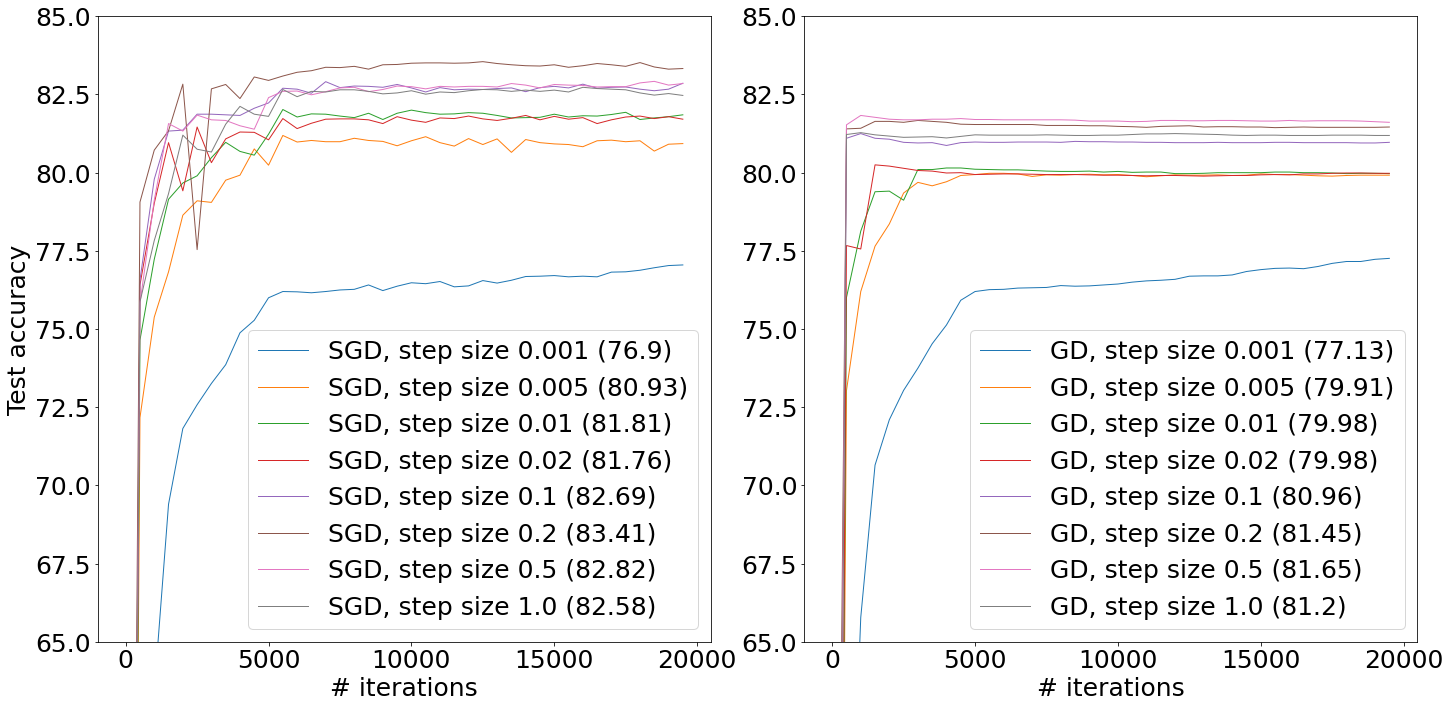}
    \caption{Use more step sizes in SGD/GD. The test accuracy is evaluated once every $500$ iterations, and inside the bracket is the average of the last $5$ test accuracy values.}
    \label{ch2:fig:acc_more}
\end{figure}

\end{document}